\newtheorem{Def}[theorem]{Definition}
\newtheorem{observation}[theorem]{Observation}
\newcommand{\R}{\ensuremath{\mathbb{R}}}
\newcommand{\alert}[1]{\ifbool{havealerts}{\marginnote{\textcolor{red}{#1}}}{}}
\renewcommand{\epsilon}[0]{\varepsilon}
\newcommand{\shrinkspace}[1]{\ifbool{fullversion}{}{\vspace{#1}}}
\newcommand{\beq}{\begin{eqnarray}}
\newcommand{\eeq}{\end{eqnarray}}
\def\eps{{\epsilon}}
\renewcommand{\(}{\left(}
\renewcommand{\)}{\right)}
\newcommand{\D}[1]{{\sc FRW$_{#1, \delta}$}}
\renewcommand{\O}[1]{{\sc Opt-FRW$_{#1, \delta}$}}
\newcommand{\C}[1]{{\sc AFRW$_{#1, \delta}$}}
\newcommand{\A}[1]{{\sc AOpt-FRW$_{#1, \delta}$}}
\newcommand{\E}{\ensuremath{\mathbb{E}}}
\numberwithin{equation}{section}
 \newtheorem{Claim}[theorem]{Claim}
\newenvironment{Proof}{\medbreak \noindent {\bf
Proof:~}}{\unskip\nobreak\hfill\hskip 2em \bull \par\medbreak}
\def\bull{\vrule height .9ex width .8ex depth -.1ex }
\begin{document}

\title{Fractal structures in Adversarial Prediction}
\titlerunning{Fractal structures in Adversarial Prediction}  

\author{Rina Panigrahy\inst{1}
\and
Preyas Popat\inst{2}
}

\institute{
Microsoft Corp. \\ Mountain View, CA \\ rina@microsoft.com
\and
New York University and University of Chicago \\ New York, NY and Chicago, IL \\ popat@cs.nyu.edu
}

\maketitle

\begin{abstract}
Fractals are self-similar recursive structures that
 have been used in modeling several real world processes. In this work we study how ``fractal-like'' processes arise in a prediction game where an adversary is 
generating a sequence of bits and an algorithm is trying to predict them. We will see that under a certain formalization of the predictive payoff for the algorithm 
it is most optimal for the adversary to produce a fractal-like sequence to minimize the algorithm's ability to predict.   
Indeed it has been suggested before that financial markets exhibit a fractal-like behavior \cite{frost1998elliott,mandelbrot2005inescapable}. 
We prove that a fractal-like distribution arises naturally out of an optimization from the adversary's perspective.

In addition, we give optimal trade-offs between predictability and expected deviation (i.e. sum of bits) for our formalization of predictive payoff.  This result is motivated by the observation
that several time series data exhibit higher deviations than expected for a completely random walk.
\end{abstract}





\section{Introduction}

Consider an adversary who is producing a sequence of bits (each bit is $+1$ or $-1$) and an algorithm having seen a certain number of bits is interested in predicting the next $x$ bits. 
Say the algorithm gets a payoff of $1$ for every bit that it predicts correctly and $-1$ for every bit where it is wrong.  
This is like an idealized stock market where each day the price changes by $+1$ or $-1$ percent and the algorithm is required to make a bet on the daily direction. 
We ask what is the most adversarial distribution on sequence of bits so as to minimize the algorithm's payoff. Clearly the uniform distribution where every 
bit is chosen independently and uniformly at random is the most adversarial, since the expected payoff of any algorithm is always exactly $0$.  

Given a sequence $s$ of bits,
let $h(s)$ be the sum of the bits in $s$ i.e. the \emph{height} of the sequence when plotted cumulatively.  We will refer to the magnitude of height as deviation.
For $s \in \{-1, 1\}^T$ chosen uniformly at random the typical deviation $s$ is $\Theta(\sqrt{T})$. 

The question we study here is: what is the most adversarial distribution on sequences if the distribution is required to be heavy-tailed, 
say the typical height should be $k \sqrt T$ where $k > 1$. Indeed it has been observed in several studies that the distribution of financial time 
series is heavy-tailed \cite{bradley2003financial,rachev2005fat}.   A natural heavy-tailed distribution is to pick a random string conditioned on its height being at least $k \sqrt T$. 
This is essentially the highest entropy distribution with the property that the typical height is around $k \sqrt T$. 
However the highest entropy distribution is not the least predictable. Indeed for large $k$, it tends to rise/drop rather linearly to its final height. 
Thus by observing the initial segment of bits, the algorithm can easily infer the direction of the remaining bits to get a large payoff. 

One distribution that has been suggested for financial markets is the Fractional Brownian Motion (FBM) \cite{mandelbrot1968fractional,nualart2006fractional} which is a generalization 
of the Brownian motion.  For our purposes, the Brownian motion can be thought of as a continuous variant of the uniform distribution on bits.  FBM is characterized by a single 
parameter $H$ which is called the Hurst parameter, and the typical height achieved by sequences drawn from FBM$(H)$ is around $T^H$.  For $H > 1/2$, the increments of FBM
are positively correlated while the case $H = 1/2$ corresponds to Brownian motion.

To make our question precise we introduce a measure of unpredictability for a distribution which is motivated by the notion that the expected payoff of an algorithm on an interval $I$
having observed the previous bits should be small compared to the standard deviation of height in $I$.  Intuitively, we are enforcing a low signal-to-noise ratio.

\begin{Def} \label{unpredictable}
Let $D$ be a distribution which produces bits in an online fashion and $s$ be the sequence of bits that have been produced immediately preceding an interval $I$.
Let $\E[A_s(I)]$ denote the expected payoff of an algorithm $A$ on interval $I$ 
(where the bits in $I$ are produced according to $D$ conditioned on having produced $s$ immediately before $I$).
Note that $A$ must fix its prediction for $I$ based solely on $s$ and before looking at any bits within $I$.

We say that $D$ is $\delta$-unpredictable if for all $A$, $s$ and $I$, $\E[A_s(I)] \leq \delta \cdot \sqrt{|I|}$.
\end{Def}

\begin{figure}
\includegraphics[width = 2.4in]{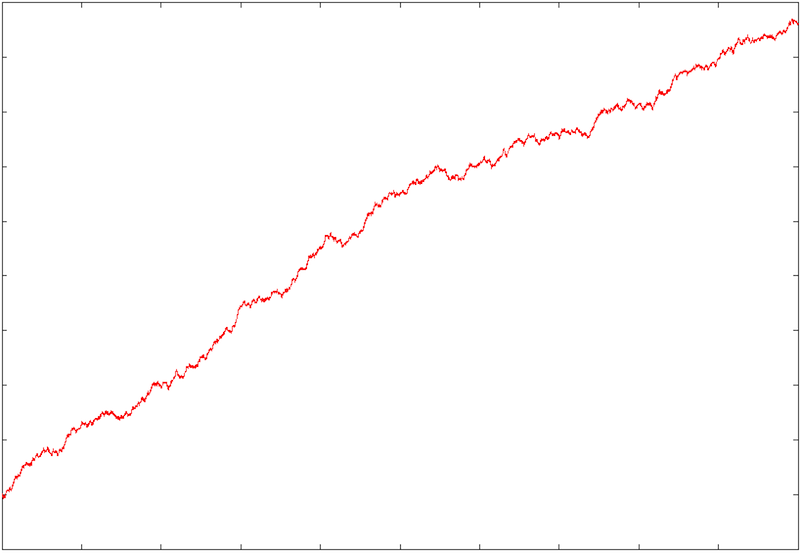} 
\includegraphics[width = 2.4in]{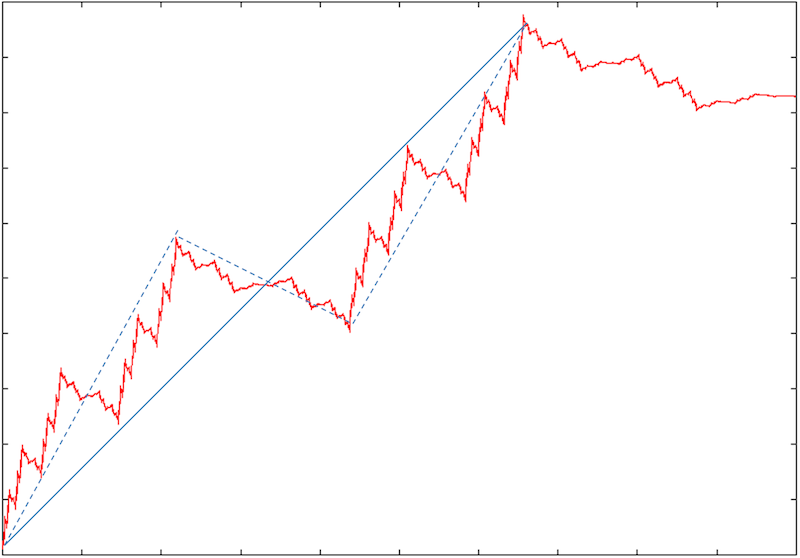}
\caption{Growth charts for two different types of adversarial sequences. The first is the cumulative plot of a random i.i.d sequence with a constant upward bias. The second is an $\alpha$-inverting sequence as in Definition \ref{inversion-deterministic}.  Note that the latter plots seems to change direction more significantly than the former.}
\end{figure}

For example an algorithm may notice a high density of $+1$'s and may decide to predict $+1$
for the next few bits (this would correspond to a ``buying'' a stock) for the next $x$ bits.  
Note that $\sqrt{x}$ is the standard deviation in the payoff of an algorithm for the uniform distribution on $x$ bit sequences and thus we are asking that 
the payoff of the algorithm for a $\delta$-unpredictable distribution is negligible compared to this standard deviation (we will in fact construct distributions where the
standard deviation is much higher than $\sqrt{x}$).   Roughly, this is equivalent to saying that the signal to noise ratio in any interval is negligible.

We ask what is the maximum deviation that can be achieved by a $\delta$-unpredictable distribution $D$.  We will look at
maximizing measures such as median deviation or mean deviation: $\E_{s \sim D} [|h(s)|]$ (we will show that our claims hold with respect to any of these measures). 

We show that there is a $\delta$-unpredictable distribution which achieves a deviation of $\sqrt{T} (1 + \Omega(\delta \log T))$.  Thus, the deviation 
can be $\omega(\sqrt{T})$ for $\delta = o(1)$.
  The distribution we construct is a variant of a 
discretization of FBM.  We also show that the highest deviation that can be achieved by a $\delta$-unpredictable distribution is $\sqrt{T} (1 + O(\delta \log T))$.
In addition, we construct a distribution which is a simple discretization of FBM and show that the deviation achieved 
by this distribution is $T^{1/2 + \Theta(\delta)}$.  Though this distribution is not
$\delta$-unpredictable, it satisfies a similar but weaker property.

A nice property of $\delta$-unpredictable distributions is that they are ``fractal-like'' in some sense.  We use the terms fractal-like and fractal somewhat interchangeably. 
Normally fractal is considered to be a self-similar recursive 
structure in Euclidean space (usually with non-integer dimension to 
exclude trivial patterns). Traditionally this has not been applied to 
bit sequences. Therefore we refrain from calling such sequences 
strictly a fractal.
To formalize our ``fractal-like" property, we first define a notion of inversion for a 
deterministic sequence.  The property essentially says that if in any interval there is a huge rise, then there must be a sub-interval where there must be a proportionally 
big fall and vice versa.

\begin{Def}[$\alpha$-Inversion] \label{inversion-deterministic}
Given a sequence $s \in \{-1, 1\}^T$, it is said to be $\alpha$-inverting if
for every interval $X$ within $[1, T]$ (of at least some constant length) there is a sub interval $Y$ such that  $h(s_X)$ and $h(s_Y)$ are of 
opposite sign and $|h(s_Y)|/|h(s_X)| \geq \alpha$.  Here by $s_I$ we mean the sequence $s$ restricted to interval $I$.

We refer to the largest feasible $\alpha$ as the inversion ratio of $s$.
\end{Def}

Observe that an $\alpha$-inverting sequence resembles a fractal in a certain sense.  To see this, note that in a sequence $s$ such that $h(s) > 0$,
if we locate the biggest contiguous rise, it may be divided into three parts $s_1 s_2 s_3$ where $s_2$ has a 
net downward slope and $s_1, s_3$ have a positive slope each. But one can recurse and divide each of the three substrings further into three parts each and thus the sequence has a recursive, self-similar
structure.

We show that any $\delta$-unpredictable distribution is $\alpha$-inverting in a certain sense.  Since we are dealing with a distribution rather than a deterministic sequence
we need an appropriate generalization of Definition \ref{inversion-deterministic} which is stated in Section \ref{results}.  It will be clear from the definition that
the highest entropy sequence we discussed earlier has a very small inversion ratio compared to $\delta$-unpredictable distributions.

\subsection{Main results} \label{results}

In this section we describe our main results in more detail.   As we mentioned earlier, the adversarial distributions we construct are closely related to and inspired from
FBM.

FBM with parameter $H$ is the unique continuous time, Gaussian process $B_H(t)$ which satisfies $B(0) = 0$, $\E[B_H(t)] = 0$ for all $t$ and has covariance function:

$$ \E[B_H(t) B_H(s)] = \frac{1}{2} (|t|^{2H} + |s|^{2H} - |t - s|^{2H} ) $$

The process $B_H$ is translation invariant and is
self-similar in the sense that 
$\{ B_H(at) : t \in \R \}$ is identical in distribution to $\{ a^H B_H(t) : t \in \R \}$ for all $a > 0$.  Furthermore, $B_H(t)$ is normally
distributed with variance $t^H$. Thus any interval of length $t$ has deviation about $t^H$.  The case $H = 0.5$ corresponds to the standard Brownian motion. 

The analysis of the FBM usually requires an understanding of integrated Wiener processes.  
The first adversarial distribution we construct is a discrete variant of the FBM 
that produces bits instead of real numbers.  We denote this distribution as {\sc Fractal Random Walk (FRW)}.

The sequence is constructed recursively in lengths that are powers of $2$. To produce a sequence of length $2n$, we concatenate two recursively constructed sequences of 
length $n$ each, and change the height of the
second sequence by a factor proportional to the height $h$ of the first sequence. This is done by flipping approximately $\delta h$ $(-1)$'s to 
$+1$'s if $h>0$ (and $+1$'s to $-1$ otherwise.)  A formal description of the construction appears in Section \ref{construct}.

While this lacks the translation invariance and the exact self-similarity properties of the FBM, it still has the property that any interval of size 
$t$ has deviation $t^{1/2 + \Theta(\delta)}$. 

To see this, note that if $h_1, h_2$ denote the heights of the two sequences that are concatenated to produce the sequence of length $2n$ after 
altering the second string then $E[h_1 h_2] = 2 \delta \E[h_1^2] = 2 \delta \E[H(n)^2]$ where $H(n)$ is a random
variable that denotes the height of a random sequence of length $n$ drawn from {\sc FRW}. So $\E[H(2n)^2] = E[(h_1 + h_2)^2] = E[h_1^2] + E[h_2^2] + 2E[h_1 h_2] = (2+\delta) E[H(n)^2]$. 
The recurrence works out to a root mean square deviation
$(\sqrt{E[H(n)^2]})$ of  about $n^{1/2 + \Theta(\delta)}$.  

This informal description skips over technical issues such as discretization.
Furthermore, extending this argument to show that the high deviation is achieved with constant probability is more complicated and is done in Theorem \ref{deviation-DFBM}.
Note that a constant probability bound for achieving a particular deviation is stronger than showing a high deviation in expectation (using Markov's inequality).
We note that this distribution is not $\delta$-unpredictable but satisfies a weaker property (Theorem \ref{unpredictable-DFBM}).   For completeness, we show that the FBM 
(continuous version) with $H = 1/2 + \delta$ is also not $\delta$-unpredictable in the strict sense (Claim \ref{FBM-not-delta}).
 We also note that
the highest entropy distribution is very poor in terms of $\delta$-unpredictability (Claim \ref{entropy-predictable}).

We construct another distribution, which we call {\sc Optimal Fractal Random Walk (Opt-FRW)} which has optimal trade-offs between deviation and predictability. 
The distribution {\sc Opt-FRW} is a simple but important twist on the above process where instead
of flipping $\delta \cdot h$ bits, we flip $\delta \cdot \sqrt{n}$ bits in the direction of $h$.

\begin{theorem} (Theorems \ref{unpredictable-OPT}, \ref{deviation-OPT} and \ref{deviation-upper})
The distribution {\sc Opt-FRW} is $O(\delta)$-unpredictable and achieves a deviation of  $\sqrt{T} (1 + \Omega(\delta \log T))$ with constant probability.
Further, no $\delta$-unpredictable distribution can achieve an expected deviation higher than $\sqrt{T} (1 + O(\delta \log T))$.
\end{theorem}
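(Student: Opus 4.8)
The plan is to prove the three assertions — $O(\delta)$-unpredictability, the deviation lower bound, and the deviation upper bound — separately, all resting on one elementary observation: since an algorithm $A$ must commit to a prediction $a_i\in\{-1,1\}$ for each $i\in I$ as a function of $s$ alone, $\E[A_s(I)]=\E[\sum_{i\in I}a_ib_i\mid s]=\sum_{i\in I}a_i\E[b_i\mid s]$, hence $\sup_A\E[A_s(I)]=\sum_{i\in I}|\E[b_i\mid s]|$, attained by $a_i=\mathrm{sign}(\E[b_i\mid s])$. Thus a distribution $D$ is $\delta$-unpredictable iff $\sum_{i\in I}|\E[b_i\mid s']|\le\delta\sqrt{|I|}$ for every interval $I$ and every prefix $s'$ of the bits preceding $I$; using the constant prediction $a_i\equiv\mathrm{sign}(\E[h(s_I)\mid s'])$ this implies the weaker ``bounded drift'' form $|\E[h(s_I)\mid s']|\le\delta\sqrt{|I|}$.

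\textbf{(1) Opt-FRW is $O(\delta)$-unpredictable.} Fix $s,I$ and expand each $b_i$ ($i\in I$) as its seed value plus the increments $\Delta_\ell\in\{-2,0,2\}$ received at levels $\ell=2,4,\dots,T$: at level $\ell$ position $i$ is flipped, toward $\mathrm{sign}$ of the height of the left half $B_L$ of its size-$\ell$ block $B$, and only if $i$ lies in the right half $B_R$. The seed contributes $0$ to $\E[b_i\mid s]$, and $\E[\Delta_\ell\mid s]=0$ unless the sign of $h(B_L)$ is determined by $s$, which forces $B_L$ to lie entirely before $I$; when it is determined, $|\E[\Delta_\ell\mid s]|=O(\delta/\sqrt\ell)$ since only $\Theta(\delta\sqrt\ell)$ of the $\Theta(\ell)$ bits of the relevant sign in $B_R$ are flipped. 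For a fixed $\ell$ the size-$\ell$ blocks with $B_R\cap I\neq\emptyset$ and $B_L$ before $I$ must hug the left endpoint of $I$, so there are $O(1)$ of them and they expose $\min(O(\ell),|I|)$ positions of $I$. Hence $\sum_{i\in I}|\E[b_i\mid s]|\le\sum_\ell O(\delta/\sqrt\ell)\min(O(\ell),|I|)=O(\delta)\left(\sum_{\ell\le|I|}\sqrt\ell+|I|\sum_{\ell>|I|}\ell^{-1/2}\right)=O(\delta\sqrt{|I|})$, both sums geometric. The $O(1)$ boundary blocks per level and the discretization (only one sign is flippable; rounding of $\delta\sqrt\ell$, which suppresses flips on blocks of size $o(1/\delta^2)$) are routine and affect only lower-order terms.

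\textbf{(2) Deviation $\sqrt T(1+\Omega(\delta\log T))$ with constant probability.} Let $H(n)$ be the height of a length-$n$ Opt-FRW sample, so $H(2n)=h_1+h_2+2\delta\sqrt n\,\mathrm{sign}(h_1)+(\text{lower order})$ with $h_1,h_2$ i.i.d.\ copies of $H(n)$. I would run a joint induction over the $\log_2T$ levels establishing (i) $\E[H(n)^2]=n(1+\Theta(\delta\log n))$, (ii) $\E[H(n)^4]\le C'(\E[H(n)^2])^2$ for an absolute $C'$, and (iii) $\E|H(n)|=\Theta(\sqrt n)$. Expanding the square gives $\E[H(2n)^2]=2\E[H(n)^2]+4\delta\sqrt n\,\E|H(n)|+O(\delta^2n)$, which by (iii) equals $2\E[H(n)^2]+\Theta(\delta n)$; unrolling yields (i). For (ii), use $|H(2n)|\le|h_1+h_2|+2\delta\sqrt n$, expand the fourth power, bound $\delta\sqrt n\le\delta\sqrt{\E[H(n)^2]}$, and apply the inductive (ii): $C'=4$ survives for $\delta$ below an absolute constant. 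Since (ii) forces $\E[(h_1+h_2)^4]=\Theta((\E[H(n)^2])^2)$, Paley--Zygmund on $(h_1+h_2)^2$ gives $\E|h_1+h_2|=\Theta(\sqrt{\E[H(n)^2]})=\Theta(\sqrt n)$, which is (iii), closing the induction. Finally Paley--Zygmund on $H(T)^2$ gives $\Pr[|H(T)|\ge c\sqrt{\E[H(T)^2]}]\ge 1/(4C')=\Omega(1)$ for an absolute $c>0$, i.e.\ a deviation of order $\sqrt T(1+\Omega(\delta\log T))$ with constant probability. \emph{I expect this last point to be the main obstacle}: Paley--Zygmund only beats the standard deviation by a constant factor, so to get the leading constant exactly $1$ (rather than $c\sqrt T$ with some absolute $c<1$) one needs genuine anti-concentration — e.g.\ pushing the moment induction to all orders and invoking a central-limit statement for this recursive process, so that $H(T)/\sqrt{\E[H(T)^2]}$ is approximately standard normal and $\Pr[|H(T)|\ge(1-o(1))\sqrt{\E[H(T)^2]}]=\Omega(1)$; the same anti-concentration transfers the bound to the median and mean deviation.

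\textbf{(3) Matching upper bound.} Let $D$ be any $\delta$-unpredictable distribution on $T$ bits and set $M(n)=\sup\E_D[h(s_{[1,n]})^2]$ over all $\delta$-unpredictable $D$ on $n$ bits. Split $[1,2n]$ into halves and write $X=h(s_{[1,n]})$, $Y=h(s_{[n+1,2n]})$. Then $\E[XY]=\E[X\,\E[Y\mid b_{[1,n]}]]$, and the bounded-drift form applied to $I=[n+1,2n]$ gives $|\E[Y\mid b_{[1,n]}]|\le\delta\sqrt n$ pointwise, so $|\E[XY]|\le\delta\sqrt n\,\E|X|\le\delta\sqrt n\,\sqrt{\E[X^2]}$. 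The marginal of $b_{[1,n]}$ is $\delta$-unpredictable (it is an initial segment), and so is the marginal of $b_{[n+1,2n]}$ (for an interval inside it, averaging $\E[b_i\mid b_{<I}]$ over the first half and using Jensen inside the sum keeps the drift $\le\delta\sqrt{|I|}$); hence $\E[X^2],\E[Y^2]\le M(n)$. Combining, $M(2n)\le 2M(n)+2\delta\sqrt n\,\sqrt{M(n)}$. Substituting $M(n)=n\,\mu(n)$ gives $\mu(2n)\le\mu(n)+\delta\sqrt{\mu(n)}$, hence $\sqrt{\mu(2n)}\le\sqrt{\mu(n)}+\delta/2$, and with $\mu(1)=1$ we get $\sqrt{\mu(T)}\le 1+\tfrac\delta2\log_2T$. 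Therefore $\E_D|h(s)|\le\sqrt{\E_D[h(s)^2]}\le\sqrt{M(T)}\le\sqrt T\,(1+\tfrac\delta2\log_2T)=\sqrt T(1+O(\delta\log T))$, the claimed optimality.
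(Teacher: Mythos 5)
Your proposal is correct and follows essentially the same route as the paper: part (1) is the same scale-by-scale union bound over the $O(1)$ blocks per level whose left half lies in the prefix, part (2) is the same second/fourth-moment recurrence closed by a second-moment-method anti-concentration step (the paper uses a Cauchy--Schwarz bound $\E[|h_T|]\ge \E[h_T^2]^2/\E[h_T^4]^{3/4}$ plus its Lemma on $\E[X^2]=O(\E[X]^2)$ in place of your Paley--Zygmund), and part (3) is the identical $h_{2n}\le 2h_n+\delta\sqrt{2n}\sqrt{h_n}$ recurrence with the substitution $g_n^2=h_n/n$. The leading-constant caveat you flag in part (2) is a fair observation --- the paper's own argument also only yields $|h_T|\ge\Omega(\E[|h_T|])$ with constant probability rather than $(1-o(1))\E[|h_T|]$, so you have matched (and candidly annotated) the level of rigor of the published proof.
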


We now turn to formalizing the relationship between $\delta$-unpredictability and ``fractal-like'' property of a distribution.

For a deterministic sequence we show that an $\alpha$-inverting sequence with the highest deviation is a fractal.

\begin{theorem}\label{detinvertingisfractal} (Claim \ref{c})

Let $s$ be an $\alpha$-inverting sequence of length $t$ (Definition \ref{inversion-deterministic}), where $\alpha$ is bounded above by a constant.
  Then the highest deviation that can be achieved by $s$ for large $t$ is
$t^\theta$ where $\theta$ is the solution to the equation $1 =2((1+\alpha)/2)^{1/\theta} + \alpha^{1/\theta}$.  Furthermore, this deviation is actually achieved by an appropriately
designed fractal.

\end{theorem}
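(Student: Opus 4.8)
The plan is to prove matching upper and lower bounds on $F(\ell) := \max\{\,|h(s')| : s'\text{ an }\alpha\text{-inverting sequence of length }\ell\,\}$, noting that any restriction of an $\alpha$-inverting sequence is again $\alpha$-inverting, so $F$ also controls the largest sub-interval deviation. First I would establish the upper bound $F(\ell) = O(\ell^\theta)$ by strong induction on $\ell$. Given an $\alpha$-inverting $s$ of length $t$ with $h(s) = D \ge 0$ (the negative case is symmetric, and $D=0$ is trivial), apply Definition \ref{inversion-deterministic} to $X = [1,t]$ to get a sub-interval $Y$ with $h(s_Y) \le -\alpha D$; writing $[1,t] = X_1 \cup Y \cup X_3$ with outer parts $X_1,X_3$ (possibly empty) and lengths $\ell_1,\ell_Y,\ell_3$ summing to $t$, we have $h(s_{X_1}) + h(s_{X_3}) = D - h(s_Y) \ge (1+\alpha)D$, so $F(\ell_1) + F(\ell_3) \ge |h(s_{X_1})| + |h(s_{X_3})| \ge (1+\alpha)D$ and $F(\ell_Y) \ge \alpha D$. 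Substituting the inductive bound $F(\ell_i) \le C\ell_i^\theta$ and writing $x_i = \ell_i/t$, $d = D/(Ct^\theta)$, the target $D \le Ct^\theta$ reduces to the elementary claim that, over $x_1,x_3,y \ge 0$ with $x_1+x_3+y = 1$, the largest $d$ satisfying $x_1^\theta + x_3^\theta \ge (1+\alpha)d$ and $y^\theta \ge \alpha d$ is $d = 1$. By concavity of $x \mapsto x^\theta$ (so the optimum has $x_1 = x_3$) and a standard perturbation argument (both constraints tight at the optimum), this amounts to $d^{1/\theta}\big(2((1+\alpha)/2)^{1/\theta} + \alpha^{1/\theta}\big) = 1$, and the parenthesized factor is exactly $1$ by the defining equation for $\theta$. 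The base case of lengths below the constant threshold in Definition \ref{inversion-deterministic} is trivial once $C$ is large enough.

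For the matching lower bound I would exhibit the extremal fractal explicitly. Put $a = ((1+\alpha)/2)^{1/\theta}$ and $b = \alpha^{1/\theta}$, so that $2a + b = 1$, and define $s^{(\ell)}$ recursively: split $[1,\ell]$ into consecutive blocks of lengths $\approx a\ell$, $\approx b\ell$, $\approx a\ell$, place recursive copies $s^{(\approx a\ell)}$ on the two outer blocks and a sign-flipped copy $-s^{(\approx b\ell)}$ on the middle block, and bottom out at constant length. Since heights scale like (length)$^\theta$ down the recursion, $h(s^{(\ell)}) = (2a^\theta - b^\theta \pm o(1))\,\ell^\theta = (1 \pm o(1))\,\ell^\theta$ (using $2a^\theta - b^\theta = (1+\alpha) - \alpha = 1$), which matches the upper bound; discretization issues — parity of $h(s_I)$ versus $|I|$, rounding of block lengths — perturb only lower-order terms across the $O(\log\ell)$ levels.

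The step I expect to be genuinely delicate is verifying that this construction is $\alpha$-inverting for \emph{every} interval $X'$ of at least constant length, not merely for the canonical recursion blocks. My plan: any such $X'$ fully contains some recursion block $P$ with $|P| = \Omega(|X'|)$. If $h(s_{X'})$ and $h(s_P)$ have the same sign, the designated sign-flipped middle sub-block of $P$ serves as the inverting $Y'$ — its height has the right sign, and its ratio to $h(s_P)$ is $\ge \alpha$, hence its ratio to $h(s_{X'})$ is $\ge \alpha$ as well because $|h(s_P)| = \Omega(|X'|^\theta)$ while $|h(s_{X'})| = O(|X'|^\theta)$ by the already-proven upper bound; if the signs differ, $P$ itself serves as $Y'$. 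Getting these ratios to clear the exact threshold $\alpha$ \emph{uniformly} over all $X'$, in spite of the partially covered boundary blocks of $X'$, is precisely where one needs $\alpha$ bounded above by a constant: I would design the recursion so that each designated inverting sub-block overshoots $\alpha$ by a fixed constant factor, leaving enough slack to absorb the $O(1)$-relative-size boundary contributions. Everything else is routine bookkeeping.
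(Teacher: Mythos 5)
Your proposal is correct and follows essentially the same route as the paper: the same three-segment decomposition forced by the inversion property, the same convexity/concavity optimization matched against the defining equation $1 = 2((1+\alpha)/2)^{1/\theta} + \alpha^{1/\theta}$, and the same three-block recursive fractal (two rising copies around an inverted middle block) for the matching lower bound. The only cosmetic difference is that you run the recurrence on $F(\ell)$ (maximum height as a function of length) where the paper runs it on $t(h)$ (minimum length as a function of height), and you flag and handle the ``every interval is $\alpha$-inverting'' verification at about the same level of detail as the paper's own sketch.
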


For distributions $D$ over sequences we  define the following variant of the earlier inversion rule.

\begin{Def}[$(\alpha, q)$-Inversion]  \label{inverting-dist}

A distribution $D$ is said to be $(\alpha, q)$-inverting if for any interval $X$ of at least some constant length) with median deviation $\Delta = \Omega(\delta \sqrt{|X|})$, 
with probability at least $q$  there is a sub interval $Y$ such that  $h(s_X)$ and $h(s_Y)$ are of opposite sign and $|h(s_Y)| \geq \alpha \cdot \Delta$. 
Here by $s_I$ we mean the sequence $s$ restricted to interval $I$. 
This should hold even if one conditions on a given history of bits seen before the interval $X$.

\end{Def}

We note (see Observation \ref{observationrandomwalk}) that a uniform random sequence is $(\alpha, q)$ inverting for some constants $\alpha,q$.  
 Further the probability parameter $q$ can be made as high as  $1 - \eps$  by reducing the inversion ratio $\alpha$ to $\Theta(1/\log(1/\eps))$. 

The following theorem establishes that every $\delta$-unpredictable distribution must be fractal-like in the sense that it is $(\Omega(1), \Omega(1))$ inverting.

\begin{theorem} \label{fractal-like} (Theorems \ref{a} and \ref{b})
For $\delta$ small enough, any $\delta$-unpredictable distribution is also $(\alpha,q)$-inverting for some constants $\alpha, q$.  
Further by dropping the inversion ratio $\alpha$ to $\Theta(1/\log T)$ the probability $q$ can be made as high as $1-1/T^{\Omega(1)}$ for all intervals of length at 
least $\Omega(\log T)$. Thus the condition holds with high probability simultaneously for all such intervals. 
\end{theorem}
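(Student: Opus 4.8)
I would prove both assertions of the theorem (the constant $\alpha,q$ statement of Theorem~\ref{a} and the high-probability, $\alpha=\Theta(1/\log T)$ statement of Theorem~\ref{b}) by the same mechanism: a contrapositive ``unpredictability $\Rightarrow$ inversion'' argument at a single interval, followed by a boosting step. Fix a $\delta$-unpredictable $D$ with $\delta$ below a small absolute constant, and suppose it is not $(\alpha,q)$-inverting for constants $\alpha,q$ to be chosen at the end. Then Definition~\ref{inverting-dist} gives an interval $X$, a history $g$ of bits preceding $X$, and a value $\Delta$ (the median of $|h(s_X)|$ under $D\mid g$) above the threshold demanded there, such that with probability more than $1-q$ there is no subinterval $Y\subseteq X$ with $h(s_Y)$ of sign opposite to $h(s_X)$ and $|h(s_Y)|\ge\alpha\Delta$. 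Since $\{\,|h(s_X)|\ge\Delta\,\}$ has probability at least $\tfrac12$ given $g$, the two events ``$h(s_X)\ge\Delta$ and no opposite-sign drop of size $\alpha\Delta$'' and ``$h(s_X)\le-\Delta$ and no opposite-sign rise of size $\alpha\Delta$'' together have probability at least $\tfrac12-q$. On either of them the partial-sum walk of $s_X$ is \emph{monotone-like}: it travels (say) upward to height $\ge\Delta$ while never retreating more than $\alpha\Delta$ below its running maximum. So with constant probability, conditioned on $g$, the walk inside $X$ climbs by $\Omega(\Delta)$ almost without backtracking.

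The heart of the argument is to turn this structural statement into a history $s$ and interval $I$ with $\E[A_s(I)]>\delta\sqrt{|I|}$, contradicting Definition~\ref{unpredictable}. The predictor exposes a prefix of $X$ as part of its history and bets a constant string ($+1$ everywhere, on the upward event) on a dyadic sub-block of $X$, namely the block at the scale where the climb is actually occurring. The key inputs are that $\delta$-unpredictability already forces, for every prefix $s'$ ending before a later sub-block $W$, that $|\E[h(s_W)\mid s',g]|\le\delta\sqrt{|W|}$, and that the deviation upper bound (Theorem~\ref{deviation-upper}, inherited under conditioning) forces $\E[|h(s_W)|\mid s',g]=O(\sqrt{|W|})$. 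Combining these with the monotone-like structure on the constant-probability event, I would run an averaging argument over the $\Theta(\log|X|)$ dyadic scales $|X|/2,|X|/4,\dots$ to locate a fixed scale $2^k$ and a fixed observable prefix event at position $|X|/2^k$ under which the walk's gain over the next dyadic annulus is, with probability bounded below, at least a constant times $\sqrt{|X|/2^k}$ -- which for $\delta$ small beats $\delta\sqrt{|X|/2^k}$ and contradicts $\delta$-unpredictability. Choosing $\alpha$ and $\delta$ small, and the deviation threshold in Definition~\ref{inverting-dist} a large enough constant, makes every loss in this chain absorb into the gap between the $\Omega(\sqrt{\cdot})$ gain and the $\delta\sqrt{\cdot}$ budget.

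The step I expect to be the main obstacle is exactly this conversion: the monotone-like behavior holds only on a constant-probability event, whereas a violation of $\delta$-unpredictability needs a bias that survives \emph{unconditionally} on a fixed interval. So one must (i) identify a single dyadic scale carrying an $\Omega(1/\log|X|)$ fraction of the climb on the good event, (ii) condition on an observable prefix event of non-negligible probability under which the good-event behavior is not washed out, and (iii) bound the contribution of the complementary event using only the crude $O(\sqrt{\cdot})$ deviation estimate. Steps (i)--(iii) are what pin down how small $\alpha$ and $\delta$ must be, and they force a $\mathrm{poly}\log$ loss once one asks for a small inversion ratio; making all the bookkeeping close simultaneously is the delicate part. (That $\delta$-unpredictability is preserved under conditioning on a prefix is what then lets one rerun the whole argument inside sub-blocks.)

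Finally, for the high-probability statement I would boost the constants $(\alpha_0,q_0)$ obtained above. Fix an interval $X$ of length $\ell=\Omega(\log T)$ whose deviation exceeds the threshold, cut it into $m=\Theta(\log T)$ consecutive sub-blocks $B_1,\dots,B_m$, and apply the constant case to each $B_i$ together with the history consisting of $g$ and all earlier blocks -- legitimate by conditioning-invariance. For each $i$, conditioned on everything before $B_i$, with probability at least $q_0$ there is a subinterval of $B_i$ of sign opposite to $h(s_{B_i})$ and magnitude at least $\alpha_0$ times the conditional median deviation of $B_i$; a routine case analysis on whether $h(s_{B_i})$ agrees in sign with $h(s_X)$ (and, in the disagreeing case, on whether $|h(s_{B_i})|$ is itself large) extracts from any surviving block an inversion of $X$, and since a block has deviation $\Omega(\delta\sqrt{\ell/m})=\Omega(\Delta/\log T)$ in the worst case (using the deviation upper bound on $\Delta$), this inversion has ratio $\Omega(\alpha_0/\log T)=\Theta(1/\log T)$ after a careful accounting in which only $O(\log T)$ scales carry the bulk of the climb. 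The failures in the $m$ blocks form a chain whose conditional probabilities are each at most $1-q_0$, so the total failure probability is at most $(1-q_0)^{m}=T^{-\Omega(1)}$; since there are only $\mathrm{poly}(T)$ intervals of length $\ge\Omega(\log T)$, a union bound gives the $(\Theta(1/\log T),\,1-T^{-\Omega(1)})$-inversion property simultaneously for all of them.
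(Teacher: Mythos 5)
Your approach is genuinely different from the paper's, and I don't think the key step closes as sketched. The paper proves Theorem~\ref{fractal-like} in two moves: first (Theorem~\ref{a}) it shows that a $\delta$-unpredictable distribution is automatically \emph{adaptively} $O(\delta)$-unpredictable, meaning an algorithm may stop betting inside $I$ based on the bits it has seen and still cannot beat $O(\delta)\sqrt{l}$ where $l$ is its expected betting length; then (Theorem~\ref{b}) it exploits this via a stopping-time bet ``predict $+1$, stop when the running height first hits $-\alpha\theta$ or $2\alpha\theta$,'' whose payoff is trivially positive in expectation if the lower barrier is rarely reached. The high-probability version iterates that stopping rule $s=\Theta(\log T)$ times with shrunken barriers $\pm\Theta(\theta/s)$, chaining the conditional failure probabilities.

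Your plan tries to stay entirely within the non-adaptive Definition~\ref{unpredictable} by exposing a prefix and betting a constant string on a dyadic sub-block. The obstacle you flag -- that the ``monotone-like'' event is only a constant-probability event -- is in fact the crux, and I don't see that your steps (i)--(iii) resolve it, because the good event (``climbs to $\ge\Delta$ with backtrack $\le\alpha\Delta$'') is a property of the \emph{entire} interval $X$, not a prefix event. An averaging argument over dyadic scales gives you, at best, a scale $k$ and a block $B$ such that $\E[h(s_B)\cdot\mathbf{1}_{\text{good}}]$ is large; but you cannot split off $\mathbf{1}_{\text{good}}$ as a function of the history preceding $B$, since it depends on the future of $B$ too. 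Conditioning only on a prefix event like ``the walk has climbed monotonically so far'' does not by itself imply the conditional distribution inside $B$ is biased: the monotone-like behavior could be realized late, or the prefix event could fail to predict the sign of the remaining increments. This is precisely what the paper's adaptive reduction buys you for free -- the stopping time makes the ``good vs. bad'' disambiguation a \emph{prefix-measurable} decision, and Theorem~\ref{a} licenses charging the payoff against $\delta\sqrt{l}$ with $l$ the \emph{expected} stopping time. Your sketch does not reconstruct an analogue of that lemma, and without it the contradiction with Definition~\ref{unpredictable} does not materialize.

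The boosting step also has gaps. Cutting $X$ into $m$ fixed consecutive blocks and applying the constant case per block requires each block's conditional median deviation to exceed the $\Omega(\delta\sqrt{|B_i|})$ threshold of Definition~\ref{inverting-dist}, which you cannot guarantee; your case analysis also does not handle the event that the block-level inversion has the same sign as $h(s_X)$ whenever $h(s_{B_i})$ disagrees in sign with $h(s_X)$ but is small. The paper avoids both issues because its $s$-phase scheme uses \emph{height}-defined (not time-defined) boundaries and advances to the next phase only when a barrier is hit, so each phase is guaranteed to be ``live'' and the inversions are automatically signed against the total climb. I'd recommend proving the adaptive-unpredictability lemma first; the rest of the theorem then falls out of a short stopping-time argument along the lines above.
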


\subsection{Related Work}

Many studies support the thesis that fractals occur naturally in several real world processes in diverse fields such as physics, finance and geography~\cite{mandelbrot1984fractal,davy1990some,mandelbrot2004fractals}. 
Ralph Elliot~\cite{frost1998elliott},
a professional accountant, suggested the use of fractal like ``waves'' in understanding financial markets.  Fractal models for finance have also been studied widely in the academic 
community. 
Fractional Brownian Motion (FBM) was introduced as a variant to the well known Brownian Motion by Mandelbrot and van Ness in \cite{mandelbrot1968fractional}. 
In addition to financial time series modeling, FBM has also found applications in the study 
of network traffic and fluid turbulence \cite{norros1995use,nualart2006fractional}.  

The reason for considering FBM rather than the standard Brownian motion for financial modelling was the observation that the distribution of financial time 
series is heavy-tailed \cite{bradley2003financial,rachev2005fat}.  This means that the deviations achieved are a bit higher than those expected for Brownian motion.
It has been argued that modeling S\&P500 price data according to FBM produces an estimated value of the 
Hurst parameter $H$ to be slightly over the $0.5$ value that corresponds to the standard Brownian Motion \cite{bayraktar2004estimating}.   
Values of $H > 0.5$ allow for long range (positive) correlations in the time series that results in a higher than normal deviation. 
Besides FBM other models such as $p$-stable distributions 
 and levy distributions \cite{voit2005,rachev2005fat,nolan2003stable} provide an alternate explanation for the heavy tailed nature of time series data by 
 allowing heavier tails for the price changes \emph{in each unit time} that 
 are \emph{independent} across time.  In contrast, the FBM uses normally distributed price changes in each unit time, and the high deviations are achieved by correlations across time.
  
Works such as \cite{rogers2002arbitrage,sottinen2003arbitrage}  have analyzed the level of arbitrage present in FBM.
The authors in \cite{FBMNorros} have analyzed the predictability
of the FBM using a different loss function from ours.  
Other researchers \cite{shafer2001probability,abernethy2012minimax} have studied the prediction problem as a game between an 
algorithm and an adversary, and derived that the optimal strategy for the adversary resembles a Brownian Motion.  The work in \cite{abernethy2012minimax} was inspired by 
 \cite{demarzo2006online} where the authors provide robust upper and lower bounds for
pricing European call options, under the no-arbitrage assumption when the price process is assumed to be \emph{discrete and discontinuous} as opposed to 
the Black Scholes model \cite{black1973pricing}
where the price process is taken to be continuous. 

\subsection{Discussion and Future work}

Note that our notion of $\delta$-unpredictable requires the algorithm to fix its prediction for an entire interval $I$ before looking at any of its bits.  A stronger notion of
unpredictability is to allow the algorithm to change its prediction for the interval after looking at bits within $I$.  In other words, at every point the algorithm tries to simply
predict the next bit, based on the bits it has seen so far.  One could ask what is most adversarial distribution in this setting which achieves a high deviation.  In this setting,
for any sequence $s$, a bounded regret algorithm such as Weighted Majority can achieve a payoff of $|h(s)| - c \sqrt{|s|}$ where $c := \sqrt{2/\pi}$ \cite{weighted-majority,cover-binary}. So for a distribution $D$ which achieves
typical deviation $k \sqrt{T}$, it is always possible to get a payoff of $(k - c) \sqrt{T}$.  It is also fairly straightforward to construct a distribution $D$ such that no algorithm
can achieve an expected payoff better than $(k - c) \sqrt{T}$ even when it predicts one bit at a time. We also note that while the distributions inspired by FBM have some
guarantees in terms of $\delta$-unpredictability, they perform poorly in this model when one is allowed to predict based on all previous bits (see Claim~\ref{d}).

One possible justification for our notion of $\delta$-unpredictability is that changing predictions very often may have a cost associated with it.  Although this may be a reasonable
assumption (at least for financial markets), it is only a conjecture at this point and we invite further comments on this issue.

An interesting direction for further research is to look for natural constraints on real world processes which provably result in the formation of fractal-like processes.

\section{Preliminaries}

Here is some common notation we use throughout the paper.  For a sequence of bits $s \in \{-1, 1\}^T$, $h(s)$ denotes the sum of bits in $s$ i.e. the \emph{height} of $s$.  
We refer to the magnitude of height as deviation.  

We will be working with several aggregate measures of deviation for a distribution such as median deviation (or generalized median),
mean deviation and root-mean-squared deviation ($\sqrt{\E_{s \sim D} [h(s)^2]}$). Note that mean deviation is no more than root-mean-squared deviation 
and the generalized median is bounded by mean deviation
up to constant factors using Markov's inequality (as long as the probability in generalized median is at least a constant).  We will prove our upper bounds for 
root-mean-squared deviation and lower bounds for generalized median and so they will hold for all measures up to constants.

We will typically denote random variables by capital letters and fixed sequences by small letters.
 
\section{Construction of Adversarial distributions} \label{construct}

In this section we formally construct our adversarial distributions.  
Each of these distributions has two parameters, $l$ which is the length of the sequence in the base case and $\delta > 0$.

We will construct the distributions inductively:
having constructed $D_\delta(n)$ we will show how to construct $D_\delta(2n)$ (the base case for $n=l$ is simply a random sequence in $\{-1, 1\}^l$).  In both cases below,
we describe the distribution 
$D_\delta(2n)$ in terms of how to generate a sequence $s \sim D_\delta(2n)$ given access to distribution $D_\delta(n)$.

{\sc Fractal Random Walk (\D{l})} $(2n)$

\begin{enumerate}

\item Generate sequences $s_1, s_2$ independently according to \D{l}$(n)$
\item If height of $s_1$ is positive, change exactly $\delta \cdot h(s_1)$ $-1$'s in $s_2$ to $1$ (if they exist, otherwise change as many as possible).  Similarly, 
if height of $s_1$ is negative, change exactly $\delta \cdot h(s_1)$ $1$'s in $s_2$ to $-1$ (if they exist).  Call the resulting sequence $s_2'$.
\item Set $s = s_1 \cdot s_2'$ i.e. the concatenation of $s_1$ and $s_2'$

\end{enumerate}

{\sc Optimum Fractal Random Walk (\O{l})}$(2n)$

\begin{enumerate}

\item Generate sequences $s_1, s_2$ independently according to \O{l}$(n)$
\item If height of $s_1$ is positive, change exactly $\delta \sqrt{n}$ $-1$'s in $s_2$ to $1$ (if they exist, otherwise change as many as possible).  Similarly, 
if height of $s_1$ is negative, change exactly $\delta \sqrt{n}$ $1$'s in $s_2$ to $-1$ (if they exist).  Call the resulting sequence $s_2'$.
\item Set $s = s_1 \cdot s_2'$ i.e. the concatenation of $s_1$ and $s_2'$

\end{enumerate}

{\bf Note: } Note that both distributions involve changing exactly $r$ bits in $s_2$ where $r$ is a real number.  Intuitively, we want to change each bit of the appropriate sign
in $s_2$ with probability $r/n$.  However, it is simpler to analyze the deviation of the distributions when we change exactly $r$ bits.  The fact that $r$ is a real number and not 
an integer will not make much difference since our base case $l$ will be an increasing function of $T$ (total number of bits to be produced) and so the discretization errors can
be safely ignored.

\subsection{High deviation}

In this section we show that the distributions we constructed achieve high deviation with constant probability.  What follows is a proof sketch for high deviation of distribution \D{i}.  Due to space constraints, 
the proof for \O{i} and for the intermediate claims appears in the appendix (Section \ref{omitted}).

\begin{theorem} \label{deviation-DFBM}
The distribution \D{l}$(T)$ achieves a deviation of $T^{1/2 + \Theta(\delta)}$ with probability at least $1/2 - \epsilon$ where $\epsilon \leq T^{-10}$.
\end{theorem}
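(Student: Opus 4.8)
The plan is to first compute, exactly and from the recursive structure, the root-mean-square deviation of the height $H(T):=h(s)$ of a sequence $s$ drawn from \D{l}$(T)$, and then upgrade this second-moment estimate to a high-probability lower bound on $|H(T)|$ by reducing $H(T)$ to a weighted sum of independent signs. Let $H(n)$ denote the height at an intermediate length $n$. On the ``clean'' event that every recursive call has enough minority-sign bits in its second half to carry out all of the prescribed flips, the two concatenated halves satisfy $h(s_2')=h(s_2)+2\delta\,h(s_1)$, so $H(2n)\stackrel{\mathrm d}{=}(1+2\delta)H(n)^{(1)}+H(n)^{(2)}$ with the two copies independent. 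Hence $\E[H(2n)^2]=((1+2\delta)^2+1)\,\E[H(n)^2]=(2+\Theta(\delta))\,\E[H(n)^2]$, which together with $\E[H(l)^2]=l$ and $k=\log_2(T/l)$ doublings solves to $\E[H(T)^2]=l\,(2+\Theta(\delta))^k=T\,(T/l)^{\Theta(\delta)}$. Taking a polylogarithmic base length $l$ (say $l=\log^2 T$), large enough for the estimates below and small enough that $l=T^{o(1)}$, this is $\sigma^2:=T^{1+\Theta(\delta)}$, i.e.\ the root-mean-square deviation is $\sigma=T^{1/2+\Theta(\delta)}$. (Carrying the same recurrence one step further bounds $\E[H(T)^4]$ and hence the kurtosis, but a Paley--Zygmund argument from this would give only a constant success probability, not $1/2$, which is why the reduction below is needed.)

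Unrolling the recursion on the clean event, $H(T)$ coincides with the formal weighted sum $\widetilde H:=\sum_{b\in\{0,1\}^k}(1+2\delta)^{k-|b|_1}h_b$ built from the same data, where the $h_b$ are the heights of the $2^k$ independent uniform strings of length $l$ at the leaves; thus $\widetilde H=\sum_i w_i\eps_i$ is a weighted sum of i.i.d.\ $\pm1$'s with $\sum_i w_i^2=\sigma^2$ and largest weight $\max_i|w_i|=(1+2\delta)^k$, which is exponentially smaller than $\sigma$ (indeed $\max_i|w_i|=O(\sigma/\sqrt T)$). To see that the clean event $\mathcal C$ has probability at least $1-T^{-10}$: conditioned on cleanliness so far, every intermediate height is itself such a weighted Rademacher sum, hence sub-Gaussian, so it exceeds $\sigma_j\sqrt{40\log T}$ (with $\sigma_j^2$ its variance) only with probability $T^{-\Omega(1)}$; since $\delta$ is a small constant and $l\gtrsim\log T$, the number of bits ever flipped in a level-$j$ call, at most $\delta\,\sigma_j\sqrt{40\log T}$, is well below that call's $\Omega(2^{j}l)$ supply of minority-sign bits, so a union bound over the $O(T/l)$ recursive calls gives $\Pr[\overline{\mathcal C}]\le T^{-10}$; and on $\mathcal C$ we have $H(T)=\widetilde H$.

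It then remains to show $\Pr[\,|\widetilde H|\ge M\,]\ge 1/2$ for $M:=c\,\sigma=T^{1/2+\Theta(\delta)}$ with $c$ a small absolute constant. Now $\widetilde H$ is symmetric, and because it is a weighted Rademacher sum whose largest weight is $o(\sigma)$ it has no concentration of mass near the origin: $\Pr[\,|\widetilde H|<M\,]=O(M/\sigma)+o(1)$, for instance by an Esseen-type bound on its characteristic function, or equivalently by a Berry--Esseen comparison with $N(0,\sigma^2)$ whose Lyapunov ratio $\sum_i|w_i|^3/(\sum_i w_i^2)^{3/2}$ works out to $T^{-1/2+O(\delta^2)}=o(1)$. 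Hence $\Pr[\,|\widetilde H|\ge M\,]\ge 1-O(c)-o(1)\ge 1/2$, and combining with the previous paragraph gives $\Pr[\,|H(T)|\ge M\,]\ge\Pr[\,|\widetilde H|\ge M\,]-\Pr[\overline{\mathcal C}]\ge 1/2-T^{-10}$, with $M=T^{1/2+\Theta(\delta)}$ --- exactly the statement.

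The main obstacle is the reduction of the second paragraph: concluding $H(T)=\widetilde H$ on $\mathcal C$ requires an induction from the leaves upward, and one must verify that the sub-Gaussian tail of an intermediate height dominates that level's minority-bit budget \emph{simultaneously} at all $\log_2(T/l)$ levels --- this is precisely what forces the base length $l$ to grow (mildly) with $T$ and $\delta$ to be at most a small constant, and it is the only place where the $T^{-10}$ failure probability is incurred. Granting that reduction, the moment computation of the first paragraph and the anti-concentration bound of the third are routine.
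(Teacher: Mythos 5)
Your proposal is correct and follows essentially the same route as the paper: you unroll the recursion into a weighted sum of independent base heights (the paper's Claim~\ref{claim-1} for the augmented process {\sc AFRW}), apply a Berry--Esseen/Esseen anti-concentration bound to get the probability-$1/2$ lower bound on the deviation (Lemma~\ref{apply-BE}), and control the ``not enough bits to flip'' event by a sub-Gaussian tail bound plus a union bound over the $O(T/l)$ recursive calls with $l$ polylogarithmic in $T$ (Claim~\ref{claim-2}). The only differences are cosmetic: you decompose down to individual Rademacher signs rather than length-$l$ blocks, and you carry the factor $2$ per flipped bit explicitly, neither of which changes the argument.
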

\begin{Proof}
To analyze the height distribution of \D{l} it will be more convenient to define another process which is similar to \D{l} but which can assume integer values instead of bits.  

{\sc Augmented Fractal Random Walk (\C{l})} $(2n)$

\begin{enumerate}

\item Generate sequences $s_1, s_2$ independently according to \C{l}$(n)$
\item If height of $s_1$ is positive, change exactly $\delta \cdot h(s_1)$ $-1$'s in $s_2$ to $1$ (if they exist).  
Similarly, if height of $s_1$ is negative, change exactly $\delta \cdot h(s_2)$ $1$'s in $s_2$ to $-1$ (if they exist).  Call the resulting sequence $s_2'$.
\item {\bf Augment:} If there aren't enough $-1$'s to flip in $s_2$, then add $2$ to some of the numbers so that the increase in height is exactly $\delta \cdot h(s_1)$.  
Similarly for $1$'s.
\item Set $s = s_1 \cdot s_2'$ i.e. the concatenation of $s_1$ and $s_2'$

\end{enumerate}

For the random variable $S \sim$ \C{l}, we can exactly characterize the distribution of $h(S)$.

\begin{Claim} (Claim \ref{claim-1})
For $n = 2^i \cdot l$, $S \sim $\C{l}$(n)$, 
\begin{equation} \label{unroll}
h(S) = \displaystyle\sum_{U \subseteq [i]} r^{|U|} h(X_U)
\end{equation}
where $r = (1 + \delta)$ and each $X_U$ is independently and uniformly distributed in $\{-1, 1\}^l$.

\end{Claim}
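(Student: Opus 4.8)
The plan is to prove the identity~\eqref{unroll} by induction on $i$, unrolling the recursive definition of \C{l}. The base case $i=0$, i.e.\ $n=l$, is immediate: $S$ is a uniformly random sequence in $\{-1,1\}^l$, so $h(S)=h(X_\emptyset)$ with $X_\emptyset$ uniform in $\{-1,1\}^l$, matching the right-hand side (the only subset of $[0]=\emptyset$ is $\emptyset$, contributing $r^0 h(X_\emptyset)$). For the inductive step, suppose the claim holds for length $n=2^i l$. A sample $S'\sim$\C{l}$(2n)$ is built from independent $S_1,S_2\sim$\C{l}$(n)$ by flipping/augmenting $\delta\cdot h(S_1)$ entries of $S_2$ in the direction of $\mathrm{sign}(h(S_1))$; the key point of the \emph{Augment} step is precisely that it guarantees the height always changes by exactly $\delta\cdot h(S_1)$, never capped. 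Hence
\[
h(S') = h(S_1) + h(S_2') = h(S_1) + h(S_2) + \delta\cdot h(S_1) = (1+\delta)\,h(S_1) + h(S_2) = r\,h(S_1) + h(S_2).
\]

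Now apply the induction hypothesis to $h(S_1)$ and $h(S_2)$ separately. Write $[i]$ for the index set used in the length-$n$ expansion; we identify the new index $i{+}1$ (the ``top'' split at length $2n$) and set $[i{+}1]=[i]\cup\{i{+}1\}$. Using independent uniform blocks $\{X_U^{(1)}\}_{U\subseteq[i]}$ for $S_1$ and $\{X_U^{(2)}\}_{U\subseteq[i]}$ for $S_2$, we get
\[
h(S') = r\sum_{U\subseteq[i]} r^{|U|} h\big(X_U^{(1)}\big) \;+\; \sum_{U\subseteq[i]} r^{|U|} h\big(X_U^{(2)}\big)
= \sum_{U\subseteq[i]} r^{|U|+1} h\big(X_U^{(1)}\big) + \sum_{U\subseteq[i]} r^{|U|} h\big(X_U^{(2)}\big).
\]
Relabel: for $U\subseteq[i]$ set $X_{U\cup\{i+1\}} := X_U^{(1)}$ and $X_U := X_U^{(2)}$. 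Subsets of $[i{+}1]$ are in bijection with this disjoint union (those containing $i{+}1$ versus those not), so the two sums combine into $\sum_{V\subseteq[i+1]} r^{|V|} h(X_V)$, as desired. All the $X_V$ are independent and uniform in $\{-1,1\}^l$ because $S_1,S_2$ were drawn independently and each carried its own independent family of blocks.

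The only genuinely substantive point — and where I would be most careful — is the \emph{Augment} step: in the plain \D{l} process the flip count is $\min(\delta h(s_1), \#\text{available bits})$, which breaks the clean linear recurrence, whereas in \C{l} the augmentation forces the height increment to equal $\delta\cdot h(s_1)$ exactly regardless of how many bits of the right sign remain in $s_2$. I would state this as an explicit sub-observation (``the height of $S_2'$ equals $h(S_2)+\delta h(S_1)$ always'') before invoking it, since the entire telescoping of~\eqref{unroll} rests on it. Everything else is bookkeeping on the index sets; no probabilistic subtlety arises beyond the independence of $S_1$ and $S_2$, which is given by construction.
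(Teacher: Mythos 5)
Your proof is correct and follows essentially the same route as the paper's: induction on $i$, using the \emph{Augment} step to guarantee $h(S_2') = h(S_2) + \delta\, h(S_1)$, hence $h(S) = r\,h(S_1) + h(S_2)$, and then relabeling the subsets of $[i+1]$ according to whether they contain the new index. Your explicit flagging of the role of the Augment step is a point the paper also makes, just more tersely.
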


We then apply the Berry-Esseen theorem (Theorem \ref{berry-esseen}) to show that the deviation of $|h($\C{l}$)|$ is high.  

\begin{lemma} (Lemma \ref{apply-BE})
Median of $|h($\C{l}$)|$ is $ n^{1 + \Omega(\delta)}$.
\end{lemma}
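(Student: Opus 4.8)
The plan is to push the exact formula of Claim~\ref{claim-1} through the Berry--Esseen theorem. Writing $h(X_{U})=\sum_{j=1}^{l}\epsilon_{U,j}$ with the $\epsilon_{U,j}$ i.i.d.\ uniform on $\{-1,1\}$, and taking $S$ to be a sample from \C{l}$(n)$ with $n=2^{i}l$, Claim~\ref{claim-1} gives
\[
h(S)=\sum_{U\subseteq[i]} r^{|U|}h(X_{U})=\sum_{U\subseteq[i]}\sum_{j=1}^{l} r^{|U|}\epsilon_{U,j},\qquad r=1+\delta,
\]
so $h(S)$ is a sum of $n$ independent, mean-zero variables, the one indexed by $(U,j)$ taking the two values $\pm r^{|U|}$. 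First I would compute its second moment,
\[
\sigma^{2}:=\E[h(S)^{2}]=\sum_{U\subseteq[i]}\sum_{j=1}^{l} r^{2|U|}=l\sum_{k=0}^{i}\binom{i}{k}r^{2k}=l\,(1+r^{2})^{i}.
\]
Since $1+r^{2}=2+2\delta+\delta^{2}=2\,(1+\delta+\tfrac{\delta^{2}}{2})$, since $2^{i}=n/l$, and since $\log(1+\delta+\tfrac{\delta^{2}}{2})=\Theta(\delta)$, this equals $n\,(n/l)^{\Theta(\delta)}$, which is $n^{1+\Theta(\delta)}$ for the paper's choice of base length $l$ (sub-polynomial in $n$). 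This $n^{1+\Theta(\delta)}$ is the deviation-squared named in the statement; the median of $|h(S)|$ we are after will come out as its square root, $\Theta(\sigma)=\Theta\bigl(\sqrt n\,(n/l)^{\Theta(\delta)}\bigr)=n^{1/2+\Theta(\delta)}$.

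The second step is a Gaussian approximation: show that $h(S)/\sigma$ is within $o(1)$ of a standard normal $Z$ in Kolmogorov distance, which then lets us read off the median of $|h(S)|$ as a constant times $\sigma$. The summands have third absolute moments summing to $\rho:=\sum_{U\subseteq[i]}\sum_{j=1}^{l} r^{3|U|}=l\,(1+r^{3})^{i}$, so applying the Berry--Esseen theorem (Theorem~\ref{berry-esseen}) to these independent summands bounds
\[
\sup_{x}\bigl|\Pr[\,h(S)\le x\sigma\,]-\Phi(x)\bigr|\ \le\ \frac{C\rho}{\sigma^{3}}\ =\ \frac{l\,(1+r^{3})^{i}}{\bigl(l\,(1+r^{2})^{i}\bigr)^{3/2}}\ =\ \frac{1}{\sqrt l}\left(\frac{1+r^{3}}{(1+r^{2})^{3/2}}\right)^{\!i}.
\]
At $r=1$ the base of the last factor equals $2/2^{3/2}=2^{-1/2}<1$, and by continuity it stays below a fixed constant $\gamma_{0}<1$ for all sufficiently small $\delta$; hence the whole bound is at most $\gamma_{0}^{\,i}=(n/l)^{-\Omega(1)}=o(1)$. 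Consequently, for $n/l$ large enough, $\Pr[\,|h(S)|\ge c\sigma\,]\ge\Pr[\,|Z|\ge c\,]-o(1)>\tfrac12$ for a small absolute constant $c>0$, and symmetrically $\Pr[\,|h(S)|\ge 2\sigma\,]\le\Pr[\,|Z|\ge 2\,]+o(1)<\tfrac12$; the two inequalities pin the median of $|h(S)|$ between $c\sigma$ and $2\sigma$, i.e.\ to $\Theta(\sigma)=n^{1/2+\Theta(\delta)}$, while $\sigma^{2}=\E[h(S)^{2}]=n^{1+\Theta(\delta)}$ is exactly the exponent appearing in the statement. (Only the lower tail bound $\Pr[\,|h(S)|\ge c\sigma\,]>\tfrac12$ is actually needed for the high-deviation conclusion of Theorem~\ref{deviation-DFBM}.)

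I expect the second step to be the real obstacle, not the bookkeeping in the first. The weights $r^{|U|}$ are badly skewed --- a single term of weight $r^{i}$, then $i$ terms of weight $r^{i-1}$, and so on --- so one must first rule out any single scale dominating $h(S)$ before a central-limit statement can be invoked. That is exactly what the inequality $(1+r^{3})/(1+r^{2})^{3/2}<1$ provides: the Lyapunov ratio $\rho/\sigma^{3}$ decays geometrically in the number of scales $i$, so the Gaussian approximation takes hold as soon as $i\to\infty$, equivalently $n/l\to\infty$. The only real care needed is to keep that base uniformly bounded away from $1$ over the admissible range of $\delta$ and then to convert the Kolmogorov-distance estimate into a two-sided bound on the median of $|h(S)|$; computing $\sigma^{2}$ in the first step is, by contrast, a one-line binomial identity from Claim~\ref{claim-1} (matching, up to lower-order terms, the variance recurrence indicated just before Theorem~\ref{deviation-DFBM}).
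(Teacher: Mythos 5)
Your proof is correct and follows essentially the same route as the paper's: apply Berry--Esseen to the representation of Claim~\ref{claim-1} to show $h(S)$ is nearly Gaussian with variance $\sigma^2 = l(1+r^2)^i = n^{1+\Theta(\delta)}$, and then read off the median of $|h(S)|$ as $\Theta(\sigma)$. The only difference is that you decompose down to the $n$ individual $\pm r^{|U|}$-valued summands rather than the $2^i$ block sums $r^{|U|}h(X_U)$, which lets you invoke the standard Lyapunov ratio $\rho/\sigma^3$ cleanly; you also correctly observe that the lemma's displayed quantity $n^{1+\Omega(\delta)}$ is the variance $\sigma^2$, the median itself being $n^{1/2+\Omega(\delta)}$ as in the paper's concluding line.
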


Next we show that the probability of executing step {\bf Augment} in \C{l} is exponentially small.  Note that when constructing a sequence of size $T$,
 the inductive steps of distribution \C{l} are executed at most $2 T$ times.  We show that when starting with sequences of size $l$ where $l = 100 \log T$,
 the probability that sequence $s_2$ doesn't have enough $1$'s or $-1$'s to flip at a particular stage is at most $T^{-10}$. 
  Thus, taking a union bound over all inductive steps, we  get the desired result.
 
 \begin{Claim} (Claim \ref{claim-2})
 The probability that step {\bf Augment} is executed at a particular step is at most $T^{-10}$.
 \end{Claim}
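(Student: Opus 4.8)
The plan is to analyze a single inductive step of \C{l}$(2n)$ and bound the probability that $s_2$ fails to contain $\delta \cdot h(s_1)$ bits of the required sign. Fix any inductive step occurring at scale $2n$, where $n = 2^j \cdot l$ for some $j \geq 0$. The bad event occurs only if $h(s_1)$ is large in magnitude relative to the number of minority-sign bits available in $s_2$. Concretely, if $h(s_1) > 0$ then $s_2$ has exactly $(n - h(s_2))/2$ copies of $-1$, so the step fails only when $\delta \cdot h(s_1) > (n - h(s_2))/2$, i.e. when $h(s_1)$ is of order $n/\delta$ (hence $\Omega(n)$) or $h(s_2)$ is very close to $n$; either way one of the two heights deviates by $\Omega(n)$ from $0$, which is $\Omega(\sqrt n)$ standard deviations worth of anomaly after rescaling. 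So the real task is a tail bound: $\Pr[|h(S)| \geq c n]$ for $S \sim \C{l}(n)$ is at most $T^{-10}$ when $l = 100\log T$.

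The key tool is Claim~\ref{claim-1} (the ``unroll'' identity \eqref{unroll}), which writes $h(S) = \sum_{U \subseteq [j]} r^{|U|} h(X_U)$ with $r = 1+\delta$ and the $X_U$ independent uniform in $\{-1,1\}^l$. Thus $h(S)$ is a sum of $2^j$ independent bounded terms; the $U$-th term lies in $[-r^{|U|} l, r^{|U|} l]$ and has variance $r^{2|U|} l$. Summing the variances gives $\sigma^2 = l \sum_{U} r^{2|U|} = l (1 + r^2)^j = l(2 + 2\delta + \delta^2)^j = \Theta(n^{1+\Theta(\delta)})$ — the same calculation already appearing in the ``high deviation'' discussion. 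Since $n^{1+\Theta(\delta)} = o(n^2)$ (as the relevant threshold $cn$ is much larger than the standard deviation for the range of $\delta$ considered), a Hoeffding/Bernstein-type concentration inequality applies. First I would check that the dominant terms in the Hoeffding exponent behave well: the exponent is $-\Omega\bigl((cn)^2 / \sum_U r^{2|U|} l^2\bigr) = -\Omega\bigl(c^2 n^2 / (l^2 (1+r^2)^j)\bigr)$. Because $(1+r^2)^j \leq (2(1+\delta))^{2j} \cdot(\text{small correction})$ and $2^{2j} = (n/l)^2$, the ratio is $\Omega(c^2 n^2 / (l^2 (n/l)^2 \cdot n^{O(\delta)})) = \Omega(c^2 / n^{O(\delta)})$, which is not obviously large. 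So a naive Hoeffding bound on the raw sum does not suffice, and I expect this to be the main obstacle.

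To get around it, I would instead exploit that the $U$-th term is itself a sum of $l$ independent $\pm r^{|U|}$ variables, so $h(S)$ is a weighted sum of $l \cdot 2^j = n$ independent signs with weights $r^{|U|}$ (each weight $r^m$ appearing $l\binom{j}{m}$ times). Applying Hoeffding to this finer decomposition gives exponent $-\Omega\bigl((cn)^2 / \sum_{m} l \binom{j}{m} r^{2m}\bigr) = -\Omega\bigl(c^2 n^2 / (l (1+r^2)^j)\bigr) = -\Omega\bigl(c^2 n^2 / \sigma^2\bigr) = -\Omega\bigl(c^2 n^{1-\Theta(\delta)}\bigr)$. Since the smallest scale in any inductive step is $n \geq l = 100\log T$, this exponent is $\Omega(c^2 (\log T)^{1-\Theta(\delta)})$, and for $\delta$ small enough (the regime of the theorem) and $T$ large this is at least $10\ln T$, giving the bound $T^{-10}$. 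I would then note the threshold constant $c$ is a fixed constant determined by $\delta$ (from the condition $\delta \cdot h(s_1) \leq (n-h(s_2))/2$), absorb it, and conclude. Finally, the union bound over the at most $2T$ inductive steps (already stated in the surrounding text) converts the per-step bound into the overall failure probability $2T \cdot T^{-10} \leq T^{-9}$, which feeds into Theorem~\ref{deviation-DFBM}; I would also remark that the same argument shows \C{l} and \D{l} agree except with probability $T^{-\Omega(1)}$, so the median deviation computed for \C{l} transfers to \D{l}.
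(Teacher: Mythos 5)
Your proposal is correct and follows essentially the same route as the paper: reduce the failure event to a large-deviation event $|h(S)| = \Omega(n)$, invoke the decomposition of Claim~\ref{claim-1} to view $h(S)$ as a sum of $n = l\cdot 2^j$ independent signs with weights $r^{|U|}$, and apply Hoeffding with $\sum (b_i-a_i)^2 = 4l(1+r^2)^j = n^{1+O(\delta)}$ to get exponent $-\Omega(n^{1-O(\delta)})$ and hence $T^{-10}$ from $n \geq l = 100\log T$. Your extra observation that the coarser block-level Hoeffding bound is too weak is a sound (if unnecessary) justification for using the finer decomposition, which is exactly what the paper does.
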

 
When the step {\bf Augment} is not executed, the distributions {\sc AFRW} and {\sc FRW} are identical.  Thus, the probability that the distribution \D{l}$(T)$ achieves a deviation of $T^{1/2 + \Theta(\delta)}$ is at least $ 1/2 - T^{-10}$.
%
%
%
%

\end{Proof}

\subsection{Unpredictability}

In this section we show that the distribution \O{l} is $\delta$-unpredictable.

We first observe that it suffices to work with \emph{aligned} intervals i.e. intervals which start and end at appropriate powers of $2$.

\begin{Def} (Aligned interval)

We assume here that $T$ is a power of $2$.  An aligned interval is one which is obtained by breaking $[1, T]$ into $2^i$ equal parts for $i \in [0, \log T]$ and picking one of the parts.
So for instance the first part is always $[1, 2^i]$.

In other words, an interval $[p+1, p+x]$ given by $p \in [0, T]$, $x \in [1, T- p]$ is said to be an aligned interval if $p = j \cdot 2^i$ and $x = 2^i$ for some
$i \in [0, \log T]$ and $j \in [0, T - 2^i]$.

\end{Def}

\begin{Claim} \label{align}
If distribution $D(T)$ is $\epsilon$-unpredictable with respect to all aligned intervals then it is $c \cdot \epsilon$-unpredictable with respect to all intervals,
where $c := \frac{\sqrt{2}}{\sqrt{2} - 1}$.
\end{Claim}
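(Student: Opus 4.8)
The goal is to reduce the $\delta$-unpredictability condition for an arbitrary interval $I$ to the condition for aligned intervals, losing only a constant factor. The natural approach is a dyadic decomposition: write any interval $I = [p+1, p+x]$ as a disjoint union of $O(\log T)$ aligned intervals whose lengths form (roughly) a geometric sequence, then argue that any algorithm's expected payoff on $I$ is the sum of its expected payoffs on these pieces, each of which is controlled by the aligned-interval hypothesis.

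\textbf{Step 1: Dyadic decomposition of an arbitrary interval.} I would first show that any interval $I$ of length $x$ can be partitioned into aligned intervals $I_1, I_2, \ldots$ such that the multiset of lengths is dominated by $2 \cdot (2^{j}, 2^{j-1}, 2^{j-2}, \ldots)$ for the appropriate top scale $2^j \le x$; more precisely, the total length is $x = \sum_k |I_k|$ and, grouping by size, at most two aligned intervals of each dyadic length $2^i$ appear. This is the standard fact that greedily peeling off the largest aligned block that fits (from the left, and symmetrically handling both ends) yields such a decomposition. In particular $\sum_k \sqrt{|I_k|} \le 2 \sum_{i \le j} \sqrt{2^i} \le 2 \cdot \frac{\sqrt{2}}{\sqrt2 - 1}\sqrt{2^j} \le \frac{\sqrt2}{\sqrt2-1}\cdot 2\sqrt{x}$ — this geometric-sum bound is exactly where the constant $c = \sqrt2/(\sqrt2 - 1)$ comes from. (The factor-of-2 from "two intervals per scale" and the factor-of-$2$ from $2^j \le x$ together are absorbed; I'd check the constant carefully, but it is routine.)

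\textbf{Step 2: Payoff decomposes additively.} Fix an algorithm $A$, a history $s$ preceding $I$, and the decomposition $I = \bigsqcup_k I_k$ ordered left to right. The payoff of $A$ on $I$ is by definition the sum over bits in $I$ of $\pm 1$, hence $A_s(I) = \sum_k A_s(I_k)$ pointwise, so $\E[A_s(I)] = \sum_k \E[A_s(I_k)]$ by linearity. Now for each piece $I_k$, the algorithm's predictions on $I_k$ are fixed based on $s$ alone (indeed Definition \ref{unpredictable} forbids looking at bits inside $I$, hence in particular inside $I_k$), so $A$ restricted to $I_k$, together with a history consisting of $s$ followed by whatever bits actually occurred between the end of $I$'s left portion and the start of $I_k$, is a legal algorithm-plus-history pair for the interval $I_k$ — we must average over those intervening bits, but since the bound $\E[A_{s'}(I_k)] \le \epsilon\sqrt{|I_k|}$ holds for \emph{every} history $s'$, it holds after averaging. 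Hence $\E[A_s(I_k)] \le \epsilon \sqrt{|I_k|}$ for each $k$.

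\textbf{Step 3: Combine.} Summing, $\E[A_s(I)] = \sum_k \E[A_s(I_k)] \le \epsilon \sum_k \sqrt{|I_k|} \le c\,\epsilon \sqrt{x} = c\,\epsilon\sqrt{|I|}$ by Step 1, which is exactly the claim.

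\textbf{Main obstacle.} The only genuinely delicate point is Step 2: making sure that "$A$'s prediction on $I_k$ depends only on $s$" legitimately lets us invoke the aligned-interval hypothesis. The subtlety is that the aligned-interval hypothesis quantifies over the history \emph{immediately preceding} $I_k$, which includes the bits of $I$ that lie between $\max I_{k-1}$ and $\min I_k$ — bits the original $A$ never saw. The resolution is that the aligned hypothesis is "for all $s'$", so we may fix the worst such history (or average over it), and crucially $A$'s behavior on $I_k$ is unchanged regardless — it was committed in advance — so it remains a valid (if suboptimal) algorithm for $I_k$ against that history. Once this is set up correctly, everything else is the geometric-series bookkeeping of Step 1.
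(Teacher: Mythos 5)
Your proposal is correct and follows essentially the same route as the paper: decompose $I$ into aligned intervals, use additivity of the payoff together with the fact that the aligned-interval hypothesis holds for every preceding history (so it survives averaging over the bits of $I$ that precede each piece), and sum a geometric series of $\sqrt{|I_k|}$'s. The only difference is bookkeeping: the paper splits $I$ into two halves $J, J'$ each consisting of aligned intervals of \emph{distinct} dyadic sizes and uses $\sqrt{|J|}+\sqrt{|J'|}\leq\sqrt{2}\sqrt{|I|}$ to land exactly on $c=\frac{\sqrt{2}}{\sqrt{2}-1}$, whereas your ``at most two per scale, each bounded by $\sqrt{x}$'' accounting as written yields $2c$ --- a constant-factor discrepancy you already flag and which is immaterial to the claim's use.
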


The proof of Claim \ref{align} is fairly straightforward and is moved to the appendix (Claim \ref{align-proof}).

\begin{theorem} \label{unpredictable-OPT}
The distribution \O{l} is $O(\delta)$-unpredictable.
\end{theorem}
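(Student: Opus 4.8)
The plan is to reduce the unpredictability of \O{l} to a statement about \emph{aligned} intervals (using Claim~\ref{align}, so it suffices to lose only a constant factor $c$), and then to control the expected payoff of an algorithm on a single aligned interval $I = [p+1, p+2^i]$ by directly estimating the conditional bias of the bits inside $I$. Recall that in \O{l}, when two blocks of length $n$ are concatenated, the second block has exactly $\delta\sqrt{n}$ bits flipped in the direction of the height of the first block. By unrolling this recursion (analogously to Claim~\ref{claim-1} for \C{l}, but with the additive $\delta\sqrt n$ rule rather than the multiplicative $(1+\delta)$ rule), every aligned block of length $2^i$ is a uniformly random string whose height is then perturbed by a sum of independent ``correction'' terms, one for each level $j$ at which this block sits inside a larger block, each of magnitude $\Theta(\delta \sqrt{2^j})$ and each with a sign that depends only on bits \emph{outside} the block on the left. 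The key point is that conditioned on the history $s$ seen strictly before $I$, the block $I$ still decomposes as (fresh uniform randomness) $+$ (deterministic-in-$s$ shifts from the left-siblings at each level) $+$ (random-but-history-independent corrections coming from the right-siblings' influence from within $I$ onto later blocks — these do not affect the bits of $I$ itself). So the conditional distribution of $s_I$ is: a uniform string on $2^i$ coordinates, with a prescribed number of bits (depending on $s$) of each sign flipped.

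The next step is to bound, for this conditional distribution, the maximum over algorithms $A$ of $\E[A_s(I)]$. An algorithm fixes a prediction vector $a \in \{-1,1\}^{|I|}$ based on $s$, and its payoff is $\langle a, s_I\rangle = \sum_{t \in I} a_t (s_I)_t$, so $\E[A_s(I)] = \langle a, \E[s_I \mid s]\rangle \le \|\E[s_I \mid s]\|_1$, and this bound is achieved by $a_t = \operatorname{sign}(\E[(s_I)_t \mid s])$. Hence I need to show $\|\E[s_I \mid s]\|_1 = O(\delta \sqrt{|I|})$. Now $\E[s_I \mid s]$ is the expectation of a uniform string with some number $F$ of $(-1)$-coordinates deterministically re-chosen to be $+1$ (WLOG the first-block height is positive; the total flip budget relevant to $I$ is $F = \sum_{j} \delta\sqrt{2^j}$ summed over the levels $j$ with $i \le j < \log T$ for which $I$ sits in the right half — a geometric-type sum dominated by its top term, so $F = O(\delta\sqrt T)$, which is too big). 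The subtlety is that $F$ can be much larger than $\delta\sqrt{|I|}$: a high-level correction flips $\Theta(\delta\sqrt T)$ bits, but those flips are spread over a block of length $\Theta(\sqrt T \cdot \sqrt{|I|}\cdot\ldots)$ — wait, they are spread over a block of length at least $|I|$ at that level, and only a $|I|/2^j$ fraction of them land in $I$ — so the number landing in $I$ from level $j$ is $\Theta(\delta\sqrt{2^j}\cdot |I|/2^j) = \Theta(\delta |I|/\sqrt{2^j})$, and summing over $j \ge i$ gives a geometric sum dominated by $j = i$, namely $\Theta(\delta |I| / 2^{i/2}) = \Theta(\delta\sqrt{|I|})$. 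That is exactly the bound we want. Converting ``number of flipped bits in $I$'' to $\|\E[s_I\mid s]\|_1$: if $m$ of the $|I|$ bits are forced and the rest are uniform, then $\|\E[s_I\mid s]\|_1 \le m \cdot 1 + (|I| - m)\cdot 0 = m = O(\delta\sqrt{|I|})$, since a forced bit contributes $1$ to the $\ell_1$ norm and a genuinely uniform bit contributes $0$.

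So the skeleton is: (1) unroll the \O{l} recursion to express any aligned block as uniform-plus-corrections, and identify exactly which corrections are determined by the pre-$I$ history and which are internal to $I$; (2) observe that conditioning on $s$ leaves $s_I$ distributed as ``uniform with a history-determined set of forced bits,'' and that the number of forced bits landing in $I$ is a geometric sum over levels $j \ge i$ with top term $\Theta(\delta\sqrt{|I|})$; (3) bound $\E[A_s(I)] \le \|\E[s_I\mid s]\|_1 \le (\text{number of forced bits in } I) = O(\delta\sqrt{|I|})$; (4) apply Claim~\ref{align} to pass from aligned intervals to all intervals at the cost of the constant $c$. The main obstacle I anticipate is step (2): one must argue carefully that when we condition on the entire history $s$ before $I$, no additional bias is induced on $s_I$ beyond the forced flips — in particular that the flips performed \emph{inside} $I$ that are destined to influence later blocks do not themselves bias $s_I$ given $s$ (they do bias it, but their direction is governed by $h(s_I)$ which is itself random given $s$, so in the algorithm's prediction problem they should wash out — this requires treating the flip operation as ``select $r$ bits of the minority sign and flip them,'' computing the resulting per-coordinate expectation, and checking it is the same whether or not those later-influencing flips are accounted for). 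A secondary technical nuisance is the non-integrality of $\delta\sqrt n$ and the possibility of running out of bits of the right sign to flip, but as the paper notes this is controlled by taking $l = \Theta(\log T)$ and bounding the failure probability as in Claim~\ref{claim-2}; one should note the $O(\delta)$ (rather than $\delta$) in the statement absorbs both this slack and the constant $c$ from the aligned-interval reduction.
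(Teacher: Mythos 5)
Your proposal is correct and follows essentially the same route as the paper's own (sketch) proof: reduce to aligned intervals via Claim~\ref{align}, then bound $\E[A_s(I)]$ by the expected number of history-induced flips landing in $I$, which is a geometric sum over levels dominated by the $\Theta(\delta\sqrt{|I|})$ term at level $\log|I|$. Your $\|\E[s_I\mid s]\|_1$ formulation and your explicit worry about flips internal to $I$ washing out are slightly more careful than the paper's union bound, but the underlying argument is the same.
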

\begin{Proof} [Sketch]

It can be shown that the process \O{l} has very similar properties if in Step $2$ of the construction, instead of changing
exactly $\delta \cdot \sqrt{n}$ bits in $s_2$ we change each bit (of appropriate sign) in $s_2$ with probability $\frac{\delta}{\sqrt{n}}$.
Here we assume this fact without proving it.

We need to show that for every $A$, $s$ and $I$, $\E[A_s(I)] \leq O(\delta) \cdot \sqrt{|I|}$ where $s$ and $I$ are as in Definition \ref{unpredictable}.
We may assume that $I$ is an aligned interval (Claim \ref{align}).

  From the construction it is clear that $\E[A_s(I)]$ is largest when $h(s) = |s|$ or $h(s) = -|s|$ i.e. all
the bits before $I$ are of the same sign.  Without loss of generality assume $h_s = s$. Also, if there were no prefix (i.e. $|s| = 0$) then 
$\E[A_s(I)] = 0$ since the construction is symmetric.  To provide an upper bound on $\E[A_s(I)]$ we simply need to bound the expected number of $-1$'s which are changed
to $+1$'s due to the existence of $s$.  We will use a simple union bound on the total probability of changing a $-1$ to a $1$ according to the construction.  This probability can
be split into $2$ parts, the first which occurs because of bit sequences immediately preceding $I$ of length less than $I$ and the second because of bit sequences 
immediately preceding $I$ of length more than $I$.  For sequences of the first kind, the number of bits changed in $I$ is exactly $\delta \cdot \sqrt{l}$ while for sequences of the 
second kind we may assume that the expected number of bits changed in $I$ is $\frac{\delta \cdot |I|}{\sqrt{l}}$ where $l$ is the length of the bit sequence under discussion.  
Thus, the total probability is bounded by:-

$$ \displaystyle \sum_{i = 1}^\infty (\min(|I|, 2^i) \cdot \delta) / \sqrt{2^i} = \sum_{i=1}^{\log |I|} \delta \cdot \sqrt{2^i} \ + \sum_{i = \log |I| + 1}^\infty \delta \cdot \frac{|I|}{\sqrt{2^i}} $$

Both terms can be bounded by $ \delta \cdot \sqrt{|I|} \cdot \sum_{i = 0}^\infty 1/\sqrt{2^i}$ and so the combined sum is at most $O(\delta) \cdot \sqrt{|I|}$.

\end{Proof}

\section{Deviation upper bound for Adversarial Distributions}

In this section we prove that the deviation achieved by {\sc Opt-FRW} is essentially the best possible for a $\delta$-unpredictable distribution up to a constant factor.

\begin{theorem} \label{deviation-upper}
The highest Root-Mean-Square deviation that can be achieved by a $\delta$-unpredictable distribution on sequences of length $T$ is $\sqrt{T} (1 + O(\delta)) \log T$.
\end{theorem}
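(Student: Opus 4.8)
The plan is to leverage the $\delta$-unpredictability hypothesis recursively over a dyadic decomposition of $[1,T]$, converting it into a one-variable recurrence for the root-mean-square deviation. Assume without loss of generality that $T=2^m$ (for general $T$ one splits each length-$n$ interval into two sub-intervals of lengths $\lceil n/2\rceil$ and $\lfloor n/2\rfloor$, which changes the recurrence only by lower-order terms). Call a pair $(I,s)$ a \emph{context} when $I$ is an aligned interval and $s$ an arbitrary bit string, and write $h_{I,s}$ for the random variable $h(S_I)$ when $S\sim D$ is conditioned on the bits immediately preceding $I$ being $s$. Set
\[
V(n)\;=\;\sup\,\inb{\,\E\insq{h_{I,s}^2}\ :\ (I,s)\text{ a context with }|I|=n\,}.
\]
The only consequence of $\delta$-unpredictability we need is that the \emph{conditional mean} of the height is small: running the algorithm that predicts the constant sign $\mathrm{sgn}\big(\E\insq{h_{I,s}}\big)$ at every position of $I$ has expected payoff $\big|\E\insq{h_{I,s}}\big|$, so Definition~\ref{unpredictable} gives $\big|\E\insq{h_{I,s}}\big|\le\delta\sqrt{|I|}$ for every context $(I,s)$.

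Next I would derive the recurrence. Fix a context $(I,s)$ with $|I|=2n$, split $I$ into its two aligned halves $I_1,I_2$, and write $h_1=h(S_{I_1})$, $h_2=h(S_{I_2})$ (all expectations below are under $D$ conditioned on the prefix $s$), so that $h_{I,s}=h_1+h_2$ and
\[
\E\insq{h_{I,s}^2}\;=\;\E\insq{h_1^2}+\E\insq{h_2^2}+2\,\E\insq{h_1 h_2}.
\]
Here $\E\insq{h_1^2}\le V(n)$ since $(I_1,s)$ is a length-$n$ context. Conditioning on the realized first half, $(I_2,\,s\cdot S_{I_1})$ is a length-$n$ context, so $\E\insq{h_2^2\mid S_{I_1}}\le V(n)$ and hence $\E\insq{h_2^2}\le V(n)$. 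For the cross term the tower rule gives $\E\insq{h_1 h_2}=\E\insq{h_1\cdot\E\insq{h_2\mid S_{I_1}}}$; the bits immediately before $I_2$ are precisely $s$ extended by $S_{I_1}$, so the mean bound yields $\big|\E\insq{h_2\mid S_{I_1}}\big|\le\delta\sqrt{n}$, and with Cauchy--Schwarz $\E|h_1|\le\sqrt{\E\insq{h_1^2}}\le\sqrt{V(n)}$ we get $\big|\E\insq{h_1 h_2}\big|\le\delta\sqrt{n}\,\sqrt{V(n)}$. Taking the supremum over all length-$2n$ contexts,
\[
V(2n)\;\le\;2\,V(n)+2\,\delta\sqrt{n}\,\sqrt{V(n)}.
\]

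Finally I would solve this. Put $g(i)=\sqrt{V(2^i)/2^i}$; dividing the recurrence by $2^{i+1}$ turns it into $g(i+1)^2\le g(i)^2+\delta\,g(i)\le\big(g(i)+\delta/2\big)^2$, hence $g(i+1)\le g(i)+\delta/2$. Since every bit is $\pm1$ we have $V(1)=1$ and $g(0)=1$, so $g(i)\le 1+i\delta/2$; at $i=\log T$ this gives $V(T)\le T\,(1+\tfrac{\delta}{2}\log T)^2$, i.e. the root-mean-square deviation of $D$ on $[1,T]$ is at most $\sqrt{T}\,(1+O(\delta\log T))$. By the remarks in the Preliminaries (mean $\le$ r.m.s., and generalized median $\le O(1)\cdot$ mean via Markov), the same upper bound holds for mean and generalized-median deviation. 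The one point requiring genuine care is the conditioning bookkeeping in the recursive step — one must include the realized first half $S_{I_1}$ in the prefix before invoking unpredictability on $I_2$, and pass from the mean bound to a second-moment recurrence via Cauchy--Schwarz; everything after that is a routine scalar recursion.
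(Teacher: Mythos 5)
Your proposal is correct and follows essentially the same route as the paper's proof: dyadic decomposition, a second-moment recurrence $V(2n)\le 2V(n)+2\delta\sqrt{n}\sqrt{V(n)}$ obtained by bounding the cross term via the ``constant-sign'' prediction algorithm plus Cauchy--Schwarz, and the substitution $g(i)=\sqrt{V(2^i)/2^i}$ yielding $g(i+1)\le g(i)+\delta/2$. Your version is slightly more careful about the conditioning bookkeeping (defining $V$ as a supremum over intervals together with preceding prefixes, and explicitly invoking unpredictability on $I_2$ with the realized $S_{I_1}$ appended to the prefix), which the paper's proof elides by taking a max only over distributions. One small remark: your final bound $\sqrt{T}\,(1+O(\delta\log T))$ is what the recurrence actually gives and matches the paper's derivation $g_n\le 1+(\delta/2)\log n$; the theorem statement's ``$\sqrt{T}(1+O(\delta))\log T$'' appears to be a typo for that.
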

\begin{Proof}

Let $\mathcal{D}_\delta(T)$ be the set of all $\delta$-unpredictable distributions over sequences of length $T$, and let 
$h_n = \max_{D \in \mathcal{D}_\delta(n)}  \E_{s \sim D} [h(s)^2]$.  Clearly, $h_1 = 1$.  We need to show that $\sqrt{h_T} = \sqrt{T} (1 + O(\delta)) \log T$.

Let $D(T)$ be a $\delta$-unpredictable distribution which maximizes $\E_{s \sim D} [h(s)^2]$.
Given a sequence $s \sim D$, we write $s = s_1 s_2$ where $s_1$ and $s_2$ are of length $n/2$ each.  Then we have,

\begin{align*}
h_n = \E_{s \sim D} [h(s)^2] & = \E[(h(s_1) + h(s_2))^2] \\
& = \E[h(s_1)^2] + \E[h(s_2)^2] + 2 \E[h(s_1) h(s_2)] \\
& \leq 2 h_{n/2} \ + 2 \sum_{x = 0}^{n/2} \Pr[h(s_1) = x] \cdot x \cdot \E[h(s_2) \ | \ h(s_1) = x] \\
& \leq 2 h_{n/2} + 2 \delta \sqrt{n/2} \sum_{x = 0}^{n/2} \Pr[h(s_1) = x] \cdot |x| \\
& = 2 h_{n/2} + \delta \cdot \sqrt{2n} \cdot \E[|h(s_1)|] \\
& \leq 2 h_{n/2} + \delta \cdot \sqrt{2n} \cdot \sqrt{\E[h(s_1)^2]} \\
& \leq 2 h_{n/2} + \delta \cdot \sqrt{2n} \cdot \sqrt{h_{n/2}} \\
\end{align*}

The first inequality follows from the definition of $h_{n/2}$.  The second inequality follows from the fact that the distribution of $s_2$ is also $\delta$-unpredictable.

Let's substitute, $g_n^2 := h_n/n$.  Then $h_{n/2} = (n g_{n/2}^2)/2$ and $\sqrt{h_{n/2}} = \sqrt{n/2} \sqrt{g_n^2/2}$.  Thus, we get

\begin{align*}
n g_n^2 & \leq &\  n g_{n/2}^2 + \delta n \sqrt{g_{n/2}^2} \\
\implies  \ g_n^2 & \leq &\ g_{n/2}^2 + \delta \sqrt{g_{n/2}^2} \ \ \leq \ \ \ \(\sqrt{g_{n/2}^2} + \delta/2\)^2 \\
\implies \ g_n & \leq & \ g_{n/2} + \delta/2 
\end{align*}

Since $g_1 = 1$, this gives the upper bound $g_n \leq 1 + (\delta/2) \log n $.  This implies $\sqrt{h_n} = \sqrt{\E_{A \sim D} [h_A^2]} \leq \sqrt{n} (1 + \delta/2) \log n$.

\end{Proof}

\section{Acknowledgements}
We thank Alex Andoni and Samuel Ieong for useful discussions.

\bibliographystyle{splncs}
\bibliography{fractal}

\appendix

\section{Fractal nature of Adversarial Distributions}

Here we show that any distribution which is $\delta$-unpredictable must have a fractal like nature (Theorem \ref{fractal-like}).  We will first show that $\delta$-unpredictable distributions are also unpredictable
in a slightly stronger sense.

\begin{Def} [Adaptive interval algorithm] 
An interval prediction algorithm is said to be adaptive if it can choose to stop making predictions on interval $I$ at any point within $I$ based on the bits it has seen so far.
Note that we do not allow the prediction of the algorithm to depend on the bits in $I$, the only decision the algorithm can make based on bits in $I$ is to stop predicting earlier
than the end point of $I$.
\end{Def}

\begin{Def} [Adaptively $\delta$-unpredictable] \label{adapt}
A distribution $D$ is said to be adaptively $\delta$-unpredictable if for any adaptive algorithm $A$, sequence of bits $s$ and interval $I$, $\E[A_s(I)] \leq \delta \cdot \sqrt{l}$
where $l$ is the expected time for which $A$ continues making a prediction in $I$.

Here the bits in $I$ are produced according to $D$ conditioned on having produced $s$ immediately before $I$, similarly as in Definition \ref{unpredictable}.
\end{Def}

\begin{theorem} \label{a}
A $\delta$-predictable distribution is also adaptively $O(\delta)$-unpredictable.
\end{theorem}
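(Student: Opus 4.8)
The plan is to fix an adaptive interval algorithm $A$, together with the history $s$ and the interval $I$ of Definition~\ref{adapt}, and to bound $\E[A_s(I)]$. We may assume $A$ is deterministic (average over its internal coins) and, by an argument analogous to Claim~\ref{align} — the reductions there never inspect bits inside the sub-intervals, so they extend to adaptive algorithms — that $I$ is an aligned interval, say $|I| = 2^K$. Since the predictions of $A$ on $I$ form a fixed string $p_1,\dots,p_{2^K}$ (a function of $s$ alone) and only the stopping time $\tau$ depends on the bits $b_1,b_2,\dots$ of $I$, the payoff is $P_\tau := \sum_{t \le \tau} p_t b_t$, and the goal becomes $\E[P_\tau] \le O(\delta)\sqrt{\E[\tau]}$, since $l = \E[\tau]$ here.

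The heart of the argument is a decomposition of the (random) predicted prefix $[1,\tau]$ along the fixed-position dyadic ``octaves'' $I_k := (2^{k-1}, 2^k]$ of $I$, so that $P_\tau = \sum_k (\text{payoff of } A \text{ inside } I_k)$. The reason to use these octaves rather than, say, the blocks of the binary expansion of $\tau$ is the key observation that the event ``$A$ reaches $I_k$'', namely $\{\tau > 2^{k-1}\}$, is $\mathcal F_{2^{k-1}}$-measurable, i.e. depends only on the bits strictly before $I_k$. Hence, conditioned on $\mathcal F_{2^{k-1}}$ and on $\{\tau > 2^{k-1}\}$, the algorithm $A$ acts on $I_k$ exactly as an adaptive algorithm on an aligned interval of length $2^{k-1}$, against $D$ with the conditioning history seen so far, and with a residual stopping time $\rho_k$. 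Proceeding by induction on $K$ — the base case $|I|=1$ being immediate from Definition~\ref{unpredictable} since there $\E[\tau]\le 1$ — one applies the inductive hypothesis to each $I_k$ to get, on $\{\tau>2^{k-1}\}$, $\E[\text{payoff inside } I_k \mid \mathcal F_{2^{k-1}}] \le c\,\delta\,\sqrt{\E[\rho_k \mid \mathcal F_{2^{k-1}}]}$; pulling the indicator $\mathbf 1\{\tau > 2^{k-1}\}$ inside (it is $\mathcal F_{2^{k-1}}$-measurable and $\rho_k = 0$ off the event) and applying Cauchy--Schwarz gives $\E[\text{payoff inside } I_k] \le c\,\delta\,\sqrt{p_k\, r_k}$, where $p_k := \Pr[\tau > 2^{k-1}]$ and $r_k := \E[\rho_k \mathbf 1\{\tau > 2^{k-1}\}]$, with $\sum_k r_k \le \E[\tau]$.

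Summing over the $O(\log|I|)$ scales is then elementary: one has $p_k \le 1$, and $p_k \le \E[\tau]/2^{k-1}$ by Markov, together with $r_k \le \min(2^{k-1}, \E[\tau])$. Splitting at the scale $2^{k-1} \approx \E[\tau]$ and using $\sqrt{p_k r_k} \le 2^{(k-1)/2}$ on the small side and $\sqrt{p_k r_k} \le \E[\tau]/2^{(k-1)/2}$ on the large side, each half is a geometric series summing to $O(\sqrt{\E[\tau]})$, which yields $\E[P_\tau] \le O(\delta)\sqrt{\E[\tau]}$.

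I expect the main obstacle to be precisely the interplay between the adaptive stopping rule and the bits inside a block: at the scale where $A$ stops, the block in which the predicted prefix ends genuinely depends on that block's bits, so no naive decoupling is available — it is exactly the passage to the fixed-position octaves $I_k$, whose ``entry'' events look only backwards, that fixes this, at the cost of needing the statement recursively on the octaves. Two further points require care. First, the conditioning history before $I_k$ may be an atypical string, so the recursion must use the full strength of Definition~\ref{unpredictable} — unpredictability after \emph{every} history, not just a typical one; the distributions it rules out are precisely those that ``leak'' their predictability after a rare prefix, and that is what makes the statement true at all. Second, one must set up the induction so that the absolute constant picked up when summing the per-octave bounds is absorbed rather than compounded with the recursion depth, and so that the octave sum itself does not cost a $\sqrt{\log|I|}$ factor; the geometric split at $\E[\tau]$ is what controls the second, and keeping the per-octave estimates tight (rather than passing everything through the inductive constant) is what controls the first.
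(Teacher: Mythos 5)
Your octave decomposition $I_k = (2^{k-1}, 2^k]$ and the observation that $\{\tau > 2^{k-1}\}$ is $\mathcal F_{2^{k-1}}$-measurable are a legitimate variant of the paper's phase decomposition, and the Cauchy--Schwarz step $\E[\text{payoff in }I_k] \le c\delta\sqrt{p_k r_k}$ with $\sum_k r_k \le \E[\tau]$ is correct as far as it goes. But the argument has a genuine gap: the per-octave bound is taken from the \emph{inductive hypothesis}, and the resulting induction does not close. Writing $c(n)$ for the best constant in ``$\E[\text{payoff}] \le c(n)\,\delta\sqrt{\E[\tau]}$'' over adaptive algorithms on intervals of length $n$, your recursion gives $c(2^K) \le C\cdot\max_{k<K}c(2^k)$ where $C$ is the constant from the geometric split. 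That constant is strictly bigger than $1$ --- already for the deterministic stopping time $\tau \equiv n$ one has $p_k=1$, $r_k = 2^{k-1}$, and $\sum_k \sqrt{p_k r_k} = \sum_{k\le K} 2^{(k-1)/2} \approx \frac{1}{\sqrt2-1}\sqrt n$, so $C \ge \frac{1}{\sqrt2-1} \approx 2.4$ --- hence $c(2^K)$ grows like $C^K = |I|^{\Theta(1)}$, not $O(1)$. Your closing remark that ``keeping the per-octave estimates tight'' avoids compounding the constant names the worry but does not fix it: the issue is not slack in the per-octave estimate, it is that any bound of the form $c\delta\sqrt{\E[\rho_k]}$ fed back through a $\Theta(\log|I|)$-scale sum with a $>1$ summing constant must compound.

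What is missing --- and what the paper supplies --- is a direct, non-inductive proof of the ``max-time'' bound: an adaptive interval algorithm on an interval $J$ has expected payoff at most $2\delta\sqrt{|J|}$ regardless of its stopping time. The paper proves this by coupling $A'$ with the non-adaptive algorithm $A$ that uses the same fixed prediction string but runs to the end of $J$; the difference $A_s(J) - A'_s(J)$ is the tail payoff past $\tau$, and conditioned on the stopping prefix $u$ this tail is a non-adaptive prediction against $D$ with history $s\cdot u$ on the interval $[\,|u|+1, |J|\,]$, so the plain $\delta$-unpredictability of Definition~\ref{unpredictable} bounds it by $\delta\sqrt{|J|-|u|} \le \delta\sqrt{|J|}$ in absolute value. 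This step cannot itself be derived from your octave recursion, for the same reason as above. Once you have it, your octaves work fine with \emph{that} bound in place of the inductive one: $\E[\text{payoff in } I_k] \le 2\delta\,p_k\sqrt{2^{k-1}}$, $p_k \le \min(1, \E[\tau]/2^{k-1})$, and the geometric split at $2^{k-1}\approx\E[\tau]$ gives $O(\delta\sqrt{\E[\tau]})$ with no recursion and hence no compounding. So the overall shape of your argument (dyadic scales, entry events looking only backwards, geometric summation) is sound and is essentially the paper's second step; what you need to add is the paper's first step, which is a separate one-shot comparison, not an instance of the same recursion.
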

\begin{Proof}

Let $D$ be a $\delta$-predictable distribution and $A'$ an adaptive interval algorithm.  We first show that $\E[A'_s(I)] \leq 2\delta \cdot \sqrt{|I|}$ i.e. we replace the \emph{expected}
time for which $A'$ continues making a prediction in $I$ by the \emph{maximum} time for which it makes a prediction.

We will construct a non-adaptive algorithm $A$ such that $\E[|A_s(I) - A'_s(I)|] \leq \delta \cdot \sqrt{|I|}$.  Since $\E[A_s(I)] \leq \delta \cdot \sqrt{|I|}$ 
($D$ is $\delta$-unpredictable) this implies that $\E[A'_s(I)] \leq 2\delta \cdot \sqrt{|I|}$

Let $p_u$ be the probability of producing a sequence of bits $u$ as a prefix in $I$ according to distribution $D$.  Let $E$ be the set of sequences $u$ such that the algorithm
$A'$ stops making predictions on seeing $u$.  Then $\sum_{u \in E} p_u = 1$.

Let $P_u(A)$ denote the expected payoff of $A$ on the remaining part of $I$ conditioned on the event that $A'$ has stopped making predictions.  Then 
$P_u(A) \leq \delta \cdot \sqrt{|I| - |u|} \leq \delta \cdot \sqrt{|I|}$.  Thus, $\E[|A_s(I) - A'_s(I)|] \leq \sum_{u \in E} p_u \cdot \delta \cdot \sqrt{|I|} = \delta \cdot \sqrt{|I|}$.

Now we extend the proof to the case where $A'$ makes a prediction for expected time $x$ rather than maximum time $x$.  

Let $q_i$ be the probability that $A'$ makes a prediction for time more thant $2^i x$.  By Markov's inequality, $q_i \leq 2^{-i}$.  Also, $q_i = \sum_{u \in E: |u| = 2^i} p_u$, where
$p_u$ is as defined above.  We will bound the payoff of $A'$ in phases where the $i^{th}$ phase consists of bits between $2^i x$ to $2^{i+1}x$ from the start of $I$, and show that it 
is at most $2 \delta \cdot q_i \cdot \sqrt {2^i x}$.  For a fixed sequence $u$, the payoff of algorithm $A'$  in phase $i$ conditioned on having seen $u$
 is at most $2 \delta \sqrt{2^i x}$ (proved above).  Thus, the total payoff of $A'$ in phase $i$ is at most $2 \delta \cdot q_i \cdot \sqrt {2^i x}$. 
 Finally, the expected payoff of $A'$ over all phases is at most:

$ \sum_i 2 \delta q_i \sqrt {2^i x} \leq  2 \delta \cdot \sum_i \sqrt {2^i x}/2^i \leq O(\delta) \cdot \sqrt{x})$

which proves that $D$ is adaptively $O(\delta)$-unpredictable.

\end{Proof}

Now we turn to showing that any adaptively $\delta$-unpredictable distribution has a fractal like nature.

\begin{theorem} \label{b}
If a distribution over $T$ bit sequences is adaptively $\delta$-unpredictable (Definition \ref{adapt}) then it is $(\alpha, q)$-inverting for some constants $\alpha, q$. 
Further by dropping the inversion ratio $\alpha$ to $\Theta(1/\log T)$ the probability $q$ can be made as high as $1-1/T^{\Omega(1)}$ for all intervals of length at 
least $\Omega(\log T)$. Thus the condition holds with high probability simultaneously for all such intervals. 
\end{theorem}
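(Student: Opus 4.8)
Since the hypothesis is already the adaptive notion of unpredictability, I work with it directly. Fix an interval $X$ of length $m$, condition on an arbitrary history of bits preceding $X$, let $\Delta$ (its conditional median deviation) satisfy $\Delta \ge C\delta\sqrt{m}$ for a constant $C$ to be fixed, and write $W_0 = 0, W_1, \dots, W_m$ for the running partial sums of the bits inside $X$. The elementary fact that drives everything is that an inversion \emph{fails} in $X$ exactly when $(W_j)$ is monotone up to additive error $\alpha\Delta$ in the direction of $h(s_X)$: if, say, $h(s_X)>0$ and no inversion occurs then no sub-interval of $X$ has height $\le -\alpha\Delta$, and applying this to the prefixes $[1,j]$ gives $W_j > -\alpha\Delta$ for all $j$, and more generally that $(W_j)$ never drops by more than $\alpha\Delta$ below its running maximum (the mirror statement holds when $h(s_X)<0$). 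Such near-monotone behaviour is exactly what a ``buy-and-hold'' predictor exploits, and that is the source of the contradiction with unpredictability.

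\textbf{The ``some constants $\alpha,q$'' part.} Since $\Pr[|h(s_X)|\ge\Delta]\ge\tfrac12$, assume without loss of generality that $\Pr[h(s_X)\ge\Delta]\ge\tfrac14$ (otherwise run the mirror argument, predicting $-1$). Suppose for contradiction that an $(\alpha,q)$-inversion occurs in $X$ with probability $<q$, and consider the adaptive algorithm $A$ that predicts $+1$ throughout $X$ and stops the first moment the running sum reaches $+\Delta$, or reaches $-\alpha\Delta$, or at the end of $X$; its payoff equals $W_\tau\in[-\alpha\Delta,\Delta]$. On the event $\{h(s_X)\ge\Delta\}\cap\{\text{no inversion}\}$, which has probability $\ge\tfrac14-q$, the walk never reaches $-\alpha\Delta$ yet ends at height $\ge\Delta$, so $A$ is stopped at $+\Delta$ and earns $\Delta$. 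Hence $\E[A_s(X)]\ge\Delta(\tfrac14-q)-\alpha\Delta$, whereas adaptive $\delta$-unpredictability gives $\E[A_s(X)]\le\delta\sqrt{\E[\tau]}\le\delta\sqrt{m}$. Choosing $\alpha,q$ to be sufficiently small constants and $C$ sufficiently large makes $\Delta(\tfrac14-q-\alpha)>\delta\sqrt{m}$, a contradiction; so the inversion occurs with probability $\ge q$.

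\textbf{Boosting to $q=1-T^{-\Omega(1)}$ with $\alpha=\Theta(1/\log T)$.} It suffices to show that with probability $1-T^{-\Omega(1)}$ the walk in $X$ has both a down sub-interval of height $\le-\alpha\Delta$ and an up sub-interval of height $\ge\alpha\Delta$ (whichever has the sign opposite to $h(s_X)$ is the desired $Y$; the event $\{|h(s_X)|<\Delta\}$ is absorbed by the median-deviation hypothesis). By symmetry I bound $\Pr[\text{max drawdown of }(W_j)<\alpha\Delta\text{ and }h(s_X)\ge\Delta]$. On this event the walk climbs from $0$ to $\Delta$, and since its drawdown stays below $\alpha\Delta=\Delta/\Theta(\log T)$ it must hit the $1/\alpha=\Theta(\log T)$ ``rungs'' $\alpha\Delta,2\alpha\Delta,\dots,\Delta$ in order, staying above $(j-1)\alpha\Delta$ after it first hits rung $j$. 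The plan is to show that, conditioned on the bits of $X$ up to the (random) first-hitting time of rung $j$ — after which the remaining bits are still governed by an adaptively $\delta$-unpredictable conditional distribution, by the same conditioning argument used to prove Theorem \ref{a} — the conditional probability of reaching rung $j+1$ before the walk incurs a drawdown of $\alpha\Delta$ is at most $1-\rho$ for an absolute constant $\rho$. This is obtained by running adaptive buy-and-hold attempts inside the width-$\Theta(\alpha\Delta)$ band straddling rung $j$: an attempt predicts $+1$ and stops at a relative gain of $\alpha\Delta$, at a drawdown of $\alpha\Delta$ from its running peak, or after $L=\min\{m,\ \Theta((\alpha\Delta/\delta)^2)\}$ steps; the cap is chosen so $\delta\sqrt{L}\le\tfrac14\alpha\Delta$, and since each attempt's payoff lies in $[-\alpha\Delta,\alpha\Delta]$ the bound $\E[\text{payoff}]\le\delta\sqrt{L}$ forces the conditional probability that the attempt ends in a gain rather than a drawdown below $1$ by a constant, with the residual ``timeout'' attempts absorbed by restarting (a constant number of restarts fit in the band). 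Multiplying the $\Theta(\log T)$ conditional failure probabilities gives $(1-\rho)^{\Theta(\log T)}=T^{-\Omega(1)}$, and a union bound over the $O(T^2)$ intervals of length $\ge\Omega(\log T)$ yields the simultaneous statement.

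\textbf{Main obstacle.} The first part is routine once the buy-and-hold algorithm is written down. The real work is the boosting step: the blocks of an adversarial distribution are correlated by construction, so one cannot simply multiply failure probabilities and the per-rung estimate must be carried out conditionally; the bookkeeping around slow traversals — the time cap $L$, the restarts, the rung-alignment of the attempts, and keeping $\delta\sqrt{L}$ strictly below $\alpha\Delta$ — is the delicate point, and it is also what forces $\alpha$ down to $\Theta(1/\log T)$ rather than a constant.
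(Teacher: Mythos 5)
Your first part is correct and matches the paper's argument in spirit: condition on a history, run a buy-and-hold predictor with a lower stopping barrier at $-\alpha\Delta$, and derive a contradiction with $\delta$-unpredictability if an inversion is too rare. (You cap the upper barrier at $+\Delta$ rather than the paper's $2\alpha\theta$; both work here, though the paper's choice of a tight upper cap is what makes its argument reusable in the boosting step.)

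The boosting step, however, has a genuine gap, and it is precisely at the point you flag as the ``delicate'' part. You replace the paper's control on iteration lengths with a hard time cap $L=\min\{m,\Theta((\alpha\Delta/\delta)^2)\}$. The cap lets you conclude $\Pr[\text{attempt gains}]\le 5/8$, but ``not gain'' is the union of drawdown \emph{and} timeout, and your inequality gives no lower bound on the drawdown probability; the timeout probability $t$ can swallow all of the $3/8$, in which case the per-rung probability of eventually advancing before a drawdown is not bounded away from $1$. Your claim that ``a constant number of restarts fit in the band'' is not correct: with $\alpha\Delta\ge C\alpha\delta\sqrt m$ the number of length-$L$ windows in $X$ is $m/L=\Theta(1/(\alpha^2 C^2))=\Theta(\log^2 T)$, not $O(1)$, and even a bounded number of restarts would not fix the issue, since $\Pr[\text{gain}]\le 5/8$ per attempt does not bound the probability of ``timeout, timeout, $\dots$, gain.'' The paper avoids the timeout problem entirely by not capping: it lets each of the $s=\Theta(\log T)$ iterations run until one of its two stopping barriers (at scale $\theta/s$) is hit, and then uses a Markov argument on the \emph{sum} of conditional expected iteration lengths (which is at most $|I|$) to conclude that at least half the iterations have conditional expected length $O(|I|/s)$. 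For those iterations the first-part argument gives a per-iteration probability of hitting the lower barrier bounded below by a constant, and these factors can be multiplied; no timeout branch ever arises. That Markov observation is the missing idea in your proof, and replacing your $L$-cap with it would bring your rung-based argument in line with the paper's.
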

\begin{Proof}

For a certain given history of bits consider the interval $I$. Let $h(I)$ denote the random variable that denotes the height of this interval. 
Let $\theta, p$ be such that the deviation in $I$ exceeds $\theta$ with constant probability $p$ (this generalizes the case when $\theta$ is the median deviation.) 

We will show that some prefixes of $I$ must achieve height at least $\alpha \theta$ and $-\alpha \theta$ each with constant probability (where $\alpha < 1/2$ is a constant).
 To show this, note that
either $h \ge \theta$ or $h \le -\theta$ with probability at least $p/2$. Assume it is the former without loss of generality.  
So we only need to prove that  $h \le -\alpha \theta$ with  probability at least $p/4$.  
Assume the contrary and we will see that the interval cannot be $\delta$-unpredictable.  

Consider a prediction algorithm that predicts $+1$ for the interval but adaptively terminates its betting if the height drops to $-\alpha \theta$ 
 or if the height exceeds $2 \alpha \theta$, whichever happens first.  
Since the algorithm hits the lower limit of  $-\alpha \theta$ only with probability at most $p/4$, so with at least probability $p/4$ it must realize the 
upper limit (payoff) of $2 \alpha \theta$ (since $2 \alpha < 1$). In all remaining cases the payoff is at least $-\alpha \theta$.  So the expected payoff is at 
least $(p/4)(2 \alpha \theta) - (p/4)(\alpha \theta)$ which needs to be at most $\delta \sqrt x$.  
This is not possible if $\alpha \le 1/2$ and $\theta = \Omega(\frac{\delta \cdot \sqrt{|I|}}{p})$. 
Thus if the height in an interval has high magnitude with constant probability, it must reach in either direction with constant probability.

To convert this into a high probability argument, we will use (at most) $s$ iterations of the above prediction algorithm each with limits that depend on $\theta/s$ 
instead of $\theta$. Each iteration has limits of $2 \alpha \theta/s$ and $- \alpha \theta/s$ on the sum of bits seen during its execution. 
The next iteration is initiated only if either of the upper or lower limit is reached  in the previous iteration and if not all $|I|$ bits in the full interval are exhausted. 
From the previous argument, conditioned on the event that a certain iteration is initiated,  if an iteration is executed for expected time $O(|I|/s)$ and 
hits the upper limit with probability $p/2$ then it must also hit the lower limit with probability $p/4$. Since the final height exceeds $\alpha \theta$ with constant 
probability $p$, in such cases all $s$ iterations have been initiated.  Since there are at most $s$ iterations and all are initiated with constant probability, at least 
half of them must have an expected length of $O(|I|/s)$ conditioned on the event that they are initiated; otherwise the total expected time of all the $s$ iterations will exceed $x$.

Conditioned on the event that the $i^{th}$ iteration is initiated, with probability $p$ it must hit at least one of its two limits; otherwise the total height will 
not reach $2 \alpha \theta$ with probability $p$. So conditioned on the event that the $i^{th}$ iteration is initiated, for at least half the iterations, it must hit 
the lower limit (and upper limit) with probability at least $p/4$. So conditioned on the event that all $s$ iterations are initiated the probability that none of 
them hit the lower limit and also the upper limit is at most $(p/4)^{s/2}$.

Thus, it follows that by choosing $s = \Theta(1)$, we get an $\alpha$ inversion for constant $\alpha$ with constant probability.  This proves the first part of the theorem.

For the second part, note that with probability at least $1 - (p/4)^{s/2}$ either the final height is less than $2 \alpha \theta$ or some subinterval has height $-\alpha \theta/s$. For $s = \Theta(\log T)$ 
the probability that the final height exceeds $\theta$ and there is no inversion of height $\le -\alpha/s \theta$ is negligible. 
\end{Proof}

\section{Omitted Proofs}\label{omitted}

\begin{theorem} \label{deviation-OPT}
The distribution \O{l}$(T)$ achieves a deviation of $\sqrt{T} (1 + \Omega(\delta \log T))$ with constant probability for  $l := T^{-3/4}$.
\end{theorem}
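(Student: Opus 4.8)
The plan is to follow the template of the proof of Theorem~\ref{deviation-DFBM}. First I would introduce the integer‑valued augmented version \A{l} of \O{l}, exactly as \C{l} was introduced for \D{l}: whenever $s_2$ does not contain $\delta\sqrt n$ bits of the sign that needs to be flipped, one instead adds $\pm 2$ to enough entries of $s_2$ so that its height changes by exactly $2\delta\sqrt n$ in the direction of $h(s_1)$. The argument of Claim~\ref{claim-2} then carries over: for the base length $l$ prescribed in the statement (chosen so that $\log(T/l)=\Theta(\log T)$, which is where the $\log T$ factor comes from), the height of any length‑$n$ block constructed along the way is $O(\sqrt n\log T)\ll n$ with probability $1-T^{-\Omega(1)}$, so there are always far more than $\delta\sqrt n$ bits of each sign available, and a union bound over the at most $2T$ inductive steps shows that the augment step is never invoked except on an event of probability $T^{-\Omega(1)}$. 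On that complement \O{l} and \A{l} are identical, so it suffices to establish the deviation bound for \A{l}$(T)$.

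Unlike in the \D{l} case the recursion is not linear. Writing $S=s_1s_2'$ and expanding over the binary recursion tree of depth $i=\log_2(T/l)$ one gets
\[ h(S) \;=\; Z + D, \qquad Z=\sum_{u\ \mathrm{leaf}} h(X_u),\qquad D=2\delta\sum_{v\ \mathrm{internal}} \sqrt{n_{L(v)}}\;\mathrm{sign}\bigl(h(S_{L(v)})\bigr), \]
where the $X_u\in\{-1,1\}^l$ are the iid uniform leaf blocks, $L(v)$ is the left child of $v$, and $S_{L(v)}$ is the already modified subsequence at $L(v)$. Two observations drive the rest of the proof. (i) $h(S)$ is symmetric: negating every leaf bit negates $Z$ and flips every $\mathrm{sign}(\cdot)$, hence negates $D$. (ii) $Z$, $D$, and in fact every $h(S_v)$ are \emph{monotone nondecreasing} functions of the $T$ independent leaf bits — by induction on the subtree, using $h(S_v)=h(S_{L(v)})+h(S_{R(v)})+2\delta\sqrt{n_{L(v)}}\,\mathrm{sign}(h(S_{L(v)}))$ and that $\mathrm{sign}$ is nondecreasing. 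Next I would run the second‑moment recursion sketched in the introduction for the augmented process; combined with the root‑mean‑square \emph{upper} bound $\sqrt{\E[h(S_n)^2]}\le\sqrt n(1+O(\delta\log n))$ (which we get for free from Theorems~\ref{unpredictable-OPT} and \ref{deviation-upper}, since \O{l} is $O(\delta)$‑unpredictable), this shows the drift is genuinely lower order than the noise at every scale, so $\E[\sigma_v\sigma_{v'}]$ behaves like the sign‑correlation of the corresponding Gaussians. Plugging these into $\E[D^2]=4\delta^2\sum_{v,v'}\sqrt{n_{L(v)}n_{L(v')}}\,\E[\sigma_v\sigma_{v'}]$ — all terms nonnegative, since disjoint subtrees give independent signs and nested ones give positively correlated signs by monotonicity — and summing the dominant nested pairs yields $\E[D^2]=\Omega(\delta^2 T\log^2 T)$, while a parallel fourth‑moment bookkeeping over the tree gives $\E[D^4]=O(\E[D^2]^2)$.

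To upgrade this to a constant‑probability deviation bound I would lower‑bound $\Pr[\,Z\ge\sqrt T\ \text{and}\ D\ge c\delta\sqrt T\log T\,]$ for a small constant $c$. Since $Z$ is a sum of $T$ iid fair bits, $\Pr[Z\ge\sqrt T]=\Omega(1)$ by the Berry--Esseen theorem (Theorem~\ref{berry-esseen}). Since $D$ is symmetric with $\E[D^2]=\Omega(\delta^2 T\log^2 T)$ and $\E[D^4]=O(\E[D^2]^2)$, Paley--Zygmund gives $\Pr[D\ge c\delta\sqrt T\log T]=\Omega(1)$. Because $Z$ and $D$ are both monotone nondecreasing in the independent leaf bits, the Harris (FKG) inequality gives $\Pr[Z\ge\sqrt T,\ D\ge c\delta\sqrt T\log T]\ge\Pr[Z\ge\sqrt T]\cdot\Pr[D\ge c\delta\sqrt T\log T]=\Omega(1)$. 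On this event $h(S)=Z+D\ge\sqrt T(1+c\delta\log T)$, so \A{l}$(T)$ — and hence, losing only the $T^{-\Omega(1)}$ from the augment analysis, \O{l}$(T)$ — has $|h(S)|\ge\sqrt T(1+c\delta\log T)$ with constant probability. This is exactly a generalized‑median deviation of $\sqrt T(1+\Omega(\delta\log T))$, completing the proof.

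The main obstacle is the $\mathrm{sign}(\cdot)$ nonlinearity: it destroys the ``sum of independent scaled uniform blocks, then Berry--Esseen'' argument that worked for \D{l}, and it is precisely what forces the detour through the $Z+D$ decomposition, the fourth‑moment control of $D$, and the monotonicity argument needed to invoke FKG. Of these, verifying $\E[D^2]=\Omega(\delta^2 T\log^2 T)$ and $\E[D^4]=O(\E[D^2]^2)$ honestly — tracking the nested/disjoint structure of the sign correlations, and bounding the error from the drifts perturbing the Gaussian sign‑correlations — is the step I expect to be the most technical.
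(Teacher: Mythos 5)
Your overall architecture (augmented process, union bound over augment steps, then a probabilistic lower bound for the augmented process) matches the paper's, but the core of your argument is genuinely different and, as written, has a concrete gap. You want $\Pr[D\ge c\delta\sqrt T\log T]=\Omega(1)$ via Paley--Zygmund, which forces you to prove $\E[D^2]=\Omega(\delta^2T\log^2T)$. The diagonal of $\E[D^2]=4\delta^2\sum_{v,v'}\sqrt{n_{L(v)}n_{L(v')}}\,\E[\sigma_v\sigma_{v'}]$ contributes only $4\delta^2\sum_v n_{L(v)}=\Theta(\delta^2T\log T)$ --- one $\log$ short --- and the monotonicity/FKG argument you invoke only yields $\E[\sigma_v\sigma_{v'}]\ge 0$ for the off-diagonal terms, not the quantitative lower bound $\E[\sigma_v\sigma_{v'}]=\Omega(\sqrt{n_{L(v)}/n_{L(v')}})$ for nested pairs that your ``summing the dominant nested pairs'' step silently assumes. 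Proving that bound requires anti-concentration / a bivariate CLT for the \emph{drifted} heights $h(S_{L(v')})$ at scale $\sqrt{n_{L(v)}}$ near zero, and the parallel fourth-moment bound $\E[D^4]=O(\E[D^2]^2)$ needs a similar multi-point correlation analysis (a crude bound on $|D|$ is of order $\delta T/\sqrt l$, far too large). So the decisive quantitative step is asserted rather than proved, and it is exactly the step on which the $\log T$ in the theorem rests. (Your $Z+D$ decomposition, the monotonicity of all $h(S_v)$ in the leaf bits, the symmetry, and the FKG combination of the two events are all correct.)

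The paper avoids this entirely by never isolating the drift. It runs a second-moment recursion on $h_T=h(A_{T/2})+h(B_{T/2})+R$ using the \emph{exact} identity $\E[h(A_{T/2})R]=\delta\sqrt{T/2}\,\E[|h(A_{T/2})|]$ (the drift has the same sign as $h(A_{T/2})$ by construction, so no correlation estimate is needed), converts $\E[|h_{T/2}|]$ into $\Omega\bigl(\sqrt{\E[h_{T/2}^2]}\bigr)$ via the Cauchy--Schwarz bound of Claim \ref{cauchy} together with the fourth-moment ratio Lemma \ref{ratio}, and obtains $g_T\ge g_{T/2}+\Omega(\delta)$, i.e.\ $\E[h_T^2]\ge T(1+\Omega(\delta\log T))^2$; the constant-probability statement then follows from Claim \ref{e}. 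In your notation, the paper's $\log T$ gain comes from the cross term $\E[ZD]=\Omega(\delta T\log T)$ rather than from $\E[D^2]$, which is why it needs no sign-correlation lower bounds. If you want to salvage your route, either supply the anti-concentration estimates for the nested sign correlations, or switch to exploiting the $Z$--$D$ cross term as the paper effectively does.
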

\begin{Proof}

To prove the theorem it will be more convenient to define another process which is similar to \O{l} but which can assume integer values instead of bits.  

{\sc Augmented Optimum Fractal Random Walk (\A{l})}$(2n)$
\begin{enumerate}

\item Generate sequences $s_1, s_2 \in \{-1, 1\}^n$ independently according to \A{l}$(n)$
\item If height of $s_1$ is positive, change exactly $\delta \cdot \sqrt{n}$ $-1$'s in $s_2$ to $1$ (if they exist).  
Similarly, if height of $s_1$ is negative, change exactly $\delta \cdot \sqrt{n}$ $1$'s in $s_2$ to $-1$ (if they exist).  Call the resulting sequence $s_2'$.
\item {\bf Augment:} If there aren't enough $(-1)$'s to flip in $s_2$, then add $2$ to some of the numbers so that the increase in height is exactly $\delta \cdot \sqrt{n}$.
  Similarly for $1$'s.
\item Set $s = s_1 \cdot s_2'$ i.e. the concatenation of $s_1$ and $s_2'$

\end{enumerate}

First we observe that when $l = T^{-3/4}$, the probability of executing step {\bf Augment} is exponentially small in $T$.  To see this note that if all the base sequences of length
$l$ have at least $c(T) := \delta \sqrt{T} \log T$ $(-1)$'s and at least $c(T)$ $1$'s then the step {\bf Augment} is never called.  This is because every inductive step removes
at most $\delta {\sqrt{T}}$ $1$'s or $-1$'s at each stage and the number of times a base sequence is modified is at most $\log (T/l) \leq \log T$.  Now note that by Chernoff bound, 
probability that a given base sequence does not have $c(T)$ $1$'s or $(-1)$'s is exponentially small in $T$.  Finally note that the number of base sequences is at most $T/l$,
so we can simply take a union bound over all of them.

For brevity, let $D := $\O{l} and $D' := $\A{l}.  The next observation is that it suffices to prove that $\E[|h(D')|]$ 
is $\sqrt{T} (1 + \Omega(\delta \log T))$ and $\E[h(D')^2] = O(\E[|h(D')|]^2)$ to prove the theorem.  To see this, let $NA$ be the event that the step 
{\bf Augment} is never executed at any point in the construction, then we have:-

\begin{align*}
\E_{s \sim D}[|h(s)|] & \geq \E_{s \sim D}[|h(s)| \ | \ NA] \\  
& = \E_{s \sim D'} [|h(s)|] \ | \ NA] \\
& = \E_{s \sim D'} [|h(s)|] - \Pr[NA] \cdot \max_{s \in D'} |h(s)|
\end{align*}

We already saw that $\Pr[NA]$ is exponentially small in $T$.  Note that the maximum value of $|h(s)|$ is at most 
$T + \delta (\sqrt{T/2} + 2 \sqrt{T/4}) + 4 \sqrt{T/8} + \ldots + 2^{T/l - 1} \sqrt{l}$
which is bounded by a polynomial in $T$.  Thus, if $\E[|h(D')|]$ is $\sqrt{T} (1 + \Omega(\delta \log T))$ then so is $\E[|h(D)|]$.
It is also easy to see that the maximum value of $h(s)^2$ is polynomial in $T$.  This fact combined with our assumption about $D'$, $\E[h(D')^2] = O(\E[|h(D')|]^2)$
 implies that $\E[h(D)^2] = O(\E[|h(D)|]^2) $.  Applying Lemma \ref{e} to distribution $D$ we get that deviation $\sqrt{T} (1 + \Omega(\delta \log T))$ is achieved with constant
 probability as required.
 
 So to reiterate, we need to prove two things:-
 
 \begin{itemize}
   \item $\E[|h(D')|]$ is $\sqrt{T} (1 + \Omega(\delta \log T))$
   \item  $\E[h(D')^2] = O(\E[|h(D')|]^2)$
 \end{itemize}

From now on, we denote by $S_T$ a random sequence $S$ drawn from the distribution $D'(T)$.  The random variable $h(S_T)$ can be written as 
$h(A_{T/2}) + h(B_{T/2}) + R$ where $R$ is $\delta \sqrt{T/2}$ if $h(A_{T/2}) > 0$ and $- \delta \sqrt{T/2}$ otherwise.  Here the pairs of variables
$(A_{T/2}, B_{T/2})$ and $(B_{T/2}, R)$ are independent.  Now define $h_T := h(S_T)$.  We see that,

\begin{align*}
\E[h_T^2] & = \E[h(S_T)^2] \\
& = \E[(h(A_{T/2}) + h(B_{T/2}) + R)^2] \\
& = \E[h(A_{T/2})^2] + \E[h(B_{T/2})^2] + \E[R^2] + 2 \E[h_A R] \\
& = 2 \E[h_{T/2}^2] + \delta^2 \sqrt{T/2} + 2 \delta \sqrt{T/2} \E[|h_A|] \\
& \geq 2 \E[h_{T/2}^2] + \delta \sqrt{2T} \E[|h_{T/2}|]
\end{align*} 

The following claim gives a lower bound for $\E[|h_{T}|]$.

\begin{Claim} \label{cauchy}
$$ \E[|h_{T}|] \geq \frac{\E[h_{T}^2]^2}{\E[h_{T}^4]^{3/4}}  $$
\end{Claim}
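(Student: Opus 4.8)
The plan is to prove the inequality by a single application of the Cauchy--Schwarz inequality followed by one use of Jensen's inequality, working with the nonnegative random variable $X := |h_T|$ and using the identities $X^2 = h_T^2$, $X^3 = |h_T|^3$, $X^4 = h_T^4$ to pass between the two notations. If $\E[h_T^4] = 0$ then $h_T = 0$ almost surely and both sides of the claimed inequality vanish, so throughout we may assume $\E[h_T^4] > 0$ (and hence $\E[X^3] > 0$).

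First I would split $X^2 = X^{1/2}\cdot X^{3/2}$ and apply Cauchy--Schwarz to the pair $X^{1/2}, X^{3/2}$, which gives $\E[h_T^2] = \E[X^{1/2}X^{3/2}] \le \E[X]^{1/2}\,\E[X^3]^{1/2}$. Squaring and rearranging yields the intermediate bound $\E[|h_T|] \ge \E[h_T^2]^2 / \E[|h_T|^3]$.

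Next I would bound the denominator from above. Writing $|h_T|^3 = (h_T^4)^{3/4}$ and using that $t \mapsto t^{3/4}$ is concave on $[0,\infty)$, Jensen's inequality gives $\E[|h_T|^3] = \E[(h_T^4)^{3/4}] \le \E[h_T^4]^{3/4}$. Substituting this into the intermediate bound produces exactly $\E[|h_T|] \ge \E[h_T^2]^2 / \E[h_T^4]^{3/4}$, as claimed.

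I do not expect any genuine obstacle: both inequalities are entirely standard. The only points requiring a little care are that the exponents in the two steps are chosen precisely so that they compose correctly ($\tfrac12 + \tfrac32 = 2$ in the Cauchy--Schwarz step, and $4 \cdot \tfrac34 = 3$ in the Jensen step), and that the degenerate case $\E[h_T^4] = 0$ is handled separately. One could instead invoke log-convexity of $t \mapsto \log\E[X^t]$ at $t = 2 = \tfrac23\cdot 1 + \tfrac13\cdot 4$, which would even give the slightly stronger bound $\E[|h_T|] \ge \E[h_T^2]^{3/2}/\E[h_T^4]^{1/2}$, but the Cauchy--Schwarz route above is simpler and already yields the form used in the rest of the argument.
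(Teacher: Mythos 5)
Your proof is correct and follows essentially the same route as the paper: Cauchy--Schwarz applied to the pair $|h_T|^{1/2}$, $|h_T|^{3/2}$ to get $\E[h_T^2]^2 \leq \E[|h_T|]\cdot\E[|h_T|^3]$, followed by the Jensen (power-mean) bound $\E[|h_T|^3] \leq \E[h_T^4]^{3/4}$. The extra care you take with the degenerate case $\E[h_T^4]=0$ is a small refinement the paper omits.
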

\begin{Proof}
Let the random variables $X, Y$ be defined as $X := |h_T|^{1/2}, Y := |h_T|^{3/2}$.  By Cauchy-Schwartz,

\begin{align*}
\E[XY]^2  & \leq & \E[X^2] \cdot \E[Y^2] \\
\implies \ \E[h_T^2]^2  & \leq & \E[|h_T|] \cdot \E[|h_T|^3] \\
& \leq & \E[|h_T|] \cdot \E[h_T^4]^{3/4} \\
\implies \ \E[|h_T|] & \geq & \frac{\E[h_{T}^2]^2}{\E[h_{T}^4]^{3/4}} 
\end{align*} 
\end{Proof}

Thus, we can say that 

\begin{align*}
\E[h_T^2] & \geq 2 \E[h_{T/2}^2] + \delta \sqrt{2T} \E[|h_{T/2}|] \\
& \geq 2 \E[h_{T/2}^2] + \delta \sqrt{2T} \frac{\E[h_{T/2}^2]^2}{\E[h_{T/2}^4]^{3/4}} \\
& =  2 \E[h_{T/2}^2] + \delta \sqrt{2T} \sqrt{\E[h_{T/2}^2]} \left(\frac{\E[h_{T/2}^2]}{\E[h_{T/2}^4]}\right)^{3/4} 
\end{align*}

First, let's complete the proof assuming that $\frac{\E[h_{T}^2]^2}{\E[h_{T}^4]} \geq C$ for all $T$ where $C$ is an absolute constant.  Let's substitute $g_T^2 := \E[h_T^2]/T$. Then,

\begin{align*}
\E[h_T^2] & \geq & 2 \E[h_{T/2}^2] + \Omega(\delta) \sqrt{2T} \sqrt{\E[h_{T/2}^2]} \\
\implies \ T \cdot g_T^2 & \geq & 2 \cdot (T/2) \cdot g_{T/2}^2  + \Omega(\delta) \cdot \sqrt{2T} \cdot \sqrt{T/2} \cdot \sqrt{g_{T/2}^2} \\
\implies \ g_T^2 & \geq & g_{T/2}^2 + \Omega(\delta) \cdot g_{T/2} \\
& = & (g_{T/2} + \Omega(\delta))^2 - O(\delta^2) \\
\implies \ g_T & \geq & g_{T/2} + \Omega(\delta)
\end{align*}

For the base case, we have $\E[h_l^2] = l$, thus $g_l^2 = 1$.  Thus, 

$$ g_T \geq 1 + \Omega(\delta) \cdot \log (T/l) = 1 + \Omega(\delta) \cdot \log T^{1/4}  = 1 + \Omega(\delta \log T)$$

Thus, $\E[h_T^2] \geq T \cdot g_T^2 = T (1 + \Omega(\delta \log T)^2)$.  By Lemma \ref{cauchy} and Lemma \ref{ratio}, 
this implies $\E[|h_T|] \geq \sqrt{T} (1 + \Omega(\delta \log T))$.  These statements together imply both the guarantees we set out to prove about $D'$.

It remains to prove the following lemma.

\begin{lemma} \label{ratio}
$\frac{\E[h_{T}^2]^2}{\E[h_{T}^4]} \geq C$ for all $T$ where $C$ is an absolute constant.
\end{lemma}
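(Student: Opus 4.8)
The plan is to prove, by induction on $T = 2^{i}l$, the quantitative statement $\E[h_{T}^{4}] \le 4\,\E[h_{T}^{2}]^{2}$, which is exactly the assertion of the lemma with $C = 1/4$ (the same argument works with $4$ replaced by any constant strictly larger than $3$, at the price of a smaller threshold on $\delta$). The base case $T = l$ is immediate: $h_{l}$ is the sum of $l$ independent Rademacher variables, so $\E[h_{l}^{2}] = l$ and $\E[h_{l}^{4}] = 3l^{2} - 2l \le 3l^{2} \le 4l^{2}$.

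For the inductive step, recall from the construction of \A{l} that $h(S_{T}) = a + b + R$, where $a := h(A_{T/2})$ and $b := h(B_{T/2})$ are independent copies of $h(S_{T/2}) \sim D'(T/2)$, and $R = \pm c$ is the sign correction, with $c := \delta\sqrt{T/2}$ a \emph{deterministic} quantity, $R$ having the same sign as $a$, and $b$ independent of the pair $(a,R)$. Two elementary facts drive the computation. First, $D'(n)$ is invariant under flipping all bits, which negates the height, so $a$ and $b$ have symmetric distributions and all their odd moments vanish. Second, since $R$ has the same sign as $a$, we have $|a + R| = |a| + c$. Writing $m_{k} := \E[|h_{T/2}|^{k}]$ and expanding $h_T = u + b$ with $u := a+R$ --- first against $b$ (symmetric, independent of $u$, so the odd cross terms drop), then using $u^{2} = (|a|+c)^{2}$ and $u^{4} = (|a|+c)^{4}$ --- one obtains the exact recursions $\E[h_{T}^{2}] = 2m_{2} + 2cm_{1} + c^{2}$ (consistent with the $\E[h_T^2]$ recursion already displayed above) and $\E[h_{T}^{4}] = 2m_{4} + 6m_{2}^{2} + 4cm_{3} + 12cm_{1}m_{2} + 12c^{2}m_{2} + 4c^{3}m_{1} + c^{4}$.

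Now I would substitute into the second recursion the interpolation bounds $m_{1} \le \sqrt{m_{2}}$ (Jensen) and $m_{3} \le \sqrt{m_{2}m_{4}}$ (Cauchy--Schwarz), together with the inductive hypothesis $m_{4} \le 4m_{2}^{2}$, while bounding $\E[h_{T}^{2}]^{2} \ge 4m_{2}^{2}$ from the first recursion. Dividing the target inequality $\E[h_{T}^{4}] \le 4\E[h_{T}^{2}]^{2}$ through by $m_{2}^{2}$, everything collapses to the single one-variable inequality $20\lambda + 12\lambda^{2} + 4\lambda^{3} + \lambda^{4} \le 2$, where $\lambda := c/\sqrt{m_{2}}$. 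The crucial point is that $\lambda$ is small \emph{uniformly in $T$}: from $\E[h_{T/2}^{2}] \ge 2\E[h_{T/4}^{2}]$ down to the base case $\E[h_{l}^{2}] = l$ one gets $m_{2} = \E[h_{T/2}^{2}] \ge T/2$, hence $\lambda = c/\sqrt{m_{2}} \le \delta\sqrt{T/2}/\sqrt{T/2} = \delta$. So for $\delta$ below an absolute constant (e.g.\ $\delta \le 1/11$) the inequality in $\lambda$ holds and the induction closes, giving the lemma with $C = 1/4$.

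The only real subtlety is the one just highlighted: the correction $c$ itself grows like $\sqrt{T}$, so it is not \emph{a priori} negligible; what saves us is that it grows at the same rate as the typical deviation $\sqrt{m_{2}} = \Theta(\sqrt{T})$, so the relative perturbation $\lambda = c/\sqrt{m_{2}}$ stays $O(\delta)$ at \emph{every} scale, which is exactly what lets one absolute constant $C$ survive all $\Theta(\log T)$ levels of the recursion. The remaining pieces --- deriving the exact fourth-moment recursion and verifying the polynomial inequality in $\lambda$ --- are routine.
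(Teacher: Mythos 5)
Your proof is correct and follows essentially the same route as the paper: both expand $\E[h_T^4]$ via the recursion $h_T = h(A_{T/2}) + h(B_{T/2}) + R$, control the cross terms by Cauchy--Schwarz/Jensen bounds on $m_1, m_3$, and close a scale-invariant recursion using $\E[h_{T/2}^2] \ge T/2$ so that $c/\sqrt{m_2} = O(\delta)$ uniformly in $T$. Your packaging is slightly cleaner than the paper's --- a direct induction on the explicit bound $m_4 \le 4 m_2^2$ instead of the paper's ratio recursion $r_T \le (3/4) r_{T/2} + O(1)$, the $|a+R| = |a| + c$ device that turns a three-term expansion into a two-term one, and you even pick up the cross term $12 c m_1 m_2$ that the paper's displayed expansion inadvertently drops (harmlessly, since it contributes only $O(\delta)\, m_2^2$).
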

\begin{Proof}
Recall that for $S_T$ drawn according to \A{l}$(T)$, we have $h(S_T) = h(A_{T/2}) + h(B_{T/2}) + R$.  We already saw that $\E[h_T^2] \geq 2 \E[h_{T/2}^2]$.
Let $r_T := \frac{\E[h_{T}^4]}{\E[h_{T}^2]^2}$.  We need to show that $r_T \leq C$.  We have,

$$ r_T = \frac{\E[h_{T}^4]}{\E[h_{T}^2]^2} \geq \frac{\E[h_{T}^4]}{4 \E[h_{T/2}^2]^2} $$

Now, let's write a recurrence for $\E[h_{T}^4]$.  

\begin{align*}
\E[h_T^4] & = \E[h(S_T^4)] \\
& = \E[(h_A + h(s_1) + R)^4] \\
& = \E[h_A^4] + \E[h(s_1)^4] + \E[R^4] + 6 \E[h_A^2 h(s_1)^2] + 6 \E[h_A^2 R^2] + \\ & 6 \E[h(s_1)^2 R^2] + 4 \E[h_A R^3] + 4 \E[h_A^3 R] \\
& = 2 \E[h_{T/2}^4] + \delta^4 (T/2)^2 + 6 \E[h_{T/2}^2]^2 + 12 \delta^2 (T/2) \E[h_{T/2}^2] + \\ & 4 \delta^3 (T/2)^{3/2} \E[|h_{T/2}|] + 4 \delta \sqrt{T/2} \E[|h_{T/2}|^3] \\
& \leq 2 \E[h_{T/2}^4] + O(\delta^4 T^2) + 6 \E[h_{T/2}^2]^2 + O(\delta^2 T \E[h_{T/2}^2]) + \\ & O(\delta^3 T^{3/2}) \sqrt{\E[h_{T/2}^2]} + O(\delta \sqrt{T}) \E[h_{T/2}^4]^{3/4} \\
\end{align*}

Dividing both sides by $4 \E[h_{T/2}^2]^2$ and using the fact that $\E[h_T^4] \geq \E[h_T^2]^2 \geq T^2$, we get:-

\begin{align*}
r_T & \leq \frac{\E[h_{T}^4]}{4 \E[h_{T/2}^2]^2} \\
& \leq  (1/2) \cdot r_{T/2} + O(\delta^4) + (3/2) + O(\delta^2) + O(\delta^3) + O(\delta) r_{T/2}^{3/4} \\
& \leq (3/4) \cdot r_{T/2} + O(1)
\end{align*}

which is clearly bounded above by an absolute constant for all $T$.

\end{Proof}

Thus, the theorem is proved.

\end{Proof}

\begin{theorem} \label{unpredictable-DFBM}
The distribution $D := $ \D{l} is $O(\delta)$-unpredictable in a weak sense i.e. $\E_{s, I}[A_s(I)] \leq O(\delta) \cdot h_{|I|}$ where 
$h_n := \E_{s \sim D(n)}[|h(s)|]$.  Here $s$, $I$ and $A$ are as in Definition \ref{unpredictable}.  Note that the expectation on the left is taken over $I$ as well as the prefix
$s$ as opposed to Definition \ref{unpredictable} where $s$ is fixed and the expectation is over $I$ only.
\end{theorem}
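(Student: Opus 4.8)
\begin{Proof} [Plan]
The plan is to follow the proof of Theorem~\ref{unpredictable-OPT} but to exploit that here one only needs the \emph{average} over the prefix $s$; averaging is what rescues the argument, since for a fixed prefix in which the relevant ancestor blocks have atypically large height the payoff really is large (this is exactly why \D{l} fails to be $\delta$-unpredictable in the strong sense). Two reductions come first. It suffices to treat an aligned interval $I$: a general interval is a disjoint union of $O(\log T)$ aligned intervals whose lengths are, up to a factor of two, distinct powers of two, and by the geometric growth of $h_n$ shown in the last step one has $\sum_j h_{|I_j|}=O(h_{|I|})$, so this costs only a constant factor (the weak-sense analogue of Claim~\ref{align}). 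Also, one may work with the integer-valued variant \C{l} in place of \D{l}: the {\bf Augment}/``flip as many as possible'' branch fires with probability at most $T^{-10}$ at each of the $O(T)$ merges (Claim~\ref{claim-2}) and all heights are polynomially bounded, so $h_n$ and every expectation below are changed only by a negligible additive amount; alternatively one simply uses that the number of bits flipped at any merge never exceeds its nominal value $\delta|h(s_1)|$, which can only lower the payoff.

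Fix an aligned interval $I$ of length $m=2^k l$ and a prefix $s$, and write $\E[A_s(I)\mid s]=\sum_{t\in I}A_t\,\E[x_t\mid s]$, where $x_t$ is the final value of the $t$-th bit of $I$ and $A_t\in\{-1,1\}$ depends only on $s$. The structural point is that, conditioned on $s$, the only biased part of $x_t$ comes from the merges at levels $i>k$ at which $I$ lies in the \emph{right} half of its level-$i$ ancestor: there the flip direction is $\mathrm{sign}\,h(L_i)$, with $L_i$ (the left half of that ancestor) lying entirely inside $s$. The base bits and all merges internal to $I$ contribute nothing to $\E[x_t\mid s]$, since negating the base bits within any block is a measure-preserving involution that fixes $s$, negates that block's height, and hence negates every flip direction it induces. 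Therefore $\E[A_s(I)\mid s]\le O(1)\sum_{i>k}\E[\,\#\{\text{bits of }I\text{ flipped at the level-}i\text{ merge}\}\mid s\,]$, and since at the time of that merge $L_i$ is distributed as \C{l}$(2^{i-1}l)$, taking the expectation over $s$ as well gives, in the worst case for the position of $I$,
\[
\E_{s,I}[A_s(I)]\;\le\;O(\delta)\sum_{i>k}\frac{m}{2^{i-1}l}\,h_{2^{i-1}l}\;=\;O(\delta)\sum_{j\ge 0}\frac{h_{2^j m}}{2^j},
\]
where we used $\E|h(L_i)|=h_{2^{i-1}l}$ and that an aligned sub-block of length $m$ receives, in expectation, a $\Theta(m/2^{i-1}l)$ fraction of the $\delta|h(L_i)|$ flipped bits.

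So it remains to prove $\sum_{j\ge 0}h_{2^j m}/2^j=O(h_m)$, i.e.\ that $h_n$ is sub-doubling. Let $r=1+\delta$ be as in Claim~\ref{claim-1}, let $S_n$ denote a sample from \C{l}$(n)$, and write $n=2^a l$. A direct computation from the formula of Claim~\ref{claim-1} gives $\E[h(S_n)^2]=l(1+r^2)^a$ and $\E[h(S_n)^4]=3l^2(1+r^2)^{2a}-2l(1+r^4)^a\le 3\,\E[h(S_n)^2]^2$; together with the Cauchy--Schwarz inequality of Claim~\ref{cauchy} this yields $h_n=\E|h(S_n)|\ge 3^{-3/4}\sqrt{\E[h(S_n)^2]}$, so $h_n$ and the root-mean-square deviation $\sqrt{\E[h(S_n)^2]}$ agree up to an absolute constant. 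Now $h(S_{2n})$ has the law of $rX+Y$ with $X,Y$ i.i.d.\ symmetric copies of $h(S_n)$ (Claim~\ref{claim-1}), and $|rX+Y|=r|X|+|Y|-2\min(r|X|,|Y|)\,\mathbf{1}[\mathrm{sign}\,X\neq\mathrm{sign}\,Y]$, so $h_{2n}=(1+r)h_n-2\,\E[\min(r|X|,|Y|)\,\mathbf{1}[\mathrm{sign}\,X\neq\mathrm{sign}\,Y]]$; the fourth-moment bound and Paley--Zygmund give $\Pr[|X|\ge h_n/\sqrt 2]=\Omega(1)$, so the subtracted term is $\Omega(h_n)$, and since $1+r=2+\Theta(\delta)$ we get $h_{2n}\le(2-c_0)h_n$ for an absolute constant $c_0>0$ whenever $\delta$ is below an absolute constant. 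Hence $h_{2^j m}\le(2-c_0)^j h_m$ and $\sum_{j\ge 0}h_{2^j m}/2^j=O(h_m)$, which with the displayed bound and the reduction to aligned intervals gives $\E_{s,I}[A_s(I)]=O(\delta)\,h_{|I|}$, as required.

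I expect the main difficulty to be twofold. The symmetry argument in the second paragraph is load-bearing and a bit delicate: $I$ contains about $m/l$ internal merges, and their naive contribution $\Theta(\delta m/\sqrt l)$ would swamp $h_m\approx\sqrt m\,(m/l)^{\Theta(\delta)}$, so one must really argue that they induce no bias and that only the $O(\log T)$ ``ancestor'' merges survive. The other delicate point is the sub-doubling estimate $h_{2n}\le(2-\Omega(1))h_n$: on its own the recursion only says the root-mean-square deviation grows by a factor $\sqrt{2+\Theta(\delta)}$ per level, which is not enough to control the growth of the mean deviation $h_n$ directly, so the explicit fourth-moment bound (hence the anti-concentration of $h(S_n)$) seems indispensable.
\end{Proof}
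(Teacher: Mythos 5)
Your proof follows the same overall plan as the paper's sketch for this theorem: reduce to aligned intervals, pass to a surrogate model (you use \C{l}, the paper uses the Bernoulli-flip variant --- the two are morally equivalent), bound $\E_{s}[\,\E[A_s(I)\mid s]\,]$ by a sum over dyadic scales of the expected number of biased flips that a merge injects into $I$, and show that this sum is $O(\delta\,h_{|I|})$ from the growth profile of $h_n$. Where you differ is in filling two gaps that the sketch leaves open. First, you argue via the sign-flip involution that for an aligned $I$ only the \emph{ancestor} merges (levels strictly above $\log|I|$, where $I$ sits in the right half) can bias $\E[x_t\mid s]$. The paper's bound also carries a ``first sum'' over levels $\le\log|I|$; for an aligned interval that sum is vacuous (a suffix block of $s$ shorter than $I$ is never the left sibling of any part of $I$), so this is harmless over-counting and your picture is the cleaner one. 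Second, and this is the substantive new ingredient, the convergence of the remaining sum $\sum_{j\ge 0}h_{2^j m}/2^j$ needs a quantitative sub-doubling estimate $h_{2n}\le(2-c_0)h_n$; the paper invokes ``$h_T=T^{1/2+\Theta(\delta)}\le T^{3/4}$ (Theorem~\ref{deviation-DFBM})'', but Theorem~\ref{deviation-DFBM} is a constant-probability deviation statement that does not by itself control the ratio $h_{2n}/h_n$ of \emph{mean} deviations. Your explicit fourth-moment computation from Claim~\ref{claim-1} combined with Paley--Zygmund supplies exactly this, and makes the argument self-contained where the paper's sketch is not.

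One small misattribution to fix: the weak-sense reduction to aligned intervals needs $\sum_j h_{|I_j|}=O(h_{|I|})$, which requires $h_{n}\ge(1+\Omega(1))\,h_{n/2}$, i.e.\ geometric \emph{growth}, not the sub-doubling bound you cite as ``the last step.'' That too follows from your fourth-moment bound, which pins $h_n$ to within the absolute factor $3^{3/4}$ of $\sqrt{\E[h(S_n)^2]}$, and the latter is exactly geometric in the recursion depth; so nothing is missing --- just point to that estimate rather than to the sub-doubling lemma.
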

\begin{Proof} [Sketch]

It can be shown that the process \D{l} has very similar properties if in Step $2$ of the construction, instead of changing
exactly $\delta \cdot h(s_1)$ bits in $s_2$ we change each bit (of appropriate sign) in $s_2$ with probability $\frac{\delta \cdot h(s_1)}{n}$.  
Here we assume this fact without proving it.

We need to show that $\E_{s, I}[A_s(I)] \leq O(\delta) \cdot h_{|I|}$.  We may assume that $I$ is an aligned interval (Claim \ref{align}).
Let $s(i)$ be the suffix of length $i$ in $s$.  Then,

\begin{align*}
& \E_{s}[ \E_{I} [ A_{s}(I) ] \\ 
 \leq & \sum_{i = 0}^{\log |I|} \delta \cdot \E_{s} [|h(s(2^i))|] + \sum_{i = \log |I| + 1}^{\infty} \frac{\delta \cdot \E_{s} [|h(s(2^i)|] \cdot |I|}{2^i} \\
\leq & O(\delta) \cdot \E_{s} [|h(s(|I|))|] +  \\ & O(\delta) \cdot \E_s [|h(s(|I|))|] (1/2^{1/4} + 1/4^{1/4} + \ldots) \\
\leq & O(\delta) \cdot \E_{s} [|h(s(|I|))|] \\
= & O(\delta) \cdot h_{|I|}
\end{align*}

where the second inequality uses $h_T = T^{1/2 + \Theta(\delta)} \leq T^{3/4}$ (Theorem  \ref{deviation-DFBM}).

\end{Proof}

\begin{Claim} \label{claim-1}
For $n = 2^i \cdot l$, $S \sim $\C{l}$(n)$, 
\begin{equation} \label{unroll}
h(S) = \displaystyle\sum_{U \subseteq [i]} r^{|U|} h(X_U)
\end{equation}
where $r = (1 + \delta)$ and each $X_U$ is independently and uniformly distributed in $\{-1, 1\}^l$.

\end{Claim}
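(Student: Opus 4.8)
The plan is to prove the identity by induction on $i$, driven by an \emph{exact} height recurrence for the augmented process $\C{l}$. The key point --- and the reason the claim is stated for $\C{l}$ rather than for $\D{l}$ --- is that the {\bf Augment} step guarantees that the height of the second block increases by exactly $\delta\cdot h(s_1)$, whether or not $s_2$ happened to contain enough bits of the right sign to flip. Concretely: if $S \sim \C{l}(2n)$ is built from independent $S^{(1)},S^{(2)}\sim\C{l}(n)$ and $s_2'$ is the modified second block, then $h(s_2') = h(S^{(2)}) + \delta\cdot h(S^{(1)})$ in every case (each flip of a $-1$ to a $+1$, or each ``$+2$'' in the augmentation, changes the height by $\pm 2$, and the number of such operations is chosen precisely so the net change is $\delta\cdot h(S^{(1)})$; the sign $h(S^{(1)})<0$ case is symmetric, and $h(S^{(1)})=0$ is trivial). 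Hence $h(S) = h(S^{(1)}) + h(s_2') = r\cdot h(S^{(1)}) + h(S^{(2)})$ with $r = 1+\delta$. Note this identity only involves the \emph{heights} of the two blocks, so the fact that the bit positions modified in $s_2'$ depend on $s_1$ is irrelevant --- the only randomness that enters is $h(S^{(1)})$ and $h(S^{(2)})$, which are independent by construction.

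The base case $i=0$, i.e.\ $n = l$, is immediate: $\C{l}(l)$ is by definition the uniform distribution on $\{-1,1\}^l$, and the right-hand side of the claimed identity with $i=0$ is the single term $r^0 h(X_\emptyset) = h(X_\emptyset)$ for $X_\emptyset$ uniform in $\{-1,1\}^l$.

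For the inductive step, assume the statement for $i-1$. Write $S\sim\C{l}(2^i l)$ as coming from independent $S^{(1)},S^{(2)}\sim\C{l}(2^{i-1}l)$ with $h(S) = r\cdot h(S^{(1)}) + h(S^{(2)})$ as above. Applying the inductive hypothesis to each block gives families $\{X^{(1)}_U\}_{U\subseteq[i-1]}$ and $\{X^{(2)}_U\}_{U\subseteq[i-1]}$, each family independent and uniform in $\{-1,1\}^l$, and the two families mutually independent since $S^{(1)},S^{(2)}$ use disjoint randomness. Then
\[
h(S) \;=\; \sum_{U\subseteq[i-1]} r^{|U|+1}\, h(X^{(1)}_U) \;+\; \sum_{U\subseteq[i-1]} r^{|U|}\, h(X^{(2)}_U).
\]
Re-index over subsets of $[i]$: a subset $V$ with $i\notin V$ lies in $[i-1]$ and contributes $r^{|V|} h(X^{(2)}_V)$; a subset $V=U\cup\{i\}$ with $i\in V$ contributes $r^{|U|+1}h(X^{(1)}_U) = r^{|V|}h(X^{(1)}_{V\setminus\{i\}})$. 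Setting $X_V := X^{(2)}_V$ when $i\notin V$ and $X_V := X^{(1)}_{V\setminus\{i\}}$ when $i\in V$ yields $h(S)=\sum_{V\subseteq[i]} r^{|V|} h(X_V)$ with the $2^i$ variables $\{X_V\}$ independent and uniform in $\{-1,1\}^l$, completing the induction.

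There is no real obstacle here; the only point needing care is to verify that $h(S) = r\cdot h(S^{(1)}) + h(S^{(2)})$ holds \emph{exactly} for $\C{l}$, which is exactly what the augmentation buys and which fails for $\D{l}$ when a block runs out of bits to flip. One should also note that the claim is an identity in distribution for the random variable $h(S)$ (equivalently, a pointwise identity under the natural coupling in which the $X_V$ are the base sequences drawn during the construction), so nothing need be asserted about the joint law of $S$ itself with the $X_V$.
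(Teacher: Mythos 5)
Your proof is correct and takes essentially the same approach as the paper's: both proceed by induction on $i$, both use the exact height recurrence $h(S) = r\,h(S_1) + h(S_2)$ that the {\bf Augment} step guarantees, and both close the induction by re-indexing subsets of $[i-1]$ to subsets of $[i]$ via presence or absence of the top element. (Your write-up is in fact slightly cleaner: the paper's displayed computation carries a typo --- an extra factor of $r$ on the second sum --- and you also state explicitly the independence structure and why the argument requires $\C{l}$ rather than $\D{l}$, points the paper leaves implicit.)
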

\begin{Proof}
We will prove the claim by induction on $i$.  For $i=0$, the claim clearly holds.

Assume that the claim holds for $i=k$, and let $n := 2^{i + 1} \cdot l$. Let $S = S_1 \cdot S_2'$ be the sequence produced by the distribution as described above where $S_1$ and $S_2'$ 
are random sequences of length $n/2$ each. 
 Because of step {\bf Augment}, it is clear that
$h(S_2') = h(S_2) + \delta h(S_1)$ which means $h(S) = (1 + \delta) h(S_1) + h(S_2)$.  Thus,

\begin{align*}
h(S) & = r h(S_1) + h(S_2) \\
& = \displaystyle \sum_{U \subseteq [k]} r (r^{|U|} h(X_U)) +  \displaystyle\sum_{V \subseteq [k]} r (r^{|V|} h(X_V)) \\
& = \displaystyle \sum_{U \subseteq [k]} r^{|U \cup \{k+1\}|} h(X_{U \cup \{k+1\}}) +  \displaystyle\sum_{V \subseteq [k]} r (r^{|V|} h(X_V)) \\
& = \displaystyle \sum_{U \subseteq [k + 1]}  (r^{|U|} h(X_U)) 
\end{align*}
\end{Proof}

\begin{lemma} \label{apply-BE}
Median of $|h($\C{l}$)|$ is $ n^{1 + \Omega(\delta)}$.
\end{lemma}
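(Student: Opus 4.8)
The plan is to feed the exact expansion of Claim~\ref{claim-1} into the Berry--Esseen theorem (Theorem~\ref{berry-esseen}). By Claim~\ref{claim-1}, for $S\sim\C{l}(n)$ with $n=2^i\cdot l$ we have $h(S)=\sum_{U\subseteq[i]}r^{|U|}h(X_U)$ with $r=1+\delta$ and the $X_U$ mutually independent and uniform on $\{-1,1\}^l$. Expanding each $h(X_U)=\sum_{j=1}^{l}\xi_{U,j}$ into its $l$ independent $\pm1$ coordinates, $h(S)=\sum_{U,j}r^{|U|}\xi_{U,j}$ becomes a weighted sum of $N=2^i l=n$ independent Rademacher variables, a coordinate lying in block $U$ carrying weight $r^{|U|}$. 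This is the object to which Berry--Esseen will be applied.

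First I would pin down the variance. Since the $\xi_{U,j}$ are independent and mean zero, $\sigma^2:=\E[h(S)^2]=\sum_{U\subseteq[i]}l\,r^{2|U|}=l\sum_{k=0}^{i}\binom{i}{k}r^{2k}=l(1+r^2)^i$. Writing $1+r^2=2+2\delta+\delta^2=2(1+\delta+\delta^2/2)$ and $2^i=n/l$, this equals $\sigma^2=n\,(1+\delta+\delta^2/2)^{\log_2(n/l)}=n\cdot(n/l)^{\log_2(1+\delta+\delta^2/2)}$. Since $\log_2(1+\delta+\delta^2/2)=\Theta(\delta)$ for small $\delta$, and $l$ is at most subpolynomial in $n$ in the regime of Theorem~\ref{deviation-DFBM} (there $l=\Theta(\log T)$ and $n\le T^{O(1)}$), we get $\E[h(S)^2]=n^{1+\Theta(\delta)}$, hence $\sigma=n^{1/2+\Theta(\delta)}$.

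Next I would invoke the Lyapunov form of Berry--Esseen for the normalized weighted Rademacher sum $h(S)/\sigma$: its Kolmogorov distance to $N(0,1)$ is at most $C\rho/\sigma^3$, where $\rho:=\sum_{U,j}(r^{|U|})^3=l(1+r^3)^i$. Then $\rho/\sigma^3=l^{-1/2}\,((1+r^3)/(1+r^2)^{3/2})^i$, and since at $r=1$ the base equals $2/2^{3/2}=2^{-1/2}<1$, by continuity it stays strictly below $1$ for all sufficiently small $\delta$; hence $\rho/\sigma^3\le l^{-1/2}\to 0$ as $n\to\infty$ (in fact it decays geometrically in $i=\log_2(n/l)$). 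Consequently $h(S)/\sigma$ is, uniformly in $x$, within $o(1)$ of the standard Gaussian CDF, so $\Pr[\,|h(S)|\ge \sigma/2\,]\ge\Pr[|Z|\ge 1/2]-o(1)>1/2$ and likewise $\Pr[\,|h(S)|\le 2\sigma\,]>1/2$, where $Z\sim N(0,1)$. This sandwiches the median of $|h(S)|$ between $\sigma/2$ and $2\sigma$, so it equals $\Theta(\sigma)=n^{1/2+\Theta(\delta)}$; in particular it is $n^{1/2+\Omega(\delta)}$, equivalently a constant multiple of $\sqrt{\E[h(S)^2]}$ with $\E[h(S)^2]=n^{1+\Theta(\delta)}$, which is the asserted bound.

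The one delicate point is the Berry--Esseen error estimate: the weights $r^{|U|}$ are extremely non-uniform --- the single block $U=[i]$ carries weight $r^i=n^{\Theta(\delta)}$, exponentially larger than the $\Theta(1)$-weight blocks --- so one must verify that the Lyapunov ratio $\rho/\sigma^3$ still vanishes despite this imbalance. This reduces to the polynomial inequality $(1+r^2)^3>(1+r^3)^2$ (which at $r=1$ reads $8>4$ and persists for $r\in[1,1+\delta_0]$); alternatively, the crude bound $r^{3|U|}\le r^i r^{2|U|}$ gives $\rho\le r^i\sigma^2$, hence $\rho/\sigma^3\le r^i/\sigma=n^{-1/2+O(\delta)}=o(1)$ for $\delta$ below a fixed constant, which also suffices. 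Everything else --- the variance identity and converting the near-Gaussian CDF statement into a two-sided bound on the median --- is routine.
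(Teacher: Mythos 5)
Your proposal is correct and follows essentially the same route as the paper's proof: expand $h(S)$ via Claim~\ref{claim-1}, compute $\sigma^2=l(1+r^2)^i=n^{1+\Theta(\delta)}$, verify the Berry--Esseen error is $n^{-1/2+O(\delta)}=o(1)$, and read off the median of $|h(S)|$ as $\Theta(\sigma)$; the only (harmless) difference is that you apply the Lyapunov form to the $n$ individual Rademacher coordinates rather than to the $2^i$ block sums $r^{|U|}h(X_U)$, and you are somewhat more careful in checking that the non-uniform weights do not spoil the Lyapunov ratio. Note that both your argument and the paper's actually yield median $=\Theta(\sigma)=n^{1/2+\Theta(\delta)}$, so the exponent $1+\Omega(\delta)$ in the lemma statement should be read as $1/2+\Omega(\delta)$ (equivalently, as a statement about $\sigma^2$), exactly as you inferred.
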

\begin{Proof}
In the notation of Theorem \ref{berry-esseen} we think of each term in Equation \ref{unroll} as a random variable.  There are exactly $2^i$ terms.  It is clear that 
$\E[X_S] = 0$ for all $S \subseteq [i]$.  Also, $\E[X_S^2] = r^{2|S|} \cdot l$ and $\E[|X_S^3|] = r^{3 |S|} \cdot l$, 

$$ \sigma^2 := \sum_S r^{2 |S|} \cdot l = (r^2 + 1)^i \cdot l \approx (2 + 2 \delta)^i \cdot l \approx n^{1 + \Theta(\delta)} $$ 

Also, $\max_S \rho_S/\sigma_S = \max_S r^{|S|} = r^i \approx n^{\Theta(\delta)}$.
Thus, we have

\begin{equation*}
|S - N(0, \sigma^2)| \leq n^{-1/2} \cdot n^{O(\delta)}  \leq n^{- \Omega(1)}
\end{equation*}

Thus, the distribution of $|h($\C{l}$)|$ is very close to a half-normal distribution with variance $\sigma^2$ and thus the median of $|h($\C{l}$)|$ is
$\Omega(\sigma) = n^{1/2 + \Omega(\delta)}$.
\end{Proof}

 \begin{Claim} \label{claim-2}
 The probability that step {\bf Augment} is executed at a particular step is at most $T^{-10}$.
 \end{Claim}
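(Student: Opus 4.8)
The plan is to isolate two ``good'' events — that the base blocks of the two length‑$n$ sequences being combined are roughly balanced, and that every ``left‑half height'' arising anywhere inside the recursive construction of $s_1$ and $s_2$ is not much larger than its standard deviation — and to show that together they force \textbf{Augment} not to fire at the given step, while each fails with probability at most $T^{-11}$. Fix the step in question: it combines $s_1,s_2\sim$\C{l}$(n)$ with $n=2^{j}l\le T/2$, and (WLOG $h(s_1)\ge 0$) executes \textbf{Augment} only if $s_2$ has fewer than $\delta\cdot h(s_1)$ entries equal to $-1$. Let $E_1$ be the event that each of the $\le 2n/l$ base blocks feeding $s_1,s_2$ has at least $0.3\,l$ entries equal to $-1$ and at least $0.3\,l$ equal to $+1$; a Chernoff bound gives failure probability $e^{-\Omega(l)}$ per block, so with the constant in $l=\Theta(\log T)$ large enough, $\Pr[\overline{E_1}]\le T^{-11}$. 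Let $E_2$ be the event that for every internal node $v$ of the recursion trees of $s_1$ and $s_2$ (including the step $v_0$ itself) one has $|h(\text{left child at }v)|\le B_{n_v}:=\sqrt{2K\ln T\cdot l\,(1+r^2)^{i_v}}$, where $n_v=2^{i_v}l$ is the half‑length at $v$ and $r=1+\delta$. By Claim~\ref{claim-1} the left‑child height is distributed as $\sum_{U\subseteq[i_v]}r^{|U|}h(X_U)$, a sum of independent contributions each bounded in absolute value by $r^{|U|}$, so Hoeffding's inequality gives a tail $2e^{-B_{n_v}^2/(2l(1+r^2)^{i_v})}=2e^{-K\ln T}$, and a union bound over the $\le 2T$ nodes yields $\Pr[\overline{E_2}]\le T^{-11}$ for a suitable constant $K$. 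Note $l(1+r^2)^{i_v}=n_v(n_v/l)^{\Theta(\delta)}$, so $B_{n_v}=\Theta(\sqrt{K\log T})\,\sqrt{n_v}\,(n_v/l)^{\Theta(\delta)}$, i.e. the standard deviation from Lemma~\ref{apply-BE} inflated only by a $\sqrt{\log T}$ factor.

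The core step is to show $E_1\cap E_2$ implies no \textbf{Augment} at the step. I would prove, by induction on node size, that under $E_1\cap E_2$ \textbf{Augment} is never executed inside the construction of $s_1$ or $s_2$ and that the sequence at any node of length $N$ keeps at least $0.3N-F_N$ entries of each sign, where $F_N$ is the total number of bits flipped in that node's subtree. Since each flip at a node $w$ uses $\delta\cdot|h(\text{left child at }w)|\le\delta B_{n_w}$ bits, grouping by level shows level $j$ of a subtree of size $N$ contributes at most $\frac{N}{2^{j}l}\,\delta B_{2^{j-1}l}=\Theta(\delta)\sqrt{K\log T}\,\frac{N}{\sqrt l}\,2^{-j/2}\,2^{\Theta(\delta)j}$; because $2^{\Theta(\delta)}<\sqrt 2$ for $\delta$ a small constant, the geometric series converges and $F_N=O\!\left(\delta\sqrt{K\log T}\,N/\sqrt l\right)=O(\delta N)$ using $l=\Theta(\log T)$, which is $\le 0.05N$ for $\delta$ small enough. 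Hence every node retains $\ge 0.25N$ entries of each sign. Finally, at any node of half‑length $n_v$ the number of bits one must flip into the right child is at most $\delta B_{n_v}=\Theta(\delta)\sqrt{K\log T}\,\sqrt{n_v}\,(n_v/l)^{\Theta(\delta)}$, which is $\le 0.25\,n_v$: the potentially large factor $(n_v/l)^{\Theta(\delta)}$ (as big as $T^{\Theta(\delta)}$ near the root) is dominated by $\sqrt{n_v}$ since the exponent $\Theta(\delta)$ is below $1/2$, and $\sqrt l=\Theta(\sqrt{\log T})$ absorbs the $\sqrt{\log T}$. So the needed bits are always available, \textbf{Augment} never fires, and in particular it does not fire at $v_0$; therefore $\Pr[\textbf{Augment at the step}]\le\Pr[\overline{E_1}]+\Pr[\overline{E_2}]\le 2T^{-11}\le T^{-10}$.

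The main obstacle is the level‑by‑level estimate of $F_N$. The subtlety is that each individual left‑child height can carry the blow‑up factor $(n_v/l)^{\Theta(\delta)}$, which is only $\Theta(1)$ near the leaves but $T^{\Theta(\delta)}$ near the root, so a crude union over nodes would not obviously give a bound below $n$; the point is that the number of nodes per level decays like $2^{-j}$ while a level‑$j$ height grows only like $2^{(1/2+\Theta(\delta))j}$, so the flip counts form a convergent geometric series dominated by the lowest levels, where the blow‑up factor is $\Theta(1)$. The base length $l=\Theta(\log T)$ is used twice — to make the per‑block Chernoff failure $T^{-\Omega(1)}$, and to make $\sqrt l$ cancel the $\sqrt{\log T}$ from the tail bounds — and its constant must be fixed only after $K$ and the $\Theta(\delta)$ exponent are chosen. (The corresponding claim for \O{l}/\A{l} is easier since there exactly $\delta\sqrt n$ bits are flipped at each step independently of the heights, so one gets $F_N\le\delta\sqrt T\log T$ with no geometric sum.)
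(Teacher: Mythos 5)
Your proof is correct, but it takes a genuinely different and more elaborate route than the paper's. The paper's own argument is a single Hoeffding application: using Claim~\ref{claim-1} it writes the height of the length-$n$ block as $\sum_U r^{|U|}h(X_U)$, computes $\sum(b-a)^2 = 4l\sum_U r^{2|U|} = n^{1+O(\delta)}$, notes that \textbf{Augment} can fire only when the height of the relevant sequence is within $\delta n$ of its maximum absolute value $n$, and then bounds $\Pr[|h|\ge n/2]\le 2\exp(-n^{1-O(\delta)})\le T^{-10}$ using $n\ge l=\Theta(\log T)$. That is a much cruder threshold ($n/2$, a constant fraction of the whole block) than your $B_{n_v}=\Theta(\sqrt{\log T}\,n_v^{1/2+O(\delta)})$, so the paper gets an enormously stronger per-step failure probability but does not explicitly track whether $s_1,s_2$ have already been mutated by earlier augmentations.

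What your proof buys: by conditioning on the two global events $E_1$ (balanced base blocks) and $E_2$ (all left-child heights bounded) and running the induction with the invariant that every node of length $N$ retains $\ge 0.25N$ entries of each sign, you handle explicitly the issue that the count of $-1$'s in $s_2$, not just $h(s_2)$, is what governs \textbf{Augment}, and that this count depends on flips made deeper in the recursion. You also get for free that \textbf{Augment} fires nowhere on $E_1\cap E_2$, so your argument already absorbs the later union bound over the $\le 2T$ steps that Theorem~\ref{deviation-DFBM} otherwise performs separately. The price is a substantially longer argument: a Chernoff bound per base block, a Hoeffding bound per internal node with a union bound, and the geometric level-by-level estimate of $F_N$ (the point that $(1+r^2)^j/4^j<1$ makes the series converge and keeps $\delta B_{n_v}\le 0.25\,n_v$ uniformly is the crux, and you state it correctly). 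Both proofs rely on $l=\Theta(\log T)$ with a constant chosen after $\delta$; yours makes that dependency explicit, which is a genuine improvement in rigor over the paper's terse write-up.
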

 \begin{Proof}
 This is a simple application of Theorem \ref{hoeffding}.  
 
 Let's say we are at the step where the length of the sequences is $n := 2^i \cdot l$.
 We consider random variables $Y_S := r^|S| \cdot X_S$ where $X_S$ is as in Lemma \ref{apply-BE}. Observe that a single random variable $Y_S$ is actually
 a sum of $l$ independent random variables each of which take values in $\{-r^{|S|}, r^{|S|}\}$.  Let us denote these random variables as $Y_{S, i}$ so that
 $ X_S = \sum_{i = 1}^l Y_{S, i}$.  Thus, in the notation of Theorem \ref{hoeffding},
  
 $$ \sum_{S, i} (b_{S, i} - a_{S, i})^2 = 4 \sum_{S, i} r^{2 |S|} = 4 l \sum_{S} r^{2 |S|} = n^{1 + O(\delta)} $$
 
 where $\sigma^2$ is as in Lemma \ref{apply-BE}.
 
 The step {\bf Augment} is executed only when $h_S \geq n - \delta n$ or $h_S \leq -(n - \delta n)$.  Thus, we have the bound
 
  \begin{equation*}
 \Pr[|h_S| \geq n - \delta n] \ \leq \Pr[|h_S| \geq n/2]  \
  \leq 2 \cdot \exp\(- \frac{n^2}{n^{1 + O(\delta)}}\) \
  \leq 2 \cdot \exp\(- \frac{l^2}{l^{1 + O(\delta)}}\) \
  \leq T^{-10}
 \end{equation*}
 
 as desired.
 
 \end{Proof}

\begin{Claim} \label{align-proof}
If distribution $D(T)$ is $\epsilon$-unpredictable with respect to all aligned intervals then it is $c \cdot \epsilon$-unpredictable with respect to all intervals,
where $c := \frac{\sqrt{2}}{\sqrt{2} - 1}$.
\end{Claim}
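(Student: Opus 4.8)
The plan is to reduce the claim to a purely combinatorial statement about dyadic decompositions of intervals. Fix an arbitrary interval $I\subseteq[1,T]$, a history $s$ of bits preceding $I$, and an algorithm $A$. First I would write $I$ as a left-to-right concatenation of aligned intervals $I=J_1\cup\cdots\cup J_m$ and argue that $\E[A_s(I)]\le\epsilon\sum_{t=1}^m\sqrt{|J_t|}$. The point is that $A$ fixes all of its predictions on $I$ from $s$ alone (Definition \ref{unpredictable}), so its prediction string restricted to $J_t$ is a fixed string; conditioning on the bits produced in $J_1,\dots,J_{t-1}$, which together with $s$ form the sequence $s_t$ immediately preceding $J_t$, the restriction of $A$ to $J_t$ is a legitimate non-adaptive interval algorithm run with history $s_t$. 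Since $J_t$ is aligned and $D$ is $\epsilon$-unpredictable with respect to aligned intervals, the conditional expected payoff on $J_t$ is at most $\epsilon\sqrt{|J_t|}$ for every value of $s_t$; summing over $t$ and removing the conditioning gives $\E[A_s(I)]\le\epsilon\sum_t\sqrt{|J_t|}$. Hence it suffices to find a decomposition of $I$ into aligned intervals with $\sum_t\sqrt{|J_t|}\le\frac{\sqrt2}{\sqrt2-1}\sqrt{|I|}$.

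For the decomposition I would use the canonical (``segment tree'') partition. If $I$ is itself aligned we are done. Otherwise, let $v$ be the smallest aligned interval containing $I$, with left and right halves $v_L,v_R$ each of length $|v|/2$; then $I\cap v_L$ is a nonempty \emph{suffix} of $v_L$ and $I\cap v_R$ a nonempty \emph{prefix} of $v_R$. A prefix of length $\ell$ of an aligned interval decomposes, by repeatedly taking the longest aligned prefix, into aligned pieces whose lengths are exactly the powers of two occurring in the binary expansion of $\ell$, and symmetrically for suffixes. Writing $\ell=\sum_i 2^{c_i}$ with $c_1>c_2>\cdots\ge 0$, the cost of such a piece is $f(\ell):=\sum_i 2^{c_i/2}$.

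The key estimate is $f(\ell)\le(\sqrt2+1)\sqrt\ell$. The crude geometric bound $f(\ell)\le 2^{c_1/2}\sum_{j\ge 0}2^{-j/2}=(2+\sqrt2)\,2^{c_1/2}\le(2+\sqrt2)\sqrt\ell$, using $2^{c_1}\le\ell$, is a constant factor too weak, so instead I would square: $f(\ell)^2=\sum_i 2^{c_i}+2\sum_{i<j}2^{(c_i+c_j)/2}=\ell+2\sum_i 2^{c_i/2}\bigl(\sum_{j>i}2^{c_j/2}\bigr)$, and since the $c_j$ with $j>i$ are distinct integers strictly below $c_i$ we get $\sum_{j>i}2^{c_j/2}\le 2^{c_i/2}\sum_{t\ge 1}2^{-t/2}=(\sqrt2+1)\,2^{c_i/2}$; hence $f(\ell)^2\le\ell+2(\sqrt2+1)\ell=(\sqrt2+1)^2\ell$. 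Finally, with $\ell_L:=|I\cap v_L|$ and $\ell_R:=|I\cap v_R|$ so that $\ell_L+\ell_R=|I|$, Cauchy--Schwarz gives $\sum_t\sqrt{|J_t|}\le f(\ell_L)+f(\ell_R)\le(\sqrt2+1)(\sqrt{\ell_L}+\sqrt{\ell_R})\le(\sqrt2+1)\sqrt{2(\ell_L+\ell_R)}=\frac{\sqrt2}{\sqrt2-1}\sqrt{|I|}$, which closes the argument since $(\sqrt2+1)\sqrt2=2+\sqrt2=\frac{\sqrt2}{\sqrt2-1}$.

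The main obstacle is the constant bookkeeping in the last paragraph: merging the two halves with Cauchy--Schwarz costs a factor of $\sqrt2$, so the per-half estimate must be the sharp $(\sqrt2+1)\sqrt\ell$ rather than the easy $(2+\sqrt2)\sqrt\ell$; this is precisely what forces the square-and-cross-term computation instead of a one-line geometric sum. The payoff additivity used in the first paragraph and the existence of the canonical dyadic decomposition are both routine.
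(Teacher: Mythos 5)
Your proof is correct and uses essentially the paper's approach: split $I$ at the midpoint of the smallest aligned interval containing it, bound each half (a prefix or suffix of an aligned interval) by $(\sqrt{2}+1)\sqrt{\ell}$, and merge the two halves via $\sqrt{a}+\sqrt{b}\le\sqrt{2(a+b)}$. Your square-and-cross-term derivation of $f(\ell)\le(\sqrt{2}+1)\sqrt{\ell}$ is actually more careful than the paper's, which bounds the $j$-th dyadic piece by $|I|/2^j$ even though the leading term of the binary expansion of $\ell$ exceeds $\ell/2$; the constant the paper reaches is nonetheless correct, as your computation confirms.
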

\begin{Proof}
Consider an interval $I$ of size $x$.  If $I$ is an aligned interval we are done, otherwise we write it as the minimal union of aligned intervals 
(take out the largest aligned interval in $I$ and repeat).  There are three possibilities:-
\begin{enumerate}
  \item $I = I_1 \cup I_2$ is a union of two intervals of size $x/2$ each (eg. the interval $[T/4 + 1,\ 3T/4]$) \label{first}
  \item $I = I_1 \cup I_2 \cup \ldots \cup I_k$, where each $I_j$ is of a different size.  Note that all interval sizes on the right are powers of $2$ and strictly less than $x$ \label{second}
  \item $I = J \cup J'$ where each $J$ can be written as a union of intervals as in \ref{first} or \ref{second} above
\end{enumerate}

In the first case, 

$$ | \E[h_I] | \leq |\E[h_{I_1}] | + | \E[h_{I_1}] | \leq 2 \cdot \epsilon \cdot \sqrt{x/2} = \sqrt{2} \cdot \epsilon \cdot \sqrt{x} $$

In the second case,
$$ | \E[h_I] | \leq \sum_{j = 1}^k |\E[h_{I_j}] | \leq \epsilon \cdot \sqrt{x} \cdot \sum_{j=1}^\infty \sqrt{1/2^j} =  \frac{1}{\sqrt{2} - 1}  \cdot \epsilon \cdot \sqrt{x} $$

In the third case, 

$$ | \E[h_I] | \leq |\E[h_J]| + |\E[h_J']| \leq \frac{1}{\sqrt{2} - 1} \cdot \epsilon \cdot \sqrt{|J|} \ + $$
$$ \frac{1}{\sqrt{2} - 1} \cdot \epsilon \cdot \sqrt{|J'|} \leq \frac{\sqrt{2}}{\sqrt{2} - 1} \cdot \epsilon \cdot \sqrt{|I|} $$
\end{Proof}

\section{Fractal nature of deterministic inverting sequences}

We will argue that the optimal sequence with height $h$ and inversion ratio $\alpha$ is achieved by the following fractal-like recursive process. 
To construct a sequence of height $h$, recursively generate a sequence $s_1$ of height $(1+\alpha/2) \cdot h$ and $s_2$ of height $\alpha \cdot  h$ respectively. 
Concatenate $s_1$, an inverted copy of $s_2$ followed by another copy of $s_1$. For simplicity for explanation we will ignore rounding errors from the discretization.

It turns out that for large $h$, the ratio of lengths of $s_1$ and $s_2$ is fixed to $\left(\frac{1 + \alpha}{2}\right)^{1/\theta}: \alpha^{1/\theta}$ 
where $\theta$ is a constant defined below.

\begin{claim}
The above process produces an $\alpha$-inverting sequence for $\alpha$ smaller than some constant.
\end{claim}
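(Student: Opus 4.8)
The plan is to prove this by induction on the recursion depth, carrying along a companion ``bounded overshoot'' statement. First I would read the height of $s_1$ as $\tfrac{1+\alpha}{2}h$, so that the three part-heights $\tfrac{1+\alpha}{2}h,\ -\alpha h,\ \tfrac{1+\alpha}{2}h$ sum to $h$ and, together with the length rule $2\ell_1+\ell_2=\ell$, reproduce exactly the defining equation $1=2(\tfrac{1+\alpha}{2})^{1/\theta}+\alpha^{1/\theta}$ of $\theta$. Write $\beta:=\tfrac{1+\alpha}{2}$; for $\alpha$ below a constant we have $\alpha<\beta<1$, so each of $B_1=s_1$, $\overline{B_2}=\overline{s_2}$, $B_3=s_1$ is a strictly shorter and strictly lower copy of the same fractal, and the recursion reaches the base blocks in $O(\log t)$ steps. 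By the global $\pm1$ symmetry of the construction it suffices, for an interval $X$ with $h(s_X)=:H>0$, to produce a subinterval $Y\subseteq X$ with $h(s_Y)\le-\alpha H$ (a ``negative'' block being simply the inversion of a positive one, and the mirror statements following by symmetry).

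The inductive hypothesis I would carry, for a depth-$k$ positive block $B$ of height $h$, has two parts: (a) every prefix and every suffix of $B$ has height in $[-c_0,\,h+c_0]$, and every subinterval of $B$ has height at least $-\alpha h-c_0$, for an absolute constant $c_0$; and (b) every subinterval $W$ of $B$ with $h(s_W)>0$ contains a $Y$ with $h(s_Y)\le-\alpha\,h(s_W)$ (plus the sign-symmetric versions). Part (a) is bookkeeping on $B=B_1\overline{B_2}B_3$: a prefix of $B$ is a prefix of $B_1$, or $B_1$ followed by a prefix of $\overline{B_2}$, or $B_1\overline{B_2}$ followed by a prefix of $B_3$, and the inductive bounds together with $2\beta h-\alpha h=h$ close the recursion; the most negative a subinterval can be is attained inside $\overline{B_2}$ or by a suffix of $B_1$ abutting a prefix of $\overline{B_2}$, each bounded below by about $-\alpha h$. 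The base case is the length-$l$ block: I would fix its pattern to be a short sequence that is $\alpha$-inverting with a little slack (such a pattern exists once $\alpha$ is a small constant and $l$ a large enough constant, e.g.\ a random walk conditioned to invert, in the spirit of Observation~\ref{observationrandomwalk}); intervals shorter than $l$ are exactly the ``constant length'' ones excluded by Definition~\ref{inversion-deterministic}.

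The crux is the inductive step for (b). If $W\subseteq B$ lies inside a single child, apply the hypothesis to that child. Otherwise $W$ meets at least two children, hence, being contiguous, it meets $\overline{B_2}$; write $W=W_1W_2W_3$ with $W_1$ a suffix of $B_1$, $W_2\subseteq\overline{B_2}$, $W_3$ a prefix of $B_3$ (some possibly empty), of heights $H_1,H_3$ that are nonnegative and $H_2$ that is nonpositive, up to an $O(1)$ base-case error I suppress here, with $H=H_1+H_2+H_3$. If $W$ meets all three children then $W_2=\overline{B_2}$, so $H_2=-\alpha h$ and, by (a), $H=H_1+H_3-\alpha h\le 2\beta h-\alpha h=h$; then $Y:=\overline{B_2}$ works since $\alpha h\ge\alpha H$. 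If $W$ meets only $B_1$ and $\overline{B_2}$ (the other two-child case is symmetric) then $W_3=\emptyset$ and $H=H_1+H_2$ with $H_2\le 0$: when $|H_2|\ge\alpha H$ take $Y:=W_2$, and otherwise $H_1=H+|H_2|\ge H>0$, so hypothesis (b) applied to the suffix $W_1$ of the smaller block $B_1$ yields $Y\subseteq W_1\subseteq W$ with $h(s_Y)\le-\alpha H_1\le-\alpha H$. This exhausts the cases.

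I expect the real friction to lie in two places. First, the ``$W$ meets all three children'' case is precisely where the overshoot bound $H_1,H_3\le\beta h$ from (a) is needed to conclude $H\le h$; keeping (a) intact through the induction with a clean absolute constant $c_0$, and exhibiting a concrete base pattern for which both (a) and (b) genuinely hold, are the fiddly steps, as is tracking the suppressed $O(1)$ corrections in the straddling case. Second, intervals whose height is $0$ or tiny, where ``opposite sign'' or the ratio $\ge\alpha$ is delicate — these I would absorb into the base case by arranging the constant-length threshold of Definition~\ref{inversion-deterministic} to exceed $l$, so that no such interval arises at a level above the base.
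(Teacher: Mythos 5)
Your proof is correct and follows the same high-level route as the paper's: induct on the recursive $s_1\,\overline{s_2}\,s_1$ structure, dispose of subintervals that sit inside a single child by induction, and handle straddling intervals by either exhibiting $\overline{B_2}$ itself or recursing into the child carrying the bulk of the height. The paper's proof is only a few-sentence sketch; yours supplies the bookkeeping it leaves implicit. In particular, your auxiliary invariant (a) — that prefixes/suffixes of a height-$h$ block stay within $[-O(1), h+O(1)]$ and no subinterval drops below about $-\alpha h$ — is exactly what is needed, and tacitly used but never stated in the paper, for the three-children case (to get $H \le h$ so that $Y=\overline{B_2}$ clears the ratio). Your two-children case reproduces the paper's ``recurse into the piece of larger absolute height'' step, just with the harmless extra branch of taking $W_2$ directly when it is already large enough. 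You also correctly flag the two points the paper glosses over: the $O(1)$ additive slop from the base block (which means the construction with parameter $\alpha$ is, strictly, $\alpha'$-inverting for a marginally smaller $\alpha'$ on short intervals — absorbed by the ``at least some constant length'' clause of Definition~\ref{inversion-deterministic}), and the need for a base pattern on length-$l$ blocks that already satisfies (a) and (b). Net: same approach, more careful execution; no gap.
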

\begin{Proof}
 Observe that by recurrence any interval that is contained within $s_1$ or $s_2$ is $\alpha$-inverting. 
 The full interval consisting of the three concatenated strings also has an $\alpha$-inversion; and so are the intervals that span the first 
 two and the last two strings. So we only need to argue about intervals that span parts of multiple of these pieces. Consider for example an interval 
 that spans across some suffix of $s_1$ and some prefix of the inverted copy of $s_2$. Now for small enough $\alpha$, the two parts of the interval have heights 
 of opposite signs. So the $\alpha$-inversion in the piece with the larger absolute height suffices to produce an $\alpha$-inversion in the interval. 
 The same argument can be applied for intervals that span part of the first and the third sequence.
\end{Proof}

\begin{claim} \label{c}

Let $s$ be an $\alpha$-inverting sequence of length $t$ (Definition \ref{inversion-deterministic}), where $\alpha$ is bounded above by a constant.
  Then the highest deviation that can be achieved by $s$ for large $t$ is
$t^\theta$ where $\theta$ is the solution to the equation $1 =2((1+\alpha)/2)^{1/\theta} + (\alpha)^{1/\theta}$.  Furthermore, this deviation is actually achieved by the above process.
\end{claim}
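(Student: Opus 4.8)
The plan is to establish matching bounds $M(t)=\Theta(t^\theta)$, where $M(t)$ denotes the largest deviation $|h(s)|$ over $\alpha$-inverting sequences $s$ of length $t$ (flipping every bit shows this also bounds $|h(s)|$ for sequences ending with either sign). Note first that $\theta\in(0,1)$ is well defined: $\psi(\theta):=2((1+\alpha)/2)^{1/\theta}+\alpha^{1/\theta}$ is continuous and strictly increasing, with $\psi(0^+)=0$ and $\psi(1)=1+2\alpha>1$ since $\alpha<1$. The upper bound shows no $\alpha$-inverting sequence beats $t^\theta$; the lower bound shows the recursive construction above attains it; together they give the claim.

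\emph{Upper bound.} I would show $M(t)\le C\,t^\theta$ by strong induction on $t$, with $C$ a constant large enough to cover lengths below the constant at which Definition~\ref{inversion-deterministic} requires inversion (there $M(t)\le t$ suffices). For the inductive step, take $\alpha$-inverting $s$ of length $t$ with $h(s)=h>0$ and apply inversion to $X=[1,t]$: there is a subinterval $Y$ with $h(s_Y)=-g$, $g\ge\alpha h$. Split $s=s_1\,s_Y\,s_3$ with $|s_i|=t_i$, $t_1+t_2+t_3=t$; the pieces $s_1,s_3$ (and $s_Y$) are themselves $\alpha$-inverting, $h(s_1)+h(s_3)=h+g$, so $M(t_1)+M(t_3)\ge h+g\ge(1+\alpha)h$, while $g\le M(t_2)$ gives $\alpha h\le M(t_2)$. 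Plugging in the inductive hypothesis,
\[
h \;\le\; \min\!\Big(\tfrac{C\,(t_1^\theta+t_3^\theta)}{1+\alpha},\ \tfrac{C\,t_2^\theta}{\alpha}\Big).
\]
The heart of the argument is the deterministic inequality $\min\!\big(\tfrac{t_1^\theta+t_3^\theta}{1+\alpha},\tfrac{t_2^\theta}{\alpha}\big)\le t^\theta$ whenever $t_1+t_2+t_3=t$: either $t_2^\theta\le\alpha t^\theta$ and we are done, or $t_2>\alpha^{1/\theta}t$, whence $t_1+t_3<(1-\alpha^{1/\theta})t$, and by concavity of $x\mapsto x^\theta$ together with the defining identity $1-\alpha^{1/\theta}=2((1+\alpha)/2)^{1/\theta}$ one gets $t_1^\theta+t_3^\theta\le 2\big(\tfrac{t_1+t_3}{2}\big)^\theta\le(1+\alpha)t^\theta$. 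This closes the induction, so $M(t)=O(t^\theta)$.

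\emph{Lower bound.} Let $L(H)$ be the length of the sequence of height $H$ produced by the recursive process, so $L(H)=2L(\tfrac{1+\alpha}{2}H)+L(\alpha H)$ with a constant-length base case. Since $H\mapsto H^{1/\theta}$ is an exact solution of this recursion (which is precisely what the equation for $\theta$ asserts), a two-sided induction gives $c\,H^{1/\theta}\le L(H)\le C'\,H^{1/\theta}$: the inductive step closes with multiplier exactly $1$ for any constants, so the constants are fixed only by matching the base case. Hence the constructed sequence has length $t=L(H)=\Theta(H^{1/\theta})$ and height $H=\Theta(t^\theta)$; tuning $H$ so that $L(H)=\Theta(t)$ yields, for every large $t$, an $\alpha$-inverting sequence of length $\Theta(t)$ with deviation $\Omega(t^\theta)$ (it is $\alpha$-inverting, for $\alpha$ below a constant, by the preceding claim). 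Combined with the upper bound, the maximum deviation at length $t$ is $\Theta(t^\theta)$, and it is attained by the fractal construction.

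\emph{Main obstacle.} The one genuinely nontrivial step is the optimization that identifies the worst-case split and makes the equation for $\theta$ appear --- concretely, recognizing (via concavity) that the extremal $\alpha$-inverting sequence splits into two \emph{equal} outer pieces and one inverted inner piece with the two binding constraints in balance. The remaining work --- absorbing the constant-length base case to turn the recursion into a clean $O(t^\theta)$ bound, the rounding in the construction, and the analysis of the critical ($O(1)$-multiplier) recursion for $L(H)$ --- is routine, and the paper's convention of ignoring discretization errors applies here too.
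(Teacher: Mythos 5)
Your proposal is correct and essentially mirrors the paper's own argument: both decompose an $\alpha$-inverting sequence as $s_1\,s_Y\,s_3$ via the forced inversion, invoke convexity/concavity of the power function to see that the extremal split has equal outer pieces, and observe that the power law is an exact fixed point of the resulting recurrence by the defining equation for $\theta$. The only formal difference is that you induct on the max-deviation function $M(t)$ while the paper works with the dual minimum-time function $t(h)$; your rendering of the upper bound is somewhat tighter than the paper's sketch, but the underlying idea is the same.
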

\begin{Proof} [Sketch]
We will compute the amount of time $t(h)$ when the process described above first achieves a height $h > 0$. 
By the construction, $t(h)$ satisfies the recurrence $t(h) = 2t \cdot ((1+\alpha) h/2) + t(\alpha h)$.
In the limit, if this recurrence has a solution of the form $h^{1/\theta}$ then note that 
$h^{1/\theta} =  2((1+\alpha) h/2)^{1/\theta} + (\alpha  h)^{1/\theta}$ which means that  
$1 =2((1+\alpha)/2)^{1/\theta} + (\alpha)^{1/\theta}$. The proof can be formalized by sandwiching the solution to the recurrence in the limit 
between the functions  $h^{1/\theta_1}$ and  $h^{1/\theta_2}$ where $\theta_1$ and $\theta_2$ approach $\theta$ from above and below.

To prove the lower bound,  let $t(h)$ denote the required time to produce a height of absolute value $h$ for any $\alpha$-inverting sequence. 
We will prove that for large $h$, $t(h)$ approaches $h^{1/\theta}$.  We know that for large enough $t$ there must be an inversion with ratio $\alpha$. 
So  to achieve height $h$ in time $t$ there must be a sub-interval with height less than  $-\alpha h$. So $t$   can be broken into three segments  of 
lengths $t_1$, $t_2$, $t_3$  with heights $h_1$, $h_2$, $h_3$ such that $h = h_1 + h_2 + h_3$ where
$h_2 \leq -\alpha h$. 
We wish to minimize $t(h) = t_1 + t_2 + t_3 \geq t(h_1) + t(h_2) + t(h_3)$. Since $t(h)$ is non-decreasing in $h$, we may set 
$h_2 =   -\alpha h$ and $h_1 + h_3 = h - h_2  = (1+\alpha) h$ giving
$t(h) = \min \ t(h_1) + t(h_2) + t(\alpha h)$ where $h_1 + h_3 = (1+\alpha) h$.

Note that if $t(h)$ is of the form $h^{1/\theta}$ then it  is convex and so $t(h_1) + t(h_2)$ is minimized when $h_1 = h_3 = \frac{(1+\alpha) \cdot h}{2}$ giving 
$t(h) = 2 t(\frac{(1+\alpha) \cdot h}{2}) + t(\alpha h)$ whose solution approaches $h^{1/\theta}$ in the limit. That the solution must approach $h^{1/\theta}$,
 by looking at the behavior of $\log_t h$ in the limit and sandwiching it between $\theta_1$ and $\theta_2$ that approach $\theta$ from above and below.
 \end{Proof}

\section{Miscellaneous Observations}

\begin{observation} \label{observationrandomwalk}
A uniform random sequence is $(\alpha, q)$ inverting (Definition \ref{inverting-dist}) for some constants $\alpha, q$.  
 Further the probability parameter $q$ can be made as high as  $1 - \eps$  by reducing the inversion ratio $\alpha$ to $\Theta(1/\log(1/\eps))$.
\end{observation}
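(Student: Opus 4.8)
\begin{Proof}
The plan is to reduce the statement to a single interval and then exploit the independence of the blocks of a uniform random walk. Fix an interval $X$ of length $n$ (which we may take to be at least a sufficiently large constant, possibly depending on $\eps$ for the second part) together with an arbitrary history of bits preceding $X$. Since a uniform random sequence has independent bits, conditioning on the history does not change the distribution of $s_X$, so it can be dropped. Write $W\defeq h(s_X)$, a sum of $n$ independent $\pm1$ bits; its median deviation is $\Delta=\Theta(\sqrt n)$, which is $\Omega(\delta\sqrt n)$ whenever $\delta$ is bounded by a constant, so the hypothesis of Definition~\ref{inverting-dist} is met. By symmetry we condition on $W>0$ (the case $W<0$ is identical, and $\Pr[W=0]=O(1/\sqrt n)$ is folded into the allowed failure probability). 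Partition $X$ into $k$ consecutive sub-intervals $B_1,\dots,B_k$ of length $n/k$ and set $Z_i\defeq h(s_{B_i})$; the $Z_i$ are independent, each is the endpoint of a simple random walk of length $n/k$, and $W=\sum_i Z_i$. It then suffices to show that, conditioned on $W>0$, with probability at least $q$ some $Z_i\le-\alpha\Delta$: the interval $Y\defeq B_i$ satisfies $h(s_Y)=Z_i<0<W=h(s_X)$ (opposite sign) and $|h(s_Y)|\ge\alpha\Delta$.

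For the core estimate, pick a threshold $t\defeq\beta\sqrt n$ where $\beta=\beta(k)$ is chosen so that $\beta\sqrt k$ equals a small absolute constant, \emph{independent of $k$} (so $t$ is a small constant multiple of the standard deviation $\sqrt{n/k}$ of $Z_i$). By a standard anti-concentration/central-limit estimate (Berry--Esseen), for $n$ larger than an absolute constant we have $\Pr[Z_i\le-t]\ge\tfrac14$. By independence of the blocks,
\[
\Pr[\,Z_i>-t\ \text{for all}\ i\,]\ \le\ (3/4)^k,
\]
and since $\Pr[W>0]\ge1/4$ for $n$ large, $\Pr[\,Z_i>-t\ \text{for all}\ i\mid W>0\,]\le 4\,(3/4)^k$.

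Two conclusions follow. For the first claim, take $k$ to be a large enough absolute constant so that $4\,(3/4)^k<1$; then the event that some $Z_i\le-t$ has constant probability $q$ conditioned on $W>0$, and since $t=\beta\sqrt n=\Theta(\sqrt n)=\Theta(\Delta)$ this gives $(\alpha,q)$-inversion with both $\alpha$ and $q$ constant. For the second claim, take $k=\Theta(\log(1/\eps))$ so that $4\,(3/4)^k\le\eps$ (also absorbing the $O(1/\sqrt n)\le\eps$ contribution of $W=0$, which forces $n=\Omega(1/\eps^2)$, consistent with allowing the interval length to be a constant depending on $\eps$); the event then holds with probability $q\ge1-\eps$, with inversion ratio $\alpha=t/\Delta=\Theta(\beta)=\Theta(1/\sqrt k)=\Theta(1/\sqrt{\log(1/\eps)})$, which is in particular $\Omega(1/\log(1/\eps))$, giving the claimed bound.

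I expect no genuine obstacle; the only points needing mild care are routine: (i)~the anti-concentration bound $\Pr[Z_i\le-\beta\sqrt n]\ge\tfrac14$ needs the block length $n/k$ to exceed an absolute constant (this is why $X$ must have at least some constant length) and one must fix $\beta\sqrt k$ small \emph{uniformly in $k$}; (ii)~passing from the unconditional bound to the bound conditioned on $W>0$, and handling $W=0$; and (iii)~checking that the chosen $Y=B_i$ has opposite sign to $X$ (immediate) and magnitude at least $\alpha\Delta$ (by the choice of $t$). As a simpler alternative route for the first claim, one can instead use the reflection principle: conditioned on $W>0$, some prefix of $X$ attains height $\le-t$ with probability $\Pr[W>2t]/\Pr[W>0]=1-O(\beta)$; but the block argument above is what provides the probability/ratio trade-off required for the second claim.
\end{Proof}
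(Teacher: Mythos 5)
Your proof is correct and takes essentially the same route as the paper's: partition the interval into equally sized independent blocks ($O(1)$ many for the constant-probability claim, $\Theta(\log(1/\eps))$ many for the high-probability claim) and use anti-concentration of each block's height together with independence. Your more careful version even yields the slightly stronger inversion ratio $\alpha=\Theta(1/\sqrt{\log(1/\eps)})$, which implies the stated $\Theta(1/\log(1/\eps))$ bound, and your explicit handling of the $h(s_X)=0$ case and of the $\eps$-dependent minimum interval length are reasonable refinements of details the paper leaves implicit.
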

\begin{Proof}
Let us divide the interval of length $x$ into two halves of length $x/2$ each. With probability $1/2$ the two parts have opposite heights and with constant 
probability both heights have magnitude $\Theta(\sqrt x)$. Thus it has an $\alpha$-inversion with some constant probability for some constant $\alpha$. 
The higher probability statement is obtained similarly by dividing it into $\log(1/\eps)$ intervals of equal length. 
\end{Proof}

\begin{observation} \label{entropy-predictable}
If a string is sampled from the highest entropy distribution with deviation $k \sqrt T$, then it is possible to get an
expected payoff of $\Omega(1) \cdot k \sqrt {T}$ for $k = \Omega(1)$.
\end{observation}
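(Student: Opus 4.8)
The plan is to read $D$ as the uniform distribution on $\{-1,1\}^T$ conditioned on $|h(s)|\ge k\sqrt T$ (so that $D$ is invariant under $s\mapsto -s$), and to exhibit a simple two‑phase predictor $A$ with $\E_{s\sim D}[A(s)]=\Omega(k\sqrt T)$. The predictor $A$ predicts $+1$ on the first half $[1,T/2]$ and predicts $\text{sign}(H_{T/2})$ on the second half $[T/2+1,T]$, where $H_{T/2}:=\sum_{i\le T/2}s_i$ is the (observed) height of the first half and $\text{sign}(0):=+1$. The payoff of $A$ is then exactly $H_{T/2}+\text{sign}(H_{T/2})\bigl(h(s)-H_{T/2}\bigr)$. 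Since $D$ is symmetric, $\E_{s\sim D}[H_{T/2}]=0$, so it suffices to prove $\E_{s\sim D}\bigl[\text{sign}(H_{T/2})(h(s)-H_{T/2})\bigr]=\Omega(k\sqrt T)$.

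First I would condition on the exact value $h(s)=h$; by the $s\mapsto -s$ symmetry it is enough to treat $h\ge k\sqrt T$. Conditioned on $h(s)=h$, the string is a uniformly random arrangement of $(T+h)/2$ ones and $(T-h)/2$ minus‑ones, so $H_{T/2}$ is a sum of $T/2$ draws \emph{without replacement} from this population; swapping the two halves of the string is a measure‑preserving bijection sending $H_{T/2}\mapsto h-H_{T/2}$, so $H_{T/2}$ is symmetric about $h/2$, and $\mathrm{Var}(H_{T/2}\mid h(s)=h)\le T/2$. By Hoeffding's inequality, which applies to sampling without replacement (Theorem~\ref{hoeffding}),
\[
\Pr\!\bigl[H_{T/2}<0 \,\big|\, h(s)=h\bigr]\;=\;\Pr\!\bigl[H_{T/2}-\tfrac h2<-\tfrac h2\bigr]\;\le\;e^{-h^2/(4T)}\;\le\;e^{-k^2/4}.
\]
With $\text{sign}(0)=+1$ this gives $\E[\text{sign}(H_{T/2})\mid h(s)=h]=1-2\Pr[H_{T/2}<0\mid h(s)=h]\ge 1-2e^{-k^2/4}$, while $\E[|H_{T/2}|\mid h(s)=h]\le \tfrac h2+\sqrt{\mathrm{Var}(H_{T/2}\mid h(s)=h)}\le \tfrac h2+\sqrt{T/2}$, whence
\[
\E\bigl[\text{sign}(H_{T/2})(h-H_{T/2})\,\big|\,h(s)=h\bigr]\;=\;h\,\E[\text{sign}(H_{T/2})]-\E[|H_{T/2}|]\;\ge\;h\Bigl(\tfrac12-2e^{-k^2/4}\Bigr)-\sqrt{T/2}.
\]
For $k$ above an absolute constant and every $h\ge k\sqrt T$ this is at least, say, $h/4-\sqrt{T/2}\ge k\sqrt T/4-\sqrt{T/2}=\Omega(k\sqrt T)$.

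Averaging this conditional bound over the value of $h(s)$ (and invoking the $s\mapsto -s$ symmetry for the case $h<0$) yields $\E_{s\sim D}\bigl[\text{sign}(H_{T/2})(h(s)-H_{T/2})\bigr]=\Omega(k\sqrt T)$, which combined with $\E_{s\sim D}[H_{T/2}]=0$ gives $\E_{s\sim D}[A(s)]=\Omega(k\sqrt T)$ as claimed. The one point that genuinely requires care is the hypothesis $k=\Omega(1)$: if $k=o(1)$ the conditioning $|h(s)|\ge k\sqrt T$ is so mild that $D$ is essentially the uniform distribution, against which no algorithm beats payoff $0$ (and indeed the half‑and‑half predictor then has vanishing or negative advantage), so $k$ must exceed an absolute threshold; past that threshold the whole technical content is the observation above that the first‑half ``drift'' $h/2\ge k\sqrt T/2$ dominates the $\Theta(\sqrt T)$‑scale fluctuation of $H_{T/2}$. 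The routine details I am suppressing are the $O(1/\sqrt T)$ discretization/parity corrections and the trivial identity $\E_{s\sim D}[H_{T/2}]=0$.
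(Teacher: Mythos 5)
Your proposal is correct and takes essentially the same approach as the paper: the paper's one-line proof sketch is exactly your second phase (predict $\text{sign}(H_{T/2})$ on the second half after observing the first), and your conditioning on $h(s)=h$ plus Hoeffding for sampling without replacement is precisely the ``simple computation'' the paper leaves implicit. The extra first-phase prediction of $+1$ contributes $0$ in expectation by symmetry and is harmless.
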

\begin{Proof} [Sketch]
The algorithm simply predicts the sign of $h(s)$ where $s$ is the sequence seen in the first half i.e. $|s| = T/2$.  A simple computation proves the observation.
\end{Proof}

The following theorem shows that the FBM with $H = 1/2+ \delta$ is not $O(\delta)$-unpredictable. In fact, an algorithm can get an expected payoff of 
$\Theta(x^{H})$ on an interval of size $x$ by predicting the sign of the height of the preceding interval of length $x$. 
(It can also be shown that one cannot do better than this if one is only allowed to use the sign of the height of some preceding interval.)

\begin{claim} \label{FBM-not-delta}
The algorithm that predicts an interval of length $x$ using the sign of the height of the preceding interval of length $x$ gets an expected 
payoff of $\Theta(x^{H})$ where the expectation is taken over all values in the preceding interval. Further it is optimal to use a preceding interval of 
length $x$ if one is using the sign of its height.
\end{claim}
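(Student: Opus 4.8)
The plan is to collapse every such strategy to one bivariate‑Gaussian computation, using translation invariance and stationarity of increments of $B_H$. Fix the target interval $I=[0,x]$ and let the preceding interval be $J=[-d-y,\,-d]$, with gap $d\ge 0$ and length $y=|J|>0$. Put $W:=B_H(-d)-B_H(-d-y)$ and $Z:=B_H(x)-B_H(0)$; betting $\mathrm{sgn}(W)$ throughout $I$ earns expected payoff $\E[\mathrm{sgn}(W)Z]$. Since $(W,Z)$ is jointly Gaussian and centred, write $Z=\beta W+\eta$ with $\beta=\mathrm{Cov}(W,Z)/\mathrm{Var}(W)$ and $\eta$ independent of $W$; then $\E[\mathrm{sgn}(W)\eta]=0$ and $\E[\mathrm{sgn}(W)W]=\E[|W|]=\sqrt{2/\pi}\,\sqrt{\mathrm{Var}(W)}$, so
$$\E[\mathrm{sgn}(W)Z]=\sqrt{2/\pi}\;\frac{\mathrm{Cov}(W,Z)}{\sqrt{\mathrm{Var}(W)}}.$$
Writing $\psi(t):=t^{2H}$, the covariance function of $B_H$ gives $\mathrm{Var}(W)=y^{2H}$ and $\mathrm{Cov}(W,Z)=\tfrac12(\psi(d)+\psi(d+x+y)-\psi(d+x)-\psi(d+y))=\tfrac12\int_0^x\!\!\int_0^y\psi''(d+s+t)\,ds\,dt$, which is positive for $H>1/2$; hence betting \emph{with} the sign is correct and the payoff is $\sqrt{2/\pi}\,\mathrm{Cov}(W,Z)/y^{H}$.

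Taking $d=0$, $y=x$ (the adjacent interval of length $x$) gives $\mathrm{Cov}(W,Z)=\tfrac12((2x)^{2H}-2x^{2H})=(2^{2H-1}-1)x^{2H}$, hence expected payoff $\sqrt{2/\pi}\,(2^{2H-1}-1)\,x^{H}=\Theta(x^{H})$, with a strictly positive constant for any fixed $H>1/2$ (and $0$ at $H=1/2$, as it must be for Brownian motion). This already proves the first assertion. For optimality I first dispose of the gap: with $x,y$ fixed, $\tfrac{d}{dd}\mathrm{Cov}(W,Z)=\tfrac12(\psi'(d+x+y)-\psi'(d+x)-\psi'(d+y)+\psi'(d))<0$, because $\psi'(t)=2H\,t^{2H-1}$ is strictly concave for $H\in(1/2,1)$, so its increment over a fixed step shrinks as the base point moves right. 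Hence $d=0$ is optimal and the best preceding interval is adjacent.

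It remains to optimise over the length of an adjacent preceding interval. Scaling $x=1$ by self‑similarity, we must maximise $f(y):=((1+y)^{2H}-1-y^{2H})/y^{H}$ over $y>0$. A short manipulation shows $f(y)=f(1/y)$; substituting $y=e^{v}$ yields $f(e^v)=(2\cosh(v/2))^{2H}-2\cosh(Hv)=:F(v)$, an even function with $F>0$ on $(0,\infty)$ and $F(v)\to 0$ as $v\to\pm\infty$, whose derivative is $F'(v)=2H\,((2\cosh(v/2))^{2H-1}\sinh(v/2)-\sinh(Hv))$; thus $v=0$ — i.e. $y=1$, length equal to $|I|$ — is a critical point.

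The one step I expect to need real care is promoting this critical point to a \emph{global} maximum, i.e. showing $F'(v)<0$ for all $v>0$, equivalently $2^{2H-1}\cosh^{2H-1}(w)\sinh w<\sinh(2Hw)$ for all $w>0$ and $H\in(1/2,1)$. Setting $\lambda=2H\in(1,2)$ this reads $2^{\lambda-1}\cosh^{\lambda-1}(w)\sinh w<\sinh(\lambda w)$, and for each fixed $w>0$ it follows because $\lambda\mapsto\log\sinh(\lambda w)$ is strictly concave on $[1,2]$ (second $\lambda$‑derivative $-w^{2}\,\mathrm{csch}^{2}(\lambda w)<0$) while its affine interpolant $\lambda\mapsto(\lambda-1)\log(2\cosh w)+\log\sinh w$ agrees with it exactly at $\lambda=1$ and $\lambda=2$ — so the concave curve lies strictly above that chord on $(1,2)$. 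Hence $F$ is strictly increasing on $(-\infty,0)$ and strictly decreasing on $(0,\infty)$, so $y=1$ is its unique maximiser, which proves the claim. Everything else is routine: the two‑parameter bookkeeping reduces entirely to convexity/concavity of the single map $t\mapsto t^{2H}$ and, after the $e^{v}$ substitution, to log‑concavity of $\sinh$.
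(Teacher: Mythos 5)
Your proposal is correct, and it follows the same basic route as the paper (reduce to a bivariate‐Gaussian regression, then optimize the resulting ratio), but it is considerably more complete. After normalizing, both arrive at the same objective: your $f(y)=\bigl((1+y)^{2H}-1-y^{2H}\bigr)/y^{H}$ is exactly the paper's $(1/2)s^{-H}\bigl((s+1)^{2H}-s^{2H}-1\bigr)$ up to the constant $\sqrt{2/\pi}/2$, and at $y=s=1$ both give $\sqrt{2/\pi}\,(2^{2H-1}-1)\,x^{H}=\Theta(x^{H})$. The difference is in the optimality step: the paper simply asserts that $s=1$ maximizes this expression, whereas you actually prove it, and you also address a case the paper leaves implicit. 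Concretely, you add two things. First, you allow a gap $d$ between the preceding interval and $I$ and show $d=0$ is optimal via $\partial_d\,\mathrm{Cov}(W,Z)=\tfrac12\bigl(\psi'(d{+}x{+}y)-\psi'(d{+}x)-\psi'(d{+}y)+\psi'(d)\bigr)<0$ from strict concavity of $\psi'(t)=2H t^{2H-1}$; the paper's parameterization $[0,sx]\to[sx,(s{+}1)x]$ never moves the window away from the origin so this never comes up there. Second, you turn the asserted optimality at $y=1$ into a theorem: the symmetry $f(y)=f(1/y)$, the substitution $y=e^{v}$ giving $F(v)=(2\cosh(v/2))^{2H}-2\cosh(Hv)$, and the reduction of $F'(v)<0$ for $v>0$ to the inequality $2^{\lambda-1}\cosh^{\lambda-1}(w)\sinh w<\sinh(\lambda w)$ for $\lambda=2H\in(1,2)$, which you settle by strict log-concavity of $\lambda\mapsto\sinh(\lambda w)$ with the chord matching at $\lambda=1,2$. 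All of these steps check out (including $\mathrm{Var}(W)=y^{2H}$ by stationarity of increments, the double-integral representation of the covariance, and $E[|W|]=\sqrt{2/\pi}\,\sqrt{\mathrm{Var}(W)}$), so the proof is sound; it supplies a genuine justification for the paper's ``this is the best possible value'' sentence rather than a new approach.
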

\begin{proof}
$E[B_H(sx)|B_H(x)]/B_H(x) = (1/2)(s^{2H}+1-|s-1|^{2H})$ (See \cite{mandelbrot1968fractional}, Section 5.3)

Let us compute the expected payoff if one uses the height of the preceding interval of length $x$ to predict the following interval of length $x$.

$E[B_H((s+1)x)|B_H(sx)]/B_H(sx) = (1/2)((1+1/s)^{2H}+1-(1/s)^{2H})$.
$E[B_H((s+1)x) - B_H(sx)| B_H(sx)] = (1/2)((1+1/s)^{2H}-1-(1/s)^{2H}) B_H(sx)$.
So by predicting the sign of $B_H(sx)$ to predict the following interval of length $x$ the expected payoff is 
$E[sign(B_H(sx)) B_H(sx)] = (1/2)((1+1/s)^{2H}-1-(1/s)^{2H}) E[|B_H(sx)|] = \Theta{(sx)^{H}) (1/2)((1+1/s)^{2H}-1-(1/s)^{2H}}$.

Note that for $s=1$, this is $\Theta(x^{H})$. Further this is the best possible value of the above expression.
\end{proof}

\begin{observation} \label{d}
With continuous prediction the FBM and its binary (discretized) variants have a payoff of $\Omega(\delta T)$
\end{observation}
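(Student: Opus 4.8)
The plan is to exhibit, for each of these distributions, an explicit on‑line predictor whose expected total payoff is $\Omega(\delta T)$. In the continuous‑prediction model the algorithm outputs at each step $t$ a bit $\sigma_t\in\{-1,+1\}$ that may depend on the whole realised history $b_1,\dots,b_{t-1}$, and its expected payoff equals $\sum_{t=1}^{T}\E[\sigma_t b_t]$; so it suffices to name one history‑measurable rule $\sigma_t$ whose correlations with the bits sum to $\Omega(\delta T)$.

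First, the FBM case, which is essentially the $x=1$ instance of Claim~\ref{FBM-not-delta}. Take $\sigma_t$ to be the sign of the previous increment. Two consecutive unit increments of FBM$(H)$, $H=\tfrac12+\delta$, are jointly Gaussian, of unit variance, with covariance $\gamma = E[B_H(1)B_H(2)]-E[B_H(1)^2] = 2^{2H-1}-1 = 2^{2\delta}-1 \ge 2\delta\ln 2$; hence the prediction is correct with probability $\tfrac12+\tfrac1\pi\arcsin\gamma = \tfrac12+\Theta(\delta)$ (and, for the real‑valued version, $\E[\sigma_t b_t]=\gamma\sqrt{2/\pi}$). Either way each step contributes $\Theta(\delta)$ to the expected payoff, and summing over the $T$ steps gives $\Omega(\delta T)$. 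The same rule works for any discretisation of FBM that retains this fractional‑Gaussian‑noise increment structure.

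Next, the binary walks \D{l} and \O{l}. Here adjacent single bits are only $\Theta(\delta^2)$‑correlated, so instead I would predict each bit from the \emph{height} of a whole preceding block of the recursive construction. In the construction tree let $B$ be the base block containing $t$; if $B$ is a right child of its parent $v$ (half of the base blocks are), predict every bit of $B$ by $\operatorname{sign}(h(v_L))$, where $v_L$ is the already‑revealed left sibling of $B$, and predict all other bits by a fixed symbol (those contribute $0$ in expectation, the construction being invariant under a global sign flip). The key point is that the only flip acting on a bit of $v_R=B$ whose direction is correlated with $\operatorname{sign}(h(v_L))$ is the flip performed \emph{at} $v$: flips at deeper nodes inside $B$ use heights independent of $h(v_L)$, and flips at ancestors of $v$ use heights of subtrees disjoint from $v_L$, so after averaging over the symmetric sign of $h(v_L)$ they cancel. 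The flip at $v$ moves $\Theta(\delta\sqrt l)$ bits of $B$ toward $\operatorname{sign}(h(v_L))$ (for \D{l} this count is $\delta|h(v_L)|$ with $\E|h(v_L)|=\Theta(\sqrt l)$; for \O{l} it is exactly $\delta\sqrt l$), so each bit of $B$ acquires bias $\Theta(\delta\sqrt l/l)=\Theta(\delta/\sqrt l)$ in that direction and contributes $\Theta(\delta/\sqrt l)$ to $\E[\sigma_t b_t]$. Summing over the $\Theta(T)$ bits lying in right‑child base blocks gives expected payoff $\Omega(\delta T/\sqrt l)$, i.e.\ $\Omega(\delta T)$ for constant base length and $\widetilde\Omega(\delta T)$ for the base lengths $l$ actually used.

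I expect the binary case to be the main obstacle. One has to justify rigorously that the higher‑level flips wash out (this uses the independence of the two children at every tree node together with the symmetry of $\operatorname{sign}(h(v_L))$), and that the flip at $v$ can actually be carried out, i.e.\ that $B$ has enough bits of the minority sign — which fails only with probability $T^{-\Omega(1)}$ by the Chernoff bound already used in Claim~\ref{claim-2}, the resulting additive error being negligible since $|h|$ is polynomially bounded. One also has to track the base‑length dependence, so that the clean bound $\Omega(\delta T)$ is asserted for the idealised $O(1)$‑base‑length walk (and continuous FBM), with $\widetilde\Omega(\delta T)$ for \D{l} and \O{l} as literally defined. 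By contrast the FBM statement is immediate from Claim~\ref{FBM-not-delta}.
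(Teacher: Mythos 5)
Your FBM argument coincides with the paper's: both predict each unit increment by the sign of the preceding one and use that the covariance of adjacent unit increments of $B_H$ is $2^{2H-1}-1=\Theta(\delta)$ for $H=\tfrac12+\delta$, giving $\Theta(\delta)$ per step and $\Omega(\delta T)$ overall. For the binary walks you take a genuinely more careful route than the paper. The paper's entire argument there is the one‑liner that ``the second bit is correlated to the first by $\Theta(\delta)$, and this is true of every even bit,'' which implicitly treats the base length as $l=1$; it gives no account of what happens across the recursive block structure when $l>1$. Your version predicts each bit of a right‑child base block $B=v_R$ by $\operatorname{sign}\bigl(h(v_L)\bigr)$ and isolates the one flip (at the parent $v$) that is correlated with this predictor, arguing that flips at ancestors $w$ have direction $\operatorname{sign}\bigl(h(w_L)\bigr)$ coming from a subtree disjoint from $v_L$, hence mean out. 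That cancellation argument is sound (the magnitude of an ancestor flip's spill‑over into $B$ may depend on $v_L$, but after multiplying by the sign of the zero‑mean, independent $h(w_L)$ the expectation vanishes), and it correctly exposes that the clean $\Omega(\delta T)$ only holds for $l=O(1)$ — with base length $l$ the payoff is $\Theta(\delta T/\sqrt l)$. One small inaccuracy: your motivating remark that adjacent single bits are ``$\Theta(\delta^2)$‑correlated'' isn't right — within a base block they are uncorrelated, and across a block boundary the correlation is on the order of $\delta/l$ — but this is a side comment that doesn't affect the argument. Net: same predictor idea as the paper for FBM, a more careful and quantitatively sharper version for the binary variants, at the cost of a little extra bookkeeping about the tree of flips.
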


\begin{Proof}
Observe that if we take a sequence of length $2$ the second bit is correlated to the first by $\Theta(\delta)$. This is true of every even bit. 
The observation follows for the binary variants. For the true FBM the statement holds since if $B_1$ and $B_2$ are the heights  in two adjacent unit 
intervals of the FBM process with hurst coefficient $H = 1/2 + \Theta(\delta)$ then

$E[B_1 + B_2 \ | \ B_1]/B_1 = (1/2)(2^{2H} +1 - 1^{2H}) = 2^{ \Theta(\delta)}$ (See \cite{mandelbrot1968fractional}, Section $5.3$)

Therefore $E[B_2 \ | \ B_1] = (2^{ \Theta(\delta)} -1) \cdot B_1 = \Theta(\delta) \cdot B_1$ for $\delta \le 1$. So again by predicting the sign of $B_1$ one can get a payoff of 
$\Theta(\delta) \cdot B_1 \cdot sign(B_1) = \Theta(\delta) \cdot |B_1|$. This in expectation is $\Theta(\delta)$ as $B_1$ is normally distributed with constant variance.
\end{Proof}

\begin{claim} \label{e}
For any random variable $X$ that only takes non negative values and $E[X^2] = O({(E[X])}^2$, $Pr[X \ge \Omega(E[X]) = \Omega(1)$
\end{claim}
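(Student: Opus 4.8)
The plan is to prove this as an instance of the Paley--Zygmund inequality, which says precisely that a non-negative random variable whose second moment is comparable to the square of its mean must place constant probability mass on values of order $\E[X]$.

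First I would set $\mu := \E[X]$ and dispose of the degenerate case $\mu = 0$ (then $X = 0$ almost surely and the claimed event has probability $1$, or equivalently the threshold $\Omega(\E[X])$ is $0$). Assuming $\mu > 0$, fix the threshold parameter $\theta = 1/2$ and split the first moment according to whether $X$ exceeds $\theta\mu$:
$$\mu = \E[X] = \E[X \cdot \mathbf{1}_{X \le \mu/2}] + \E[X \cdot \mathbf{1}_{X > \mu/2}] \le \frac{\mu}{2} + \E[X \cdot \mathbf{1}_{X > \mu/2}],$$
where the inequality uses $X \ge 0$. Rearranging gives $\E[X \cdot \mathbf{1}_{X > \mu/2}] \ge \mu/2$.

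Next I would apply Cauchy--Schwarz to bound $\E[X \cdot \mathbf{1}_{X > \mu/2}]^2 \le \E[X^2] \cdot \E[\mathbf{1}_{X > \mu/2}] = \E[X^2] \cdot \Pr[X > \mu/2]$. Combining with the previous estimate yields $\Pr[X > \mu/2] \ge \mu^2 / (4\,\E[X^2])$. Finally I would invoke the hypothesis $\E[X^2] \le C \mu^2$ for an absolute constant $C$ to conclude $\Pr[X \ge \mu/2] \ge \Pr[X > \mu/2] \ge 1/(4C) = \Omega(1)$, which is exactly the claim with the hidden constant inside $\Omega(\E[X])$ taken to be $1/2$.

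There is essentially no real obstacle; the only things worth a sentence of care are the degenerate case $\mu = 0$, and making explicit that the $O(\cdot)$ in the hypothesis is a genuine absolute constant, so that the resulting $\Omega(1)$ probability bound is uniform.
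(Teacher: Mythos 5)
Your proof is correct; it is the standard Paley--Zygmund argument with threshold parameter $\theta = 1/2$, and every step checks out (including the handling of the degenerate case $\mu = 0$ and the passage from $\Pr[X > \mu/2]$ to $\Pr[X \ge \mu/2]$). The paper takes a genuinely different and more roundabout route: it writes $\sigma = c\mu$ for a constant $c$, uses Chebyshev to bound the upper tail $\Pr[X \ge \mu + rc\mu] \le 1/r^2$, controls the conditional mean $\E[X \mid X \ge \mu + rc\mu]$ via a tail-sum estimate, and then argues that for $r$ a large enough constant the conditioned variable $X \mid \{X < \mu + rc\mu\}$ has mean $\Omega(\mu)$ and maximum $O(\mu)$, hence exceeds $\Omega(\mu)$ with constant probability, which transfers back to the unconditioned $X$. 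Your argument is shorter, avoids the conditioning and the Chebyshev tail bookkeeping entirely, yields an explicit constant $\ge 1/(4C)$ where $\E[X^2] \le C(\E[X])^2$, and makes the dependence on the hidden $O(\cdot)$ constant transparent --- all of which the paper's version leaves implicit. The only thing the paper's approach ``buys'' is that it works directly with the variance rather than the raw second moment, but since the two are equivalent under the hypothesis this is not an actual advantage here.
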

\begin{Proof}
Let $\mu = E[X]$. The the standard deviation $\sigma  = O(\mu) = c\mu$ (say) where $c$ is at most some constant.
We will bound $E[X|X \ge \mu + r c \mu]$ for any $r \in \mathbb{N}$. Note that $Pr[X \ge \mu + r c \mu] \le 1/r^2$.

So  $E[X|X \ge \mu + r c\mu] \le (\mu + r c \mu) + \mu \sum_{i > r} 1/i^2 \le (\mu + r c \mu) + c\mu/r$.

Now $\mu = E[X] = Pr[X < \mu + r c\mu]E[X|X < \mu + r c\mu] +  Pr[X\ge \mu + r c\mu]E[X|X \ge \mu + r c\mu] \le (1-1/r^2) E[X|X < \mu + r c\mu]  + (1/r^2) (\mu + rc\mu+ c\mu/r)$.

By setting $r$ to be a constant that is at least some large multiple of $c$, we can conclude that 
$E[X|X < \mu + r c\mu] = \Omega(\mu)$. So this conditioned random variable $X$ has maximum value and mean value that are the same upto constant factors. 
Thus it must exceed $\Omega(\mu)$ with constant probability. So the unconditioned random variable $X$ must also exceed $\Omega(\mu)$ with a smaller constant probability.
\end{Proof}

\section{Basic tools}

\begin{theorem} [Hoeffding's bound] \cite{hoeffding1963probability} \label{hoeffding}

Let $X_1, X_2, \ldots, X_n$ be independent random variables such that $\E[X_i] = 0$ and $\Pr[X_i \in [a_i, b_i]] = 1$.  Let $S := \sum_i X_i$.  Then,

$$ \Pr[|S| \geq y] \leq 2 \cdot \exp\left( - \frac{2 y^2}{\sum_{i=1}^n (b_i - a_i)^2} \right) $$
\end{theorem}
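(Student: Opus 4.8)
The plan is to use the exponential moment method (the Chernoff--Cram\'er approach). First I would note that for any $\lambda > 0$, Markov's inequality applied to the nonnegative random variable $e^{\lambda S}$ gives $\Pr[S \ge y] \le e^{-\lambda y}\,\E[e^{\lambda S}]$, and independence of the $X_i$ yields $\E[e^{\lambda S}] = \prod_{i=1}^n \E[e^{\lambda X_i}]$. The symmetric tail $\Pr[S \le -y]$ is handled by applying the identical argument to $-S$, whose summands $-X_i$ again have mean zero and lie in intervals of the same width $b_i - a_i$; a union bound over the two tails accounts for the factor $2$ in the statement.

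The crux is a pointwise bound on each factor, namely \emph{Hoeffding's lemma}: if $Z$ is a random variable with $\E[Z] = 0$ and $Z \in [a,b]$ almost surely, then $\E[e^{\lambda Z}] \le \exp(\lambda^2 (b-a)^2 / 8)$. I would prove this by convexity of $t \mapsto e^{\lambda t}$: for $t \in [a,b]$ we have $e^{\lambda t} \le \frac{b-t}{b-a}e^{\lambda a} + \frac{t-a}{b-a}e^{\lambda b}$, and taking expectations with $\E[Z] = 0$ gives $\E[e^{\lambda Z}] \le \frac{b}{b-a}e^{\lambda a} - \frac{a}{b-a}e^{\lambda b} =: e^{\psi(\lambda)}$ (note $a \le 0 \le b$ here). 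Writing $p = \frac{-a}{b-a} \in [0,1]$ and $u = \lambda(b-a)$ one gets $\psi(\lambda) = \lambda a + \log(1 - p + p\,e^{u})$, and a direct computation shows $\psi(0) = \psi'(0) = 0$ while $\psi''(\lambda) = (b-a)^2\, g(1-g)$ with $g = \frac{p e^u}{1-p+p e^u} \in [0,1]$; since $g(1-g) \le 1/4$, Taylor's theorem gives $\psi(\lambda) \le \lambda^2 (b-a)^2 / 8$.

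Combining, $\Pr[S \ge y] \le \exp\big(-\lambda y + \tfrac{\lambda^2}{8}\sum_i (b_i-a_i)^2\big)$ for every $\lambda > 0$. Optimizing the exponent over $\lambda$ (the minimum occurs at $\lambda = 4y/\sum_i (b_i-a_i)^2$) yields $\Pr[S \ge y] \le \exp\big(-2y^2/\sum_i (b_i-a_i)^2\big)$, and adding the matching lower-tail bound produces the claimed inequality. The only genuinely nonroutine step is \emph{Hoeffding's lemma} with the sharp constant $1/8$; the estimate $\psi''(\lambda) \le (b-a)^2/4$ amounts to recognizing $\psi''(\lambda)$ as the variance of a two-point distribution supported on $\{a,b\}$ and invoking the Popoviciu-type bound $\mathrm{Var} \le (\mathrm{range})^2/4$. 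Everything else — the moment-generating-function factorization, the tail symmetrization, and the final one-variable minimization — is mechanical.
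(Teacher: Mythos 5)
The paper does not prove this statement: it is quoted as a standard tool with a citation to Hoeffding (1963) and no argument is given, so there is nothing in the paper to compare against. Your proof is correct and is the canonical Chernoff-style derivation (MGF bound via Markov, Hoeffding's lemma with the sharp constant $1/8$ proved by convexity plus the Popoviciu/second-derivative bound on the log-MGF, optimization at $\lambda = 4y/\sum_i (b_i-a_i)^2$, and a two-sided union bound), which is also essentially Hoeffding's original argument.
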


\begin{theorem} [Berry-Esseen Theorem] \label{berry-esseen} \cite{berry1941accuracy}

Let $X_1, X_2, \ldots, X_n$ be independent random variables such that $\E[X_i] = 0$, $\E[X_i^2] = \sigma_i^2 > 0$, and $\E[|X_i^3|] = \rho_i < \infty$.
Let $\sigma^2 := \sum_i \sigma_i^2$ and $S := \frac{1}{\sigma} \sum_i X_i$.  Then there is an absolute constant $C$ such that

$$ | S - N(0, \sigma^2) | \leq \frac{C}{\sigma} \cdot \max_i \frac{\rho_i}{\sigma_i} $$

Here $|D - D'| := \max_x |\Pr[D \geq x] - \Pr[D' \geq x] |$ denotes the statistical distance between distributions $D$ and $D'$ and $N(\mu, \sigma^2)$ denotes the normal
distribution with mean $\mu$ and variance $\sigma^2$.

\end{theorem}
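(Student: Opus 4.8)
The plan is to give the standard Fourier-analytic proof of the Berry--Esseen estimate, keeping all constants that appear along the way absolute. First I would normalise: rescaling every $X_i$ by $1/\sigma$ we may assume $\sum_i \sigma_i^2 = 1$, so the assertion reduces to $\sup_x |F(x) - \Phi(x)| \le C L$, where $F$ is the distribution function of $S = \sum_i X_i$, $\Phi$ is the standard normal distribution function, and $L := \sum_i \rho_i$ is the Lyapunov quantity; the stated inequality, with its slightly nonstandard normalisation, is then obtained via the elementary relation between $L$ and $\max_i \rho_i/\sigma_i$ available in the regime of comparably scaled summands relevant to the application in Lemma~\ref{apply-BE}. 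Write $\phi(t) = \prod_i \phi_i(t)$ for the characteristic function of $S$, with $\phi_i(t) = \E[e^{itX_i}]$, and $\psi(t) = e^{-t^2/2}$ for that of $N(0,1)$.

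The first technical ingredient is Esseen's smoothing inequality: for every $T > 0$,
\[
\sup_x |F(x) - \Phi(x)| \;\le\; \frac{1}{\pi} \int_{-T}^{T} \left| \frac{\phi(t) - \psi(t)}{t} \right| dt \;+\; \frac{c_0}{T},
\]
where $c_0$ is absolute and comes from $\sup_x |\Phi'(x)| = 1/\sqrt{2\pi}$. I would prove this by convolving $F - \Phi$ with a fixed kernel of bandwidth $T$ whose Fourier transform is supported on $[-T,T]$ (the Fej\'er kernel works), noting that this convolution changes the supremum norm by only $O(1/T)$ because $\Phi$ is Lipschitz, and then expressing the convolution on the Fourier side. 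This reduces everything to (i) a pointwise bound on $|\phi(t) - \psi(t)|$ for $|t| \le T$ and (ii) a good choice of $T$.

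For the pointwise bound I would Taylor-expand each factor. From $|e^{iu} - 1 - iu + u^2/2| \le |u|^3/6$ and $\E[X_i] = 0$ one gets $|\phi_i(t) - (1 - \sigma_i^2 t^2/2)| \le \rho_i |t|^3 /6$. Since (after normalisation) $L = \sum_i \rho_i \ge \sum_i \sigma_i^3$ and $\sum_i \sigma_i^3 \le \max_i \sigma_i \le 1$, we have $L \le 1$, so on the range $|t| \le 1/(4 L^{1/3})$ every $|\phi_i(t)|$ stays bounded away from $0$ and one may take logarithms: $\log \phi_i(t) = -\sigma_i^2 t^2/2 + \theta_i(t)$ with $|\theta_i(t)| \le c_1 \rho_i |t|^3$. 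Summing and using $\sum_i \sigma_i^2 = 1$ gives $\phi(t) = \psi(t)\, e^{R(t)}$ with $R(t) := \sum_i \theta_i(t)$ and $|R(t)| \le c_1 L |t|^3$, hence $|\phi(t) - \psi(t)| \le \psi(t)\, |R(t)|\, e^{|R(t)|} \le c_2 L\, t^2 \, e^{-t^2/4}$ on that range, since $|R(t)| \lesssim t^2$ there. Taking $T$ of order $1/L$ (which lies in the admissible range because $L \le 1$) makes the integral at most $c_2 L \int_{\R} t^2 e^{-t^2/4}\, dt = O(L)$ and the tail term $c_0/T = O(L)$, so $\sup_x |F(x) - \Phi(x)| = O(L)$, as claimed.

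The main obstacle is the pointwise control of $\phi(t)$ over the whole interval $|t| \le T \asymp 1/L$ rather than merely near the origin: one must verify that each $\phi_i$ neither approaches $0$ nor winds around it on this range --- this is precisely where the threshold $\asymp L^{-1/3}$ together with $L \le 1$ is used to justify pushing $T$ all the way out to $\asymp 1/L$ --- and that the factor $e^{|R(t)|}$ does not destroy the Gaussian decay, which it does not because $|R(t)| \le c_1 L|t|^3 \lesssim t^2$ on that range. Bookkeeping the constants so that the final $C$ is genuinely independent of $n$ is the only really delicate part and involves no new idea. An alternative I would mention is Stein's method: with $f_x$ solving $f_x'(w) - w f_x(w) = \mathbf{1}_{\{w \le x\}} - \Phi(x)$ one has $\sup_x |F(x) - \Phi(x)| = \sup_x |\E[f_x'(S) - S f_x(S)]|$; decomposing $S = S^{(i)} + X_i$ with $S^{(i)}$ independent of $X_i$, Taylor-expanding $f_x(S)$ about $S^{(i)}$, and using $\|f_x\|_\infty, \|f_x'\|_\infty = O(1)$ together with a truncation bound on the second derivative again yields an $O(\sum_i \rho_i)$ estimate with an absolute constant.
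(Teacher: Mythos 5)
The paper does not prove this statement at all: it is quoted as a classical result with a citation to Berry, and is used as a black box in Lemma~\ref{apply-BE}. So there is no ``paper proof'' to compare against; what can be assessed is whether your sketch of the classical argument is sound. Your outline (Esseen's smoothing inequality, Taylor expansion of the characteristic functions, choice of cutoff $T$) is indeed the standard route and is the right approach. Two steps, however, are asserted rather than proved, and one of them is the actual technical content of the theorem. First, the deduction ``$L \ge \sum_i \sigma_i^3$ and $\sum_i \sigma_i^3 \le 1$, hence $L \le 1$'' is a non sequitur --- a lower bound on $L$ cannot yield an upper bound. The correct (and easy) fix is: if $L \ge c$ for a suitable absolute constant then $CL \ge 1 \ge \sup_x|F(x)-\Phi(x)|$ and there is nothing to prove, so one may assume $L$ small. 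Second, and more seriously, your pointwise bound $|\phi(t)-\psi(t)| \le c_2 L t^2 e^{-t^2/4}$ is derived by taking logarithms of the $\phi_i$, which is only legitimate while every $|\phi_i(t)-1|<1$, i.e.\ on $|t| \lesssim L^{-1/3}$; but the proof needs the bound out to $|t| \asymp L^{-1}$, and on that larger range an individual factor can vanish (a single Rademacher summand with $\sigma_i = 1/2$ has $\phi_i(t)=\cos(t/2)$, which vanishes at $t=\pi < 1/L$). You flag this as ``the main obstacle'' but do not resolve it. The standard resolution avoids logarithms entirely: use $\bigl|\prod_i a_i - \prod_i b_i\bigr| \le \sum_i |a_i-b_i| \prod_{j\ne i}\max(|a_j|,|b_j|)$ together with $|\phi_i(t)| \le \exp(-\sigma_i^2 t^2/2 + \rho_i|t|^3/6)$, which gives $|\phi(t)| \le e^{-t^2/3}$ on $|t|\le 1/(2L)$ and closes the argument. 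Either supply that lemma or restrict to taking $T \asymp L^{-1/3}$ and accept a weaker exponent.

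Separately, your opening reduction --- from the bound as stated here, with $\frac{1}{\sigma}\max_i \frac{\rho_i}{\sigma_i}$, to the Lyapunov form $\sum_i\rho_i/\sigma^3$ --- does not go through in general, and you should not paper over it by appealing to ``the regime relevant to Lemma~\ref{apply-BE}.'' In fact the inequality as transcribed in the statement is not scale-invariant (replace each $X_i$ by $aX_i$: the left side is unchanged while the right side scales like $a$), so it cannot be literally correct for all inputs; the standard non-uniform Berry--Esseen variant has $\frac{1}{\sigma}\max_i\frac{\rho_i}{\sigma_i^{2}}$ on the right, and $S$ should be compared to $N(0,1)$ rather than $N(0,\sigma^2)$ given the normalization $S=\frac{1}{\sigma}\sum_i X_i$. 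A correct write-up should prove (and the paper should state) that version; the application in Lemma~\ref{apply-BE} goes through with it after redoing the third-moment bookkeeping there.
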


\end{document}